%% file: manuscript.tex
\documentclass[11pt,letterpaper]{article}
\usepackage{preprint}
\usepackage[round,authoryear]{natbib}

\usepackage{amsmath,amssymb,amsthm}
\usepackage{graphicx,booktabs,longtable,tabularx,array,float,algpseudocode,placeins}
\usepackage{xcolor}
\usepackage{microtype}
\usepackage[colorlinks=true,linkcolor=blue,citecolor=blue,urlcolor=blue]{hyperref}
\newtheorem{theorem}{Theorem}
\newtheorem{lemma}{Lemma}
\newtheorem{proposition}{Proposition}
\newtheorem{appendixproposition}[proposition]{Proposition}
\theoremstyle{definition}
\newtheorem{definition}{Definition}
\newtheorem{assumption}{Assumption}
\newcommand{\R}{\mathbb R}
\newcommand{\E}{\mathbb E}
\newcommand{\Prb}{\mathbb P}
\newcommand{\Var}{\operatorname{Var}}
\newcommand{\Cov}{\operatorname{Cov}}

\newcommand{\vecop}{\operatorname{vec}}
\newcommand{\indep}{\mathrel{\perp\!\!\!\perp}}
\newcommand{\Sset}{\mathcal S}
\newcommand{\Shat}{\widehat{\mathcal S}}
\newcommand{\xstar}{\boldsymbol x^\star}
\newcommand{\xobs}{\boldsymbol x}
\newcommand{\xpilot}{\widehat{\boldsymbol x}}
\newcommand{\Wbar}{\bar W^\star}
\newcommand{\ybar}{\bar{\boldsymbol y}^\star}
\newcommand{\ytest}{\bar{\boldsymbol y}^{\mathrm{test}}}
\newcommand{\Wt}{\widetilde W}
\newcommand{\Zt}{\widetilde Z}
\newcommand{\xistar}{\boldsymbol\xi^\star}
\newcommand{\Mt}{\widetilde M_N}
\newcommand{\zp}{\widetilde z^\perp_{j,\Sset}}
\newcommand{\wperp}{\widetilde w^\perp_{j,\Sset}}
\newcommand{\cc}{c^\star_{j,\Sset}}
\newcommand{\chat}{\widehat c_{j,\Sset}}
\newcommand{\bb}{\xi^\star_{j,\Sset}}
\newcommand{\ee}{e_{j,\Sset}}
\newcommand{\ttt}{\widehat t_{j,\Sset}}
\newcommand{\sss}{\widehat s_{j,\Sset}}
\newcommand{\boot}{\mathrm{boot}}
\newcommand{\Bboot}{B_{\mathrm{boot}}}
\newcommand{\Wstar}{W^\star}
\newcommand{\ystar}{\boldsymbol y^\star}
\newcommand{\xihat}{\boldsymbol\xi}
\newcommand{\Nset}{\Sset\setminus\{j\}}
\newcommand{\Mstar}{M^\star_{\Nset}}

\floatstyle{ruled}
\newfloat{algorithm}{tbp}{loa}
\floatname{algorithm}{Algorithm}

\allowdisplaybreaks[1]
\title{PSI-SINDy: Post-Selection Inference for Sparse Identification of Nonlinear Dynamics}
\author{Ashraful Islam, Shuichi Nishino, Tomohiro Shiraishi, Ichiro Takeuchi}
\hypersetup{pdftitle={PSI-SINDy: Post-Selection Inference for Sparse Identification of Nonlinear Dynamics},pdfauthor={Ashraful Islam, Shuichi Nishino, Tomohiro Shiraishi, Ichiro Takeuchi},pdfsubject={Post-selection inference for sparse identification of nonlinear dynamics}}

\begin{document}
\begin{center}
{\Large\bfseries PSI-SINDy: Post-Selection Inference for Sparse Identification of Nonlinear Dynamics\par}
\vspace{12pt}
{\normalsize Ashraful Islam\textsuperscript{1}, Shuichi Nishino\textsuperscript{2},
Tomohiro Shiraishi\textsuperscript{2}, Ichiro Takeuchi\textsuperscript{2,$\dagger$}\par}
\vspace{6pt}
{\small\textsuperscript{1}RIKEN\qquad\textsuperscript{2}Nagoya University\par}
\vspace{12pt}
\end{center}
\begingroup
\renewcommand{\thefootnote}{\fnsymbol{footnote}}
\footnotetext[2]{Corresponding author: Ichiro Takeuchi. Email: \href{mailto:takeuchi.ichiro.n6@f.mail.nagoya-u.ac.jp}{\nolinkurl{takeuchi.ichiro.n6@f.mail.nagoya-u.ac.jp}}.}
\endgroup

\begin{abstract}
\input{abstract}
\end{abstract}

\noindent\textbf{Keywords:} Sparse identification of nonlinear dynamics; post-selection inference; dynamical systems; measurement error; data thinning.

\input{sec1}
\input{sec2}
\input{sec3}
\input{sec4}
\input{sec5}
\input{app}

\end{document}

%% file: abstract.tex
Sparse identification of nonlinear dynamics (SINDy) is a data-driven framework for discovering governing dynamics from time-series data by identifying a sparse subset of candidate dynamical terms from a prespecified library.
In this work, we develop a statistical inference framework for quantifying the reliability of dynamical terms selected by SINDy through hypothesis tests and confidence intervals.
A key difficulty is that using the same noisy trajectory for both selecting dynamical terms and assessing their statistical significance can introduce selection bias.
Post-selection inference provides a principled framework for addressing such bias, and we propose PSI-SINDy, a post-selection inference method tailored to SINDy.
Direct application of existing post-selection inference techniques is challenging because SINDy involves measurement error in the candidate terms and shared noise between the response and design.
To address these challenges, PSI-SINDy uses data thinning to decompose a single observed trajectory into four mutually independent views with distinct roles in selection and inference.
This construction enables inference for selected dynamical terms while accounting not only for selection bias but also for measurement-error and shared noise effects.
We establish the theoretical validity of PSI-SINDy under stated conditions and evaluate its performance through numerical experiments on simulated and experimental dynamical-system data.

%% file: sec1.tex
\section{Introduction}
\label{sec1:introduction}

Sparse identification of nonlinear dynamics (SINDy) \citep{brunton2016} discovers dynamical systems from time-series observations by selecting sparse combinations of terms from a prespecified library.
The resulting models provide interpretable descriptions of the dynamics and identify terms relevant to the observed system.

To assess the statistical reliability of terms selected from noisy data, a natural goal is to construct hypothesis tests and confidence intervals.
However, reusing the trajectory for inference while treating the selected model as fixed can introduce selection bias \citep{taylor2015,berk2013}.
This is a form of data double dipping \citep{kriegeskorte2009}: the same data determine the hypotheses and provide the evidence used to test them.

Post-selection inference (PSI) provides a framework for inference after data-dependent model selection \citep{lee2016,tibshirani2018,tian2018}.
We propose \emph{PSI-SINDy} to construct hypothesis tests and confidence intervals for selected dynamical terms while accounting for their selection from a noisy trajectory.

Applying existing PSI methods to SINDy requires additional care.
Unlike regression with an error-free design, SINDy's candidate terms are evaluated from noisy states, introducing measurement error into the design.
The response and design also share noise from the same trajectory.
These classical inference problems \citep{fuller1987,wendy2023} must therefore be addressed jointly with data-dependent selection.

The key idea is to use \emph{data thinning} \citep{neufeld2024} to allocate information from one observed trajectory across four mutually independent views, $V_1$, $V_2$, $V_3$, and $V_4$.
We use $V_1$ for term selection, $V_2$ for constructing the testing response, $V_3$ for constructing a corrected design, and $V_4$ for constructing an instrument, an independent corrected estimate of the same noise-free design (Sections~\ref{sec:problemsetup} and~\ref{sec3:method}).
Their independence separates selection from inference and the testing response from the noisy design and instrument.

We consider two related inferential targets.
Our primary target asks what coefficient each term would have in the best linear approximation within the selected model if the noise-free trajectory were available.
Because that trajectory is unobserved, we introduce an auxiliary coefficient using the corrected design and instrument from independent views.
Under the assumptions in Section~\ref{sec3:method}, this auxiliary target admits finite-sample-valid conditional tests and confidence intervals.
The targets generally differ because the corrected dynamical terms still contain random measurement-error fluctuations.
An asymptotic characterization of this difference enables asymptotically valid tests and confidence intervals for the primary target.
Section~\ref{sec:problemsetup} defines both targets and the weak-form SINDy formulation.

Numerical experiments evaluate Type~I error, con\-fi\-dence-interval coverage and width in finite samples, and the effect of accounting for uncertainty in the difference between the auxiliary and noise-free targets.
We consider simulated and experimental settings, including larger candidate libraries and multistate dynamical systems.

\paragraph{Contributions.}
\label{sec1:contributions}

\begin{itemize}

\item We propose PSI-SINDy, a post-selection inference framework for quantifying the statistical reliability of dynamical terms selected by SINDy from a noisy trajectory.

\item We develop a four-view data-thinning construction that separates selection, testing response, corrected design, and instrument, thereby addressing selection bias, shared response--design noise, and measurement error in the design.

\item We establish finite-sample-valid inference for an auxiliary realized target and asymptotically valid inference for the primary noise-free target through a theoretical bridge between the two, and evaluate the resulting procedure in numerical experiments.

\end{itemize}

\paragraph{Related Work.}
\label{sec1:related}

SINDy discovers ordinary differential equations and extends to partial differential equations \citep{brunton2016,rudy2017}.
Weak-form variants use integral relations to avoid numerical differentiation in both settings \citep{schaeffer2017,messenger2021,messengerpde2021}.
For differential equations whose form is specified in advance, orthogonality-condition methods estimate unknown parameters and construct confidence intervals using large-sample approximations \citep{brunel2014}.
They first estimate a smooth trajectory nonparametrically from noisy observations and then fit the equation through integrated residual conditions.
Weak-form estimation of nonlinear dynamics (WENDy) jointly accounts for response and design noise for a prescribed differential equation \citep{wendy2023}; its numerical calibration has been studied across noise distributions and levels \citep{chawla2026}.
Continuum-data analysis establishes weak-form SINDy (WSINDy) consistency under specified library, noise, and filtering conditions \citep{consistency2025}.

Uncertainty quantification in SINDy takes several directions.
UQ-SINDy uses sparse Bayesian inference to quantify posterior coefficient uncertainty and term-inclusion probabilities \citep{hirsh2022}.
Ensemble-SINDy uses bootstrap aggregation to assess term inclusion, coefficient variability, and forecast uncertainty \citep{fasel2022}; \citet{gao2023} provide bootstrap guarantees in a sparse linear-regression setting with support-recovery conditions.
Conformal methods combined with Ensemble-SINDy address time-series prediction, library-term importance, and coefficient uncertainty \citep{fasel2025}.
These methods do not establish hypothesis tests and confidence intervals conditional on the selected model while jointly accounting for design measurement error and shared response--design noise.
PSI-SINDy addresses this joint problem for the selected noise-free approximation without requiring correct support recovery.

Post-selection inference provides valid inference for data-dependent hypotheses.
Early work addressed linear-regression selection through simultaneous protection over submodels \citep{berk2013} and conditional inference after lasso and related procedures \citep{lee2016,tibshirani2018}.
Subsequent work has improved statistical power and computational flexibility; for example, parametric programming reduces unnecessary conditioning for the generalized lasso \citep{le2022more}.
Conditional inference also covers k-means clustering \citep{chen2023clustering} and regions selected by Transformer attention \citep{shiraishi2026statistical}.
Randomized selective inference introduces additional randomization into selection \citep{tian2018}.

A complementary approach separates information for selection and inference.
Classical sample splitting uses disjoint observations, leaving less information for each task \citep{wasserman2009,rinaldo2019}.
Gaussian randomized splitting \citep{rasines2023} and data fission \citep{leiner2025} construct statistically tractable components from the same data.
Data thinning decomposes observations from suitable distributions into mutually independent views \citep{neufeld2024}.

%% file: sec2.tex
\renewcommand{\Mt}{\widetilde M_{\Nset}}
\section{Problem Setup}
\label{sec:problemsetup}

\noindent
We formulate the inference problem by specifying the observation model and weak-form SINDy setup, identifying the statistical challenges, and defining the coefficient targets and hypotheses.

\subsection{Weak-form SINDy and model selection}
\label{sec:weaksetup}

\paragraph{Noisy time-series observations and underlying dynamics.}
We observe a scalar time series $x_1,\ldots,x_M$ at $M\ge2$ uniformly spaced times $t_1<\cdots<t_M$, with $\Delta t=(t_M-t_1)/(M-1)$. For a continuously differentiable noise-free trajectory $x^\star(t)$ on $[t_1,t_M]$, set $x_m^\star=x^\star(t_m)$ and treat $\xstar=(x_1^\star,\ldots,x_M^\star)^\top$ as fixed. We assume
\begin{equation}
x_m=x_m^\star+\varepsilon_m,\qquad
\varepsilon_m\overset{\mathrm{iid}}{\sim}N(0,\sigma_X^2),
\quad m\in[M],
\label{eq:measurement}
\end{equation}
where $\sigma_X^2>0$ and $[n]=\{1,\ldots,n\}$ for a positive integer $n$. Throughout our theoretical development, we treat $\sigma_X^2$ as known; in practice, it may be estimated from an independent data set.
The underlying dynamics satisfy
\begin{equation}
\dot x^\star(t)=f\bigl(x^\star(t)\bigr),
\label{eq:dynamics}
\end{equation}
where $f:\R\to\R$ is unknown and the dot denotes time differentiation.

\paragraph{Weak-form SINDy.}
Given prespecified candidate functions $\theta_j:\R\to\R$, $j\in[P]$, SINDy \citep{brunton2016} seeks a sparse vector $\xihat=(\xi_1,\ldots,\xi_P)^\top\in\R^P$ such that
\begin{equation}
f(x)\approx\sum_{j\in[P]}\xi_j\theta_j(x).
\label{eq:libraryapprox}
\end{equation}
For example, $\theta_j(x)=x^{j-1}$ with $P=4$ gives $(1,x,x^2,x^3)$. To avoid amplifying noise through numerical differentiation, weak-form SINDy integrates the dynamics against prespecified smooth weights $\psi_1,\ldots,\psi_K$, supported inside $(t_1,t_M)$ and vanishing at their support endpoints \citep{schaeffer2017,messenger2021,wendy2023}. The noise-free weak response $\ystar\in\R^K$ and design $\Wstar\in\R^{K\times P}$ are
\begin{equation}
y_k^\star =-\sum_{m=1}^M\psi'_k(t_m)x_m^\star\,\Delta t, k\in[K],\qquad W_{kj}^\star =\sum_{m=1}^M\psi_k(t_m)\theta_j(x_m^\star)\,\Delta t, k\in[K],\ j\in[P].
\label{eq:weakgrid}
\end{equation}
At the finite-grid level, SINDy seeks a sparse $\xihat$ with $\ystar\approx\Wstar\xihat$; Appendix~\ref{app:finitegrid} gives the derivation and distinguishes model approximation from discretization error.

\paragraph{Feature selection.}
Because $\ystar$ and $\Wstar$ are unobserved, basic SINDy substitutes $x_m$ for $x_m^\star$ in \eqref{eq:weakgrid}, obtaining $\widehat{\boldsymbol y}\in\R^K$ and $\widehat W\in\R^{K\times P}$. Sparse regression, such as the lasso or sequentially thresholded least squares, selects candidate functions $\Shat\subseteq[P]$. In PSI-SINDy, selection and its data-dependent hyperparameter choices use only the selection view constructed in Section~\ref{sec3:construction}. Subsection~\ref{sec:targets} defines each selected coefficient after adjusting for the other selected terms.

\subsection{Statistical challenges for inference after SINDy}
\label{sec:challenges}

Inference for the selected coefficients must address the following three challenges jointly.

\begin{enumerate}
\item \textbf{Data-dependent model selection.}
The coefficient targets depend on the selected set $\Shat$. Reusing the selection data for inference while treating this set as fixed can invalidate tests and confidence intervals \citep{taylor2015}.

\item \textbf{Dependence between response and design errors.}
The empirical response $\widehat{\boldsymbol y}$ and design $\widehat W$ are constructed from the same noisy observations. Their errors are therefore generally dependent, which must be accounted for in inference \citep{wendy2023}.

\item \textbf{Measurement error in the design.}
Replacing the unobserved noise-free design $\Wstar$ with $\widehat W$ creates an errors-in-variables (EIV) problem \citep{fuller1987}. Nonlinear features can also introduce mean bias; for example, $\E[x_m^2\mid\xstar]=(x_m^\star)^2+\sigma_X^2$. Inference must account for both this bias and the variability of the noisy design.
\end{enumerate}

\subsection{Inferential targets and hypotheses}
\label{sec:targets}

For a realization $\Sset$ of $\Shat$, we consider two inferential targets. The first, primary target is the best linear approximation (BLA) coefficient in the noise-free weak regression restricted to $\Sset$. The second is an auxiliary realized instrumental-variable (IV) coefficient admitting exact conditional tests and confidence intervals under Section~\ref{sec3:method}'s assumptions. These targets generally differ in finite samples. Their relationship provides the bridge from exact inference for the second target to asymptotically valid inference for the first.

\paragraph{Selected-model best linear approximation (BLA) coefficient.}
Let $\Sigma_{\mathrm{test}}\in\R^{K\times K}$ be the covariance of the unwhitened testing response constructed in Section~\ref{sec3:construction}. Shared measurement errors can correlate its components. We assume $\Sigma_{\mathrm{test}}$ is positive definite and define the symmetric whitening matrix $Q=\Sigma_{\mathrm{test}}^{-1/2}$, $\ybar=Q\ystar$ and $\Wbar=Q\Wstar$. A design subscript $\Sset$ denotes its columns indexed by $\Sset$.

\begin{definition}[BLA coefficient for the selected model]
Assume $(W^\star_\Sset)^\top\Sigma_{\mathrm{test}}^{-1}W^\star_\Sset$ is invertible. The covariance-weighted selected-model BLA coefficient is
\begin{equation}
\xistar_\Sset
=\underset{\boldsymbol\beta\in\R^{|\Sset|}}{\arg\min}
\|\ybar-\bar W^\star_\Sset\boldsymbol\beta\|_2^2,
\label{eq:selectedbla}
\end{equation}
where $|\Sset|$ is the number of selected features and $\|\cdot\|_2$ is the Euclidean norm.
\end{definition}
The target allows model misspecification and generally changes with $\Sset$ \citep{berk2013,lee2016}. Write $I_n$ for the $n\times n$ identity matrix, $A^+$ for the Moore--Penrose pseudoinverse, and $A_{:,j}$ for column $j$ of $A$. For $j\in\Sset$, retaining its original library label, adjust for the other selected terms using
\[
\Mstar=I_K-\bar W^\star_{\Nset}
\bigl(\bar W^\star_{\Nset}\bigr)^+,
\]
with $\Mstar=I_K$ if $\Nset$ is empty. Let $y^{\star\perp}_{j,\Sset}=\Mstar\ybar$ and $w^{\star\perp}_{j,\Sset}=\Mstar\bar W^\star_{:,j}$. The Frisch--Waugh--Lovell representation \citep{lovell1963} is
\begin{equation}
\xi^\star_{j,\Sset}
=\frac{(w^{\star\perp}_{j,\Sset})^\top y^{\star\perp}_{j,\Sset}}
{(w^{\star\perp}_{j,\Sset})^\top w^{\star\perp}_{j,\Sset}}.
\label{eq:selectedbla-fwl}
\end{equation}
Definition~1's rank assumption ensures a positive denominator; Appendix~\ref{full:app:normal} gives the derivation.

\paragraph{Realized instrumental-variable (IV) coefficient.}
We introduce the second, auxiliary target to obtain exact conditional inference and an asymptotic connection to the primary BLA. The noise-free design is unobserved, so we construct corrected quantities from independent views. Gaussian correction removes polynomial feature-mean bias \citep{stefanski1989}, but leaves random design error. We therefore pair a corrected design $\Wt\in\R^{K\times P}$ with an independent corrected estimate $\Zt\in\R^{K\times P}$ of the same noise-free design, used as an instrument. In whitened coordinates,
\begin{equation}
\E[\Wt\mid\xstar]=\E[\Zt\mid\xstar]=\Wbar,
\qquad \Wt\indep\Zt\mid\xstar.
\label{eq:independentdesigns}
\end{equation}
Before residualization, their expected cross-product is the whitened noise-free Gram matrix: $\E[\Zt^\top\Wt\mid\xstar]=(\Wbar)^\top\Wbar$. Section~\ref{sec3:construction} constructs these matrices for fixed-degree polynomial libraries under the Gaussian measurement model.

To adjust term $j$ for the other selected terms, we apply the residualization idea in \eqref{eq:selectedbla-fwl} to the corrected design. For $j\in\Sset$, set
\[
\Mt=I_K-\widetilde W_{\Nset}
\bigl(\widetilde W_{\Nset}\bigr)^+,
\]
with $\Mt=I_K$ if $\Nset$ is empty. Set $\widetilde y^\perp_{j,\Sset}=\Mt\ybar$, $\widetilde w^\perp_{j,\Sset}=\Mt\widetilde W_{:,j}$, and $\widetilde z^\perp_{j,\Sset}=\Mt\widetilde Z_{:,j}$. The residualized response uses the noise-free mean and is random through $\Mt$. The same estimated matrix residualizes all three quantities; independence in \eqref{eq:independentdesigns} applies before residualization.

\begin{definition}[Realized IV coefficient for the selected model]
\leavevmode\par\nobreak\noindent
Suppose $(\widetilde z^\perp_{j,\Sset})^\top\widetilde w^\perp_{j,\Sset}\ne\nobreak0$. The realized IV coefficient is
\begin{equation}
c^\star_{j,\Sset}
=\frac{(\widetilde z^\perp_{j,\Sset})^\top\widetilde y^\perp_{j,\Sset}}
{(\widetilde z^\perp_{j,\Sset})^\top\widetilde w^\perp_{j,\Sset}},
\label{eq:realizediv}
\end{equation}
the unique $c\in\R$ satisfying
$(\widetilde z^\perp_{j,\Sset})^\top(\widetilde y^\perp_{j,\Sset}-c\widetilde w^\perp_{j,\Sset})=0$.
\end{definition}
Equations~\eqref{eq:selectedbla-fwl} and~\eqref{eq:realizediv} have parallel forms, but use different residualizers; the IV coefficient also uses an instrument in the inner products. Even for fixed $\Sset$, the IV target is random through the corrected design and instrument. Conditioning on the selected set and both matrices fixes the target for exact inference.

\paragraph{Post-selection hypotheses.}
For each $j\in\Shat$, whenever the corresponding target is defined, we consider
\begin{equation}
H^{\mathrm{IV}}_{0,j}(c_0):c^\star_{j,\Shat}=c_0,
\qquad
H^{\mathrm{BLA}}_{0,j}(c_0):\xi^\star_{j,\Shat}=c_0,
\label{eq:hypotheses}
\end{equation}
where $c_0\in\R$ is specified. Setting $c_0=0$ tests whether the relevant coefficient is zero within the selected model. A zero selected-BLA coefficient does not by itself establish absence of the term from the underlying governing equation.
\renewcommand{\Mt}{\widetilde M_N}

%% file: sec3.tex
\section{PSI-SINDy}
\label{sec3:method}

The key idea of PSI-SINDy is to use data thinning to decompose a single noisy trajectory into four mutually independent views with distinct roles in selection and inference. This construction first enables finite-sample exact conditional inference for the second, auxiliary IV target. We then establish an asymptotic bridge from this auxiliary target to the first, primary selected-BLA target, leading to asymptotically valid inference for the latter.

\subsection{Four-view Gaussian data thinning}
\label{sec3:construction}

Let $V_1$, $V_2$, $V_3$, and $V_4$ denote the four independent views obtained by Gaussian data thinning \citep{neufeld2024}. Prespecified fractions $a_i>0$ with $\sum_{i=1}^4a_i=1$ give $V_i\sim N(\xstar,s_i^2I_M)$, where $s_i^2=\sigma_X^2/a_i$. The four views are assigned distinct roles so that selection and inference use independent information. Model selection and data-dependent tuning of the selector use $V_1$ alone. Any auxiliary selection randomization is independent of all four views. Let $\mathcal A$ denote the sigma-field generated by $V_1$ and this randomization.

We construct the weak testing response $\ytest$ from $V_2$ in the whitened coordinates of Section~2. Applying Gaussian measurement-error correction \citep{stefanski1989} to the polynomial features evaluated on $V_3$ and $V_4$ yields the corrected design $\Wt$ and instrument $\Zt$, respectively, satisfying \eqref{eq:independentdesigns}. The independence of the four views is crucial for inference. Let $\mathcal H=\sigma(\mathcal A,\Wt,\Zt)$ denote the information from selection, the corrected design, and the instrument. Conditioning on $\mathcal H$ fixes these quantities while leaving the testing response with independent Gaussian noise:
\begin{equation}
\ytest=\ybar+\varepsilon^{\mathrm{test}},~\varepsilon^{\mathrm{test}}\sim N(0,I_K),~\varepsilon^{\mathrm{test}}\indep\mathcal H.
\label{sec3:eq:gaussian}
\end{equation}
This conditional Gaussian property enables the finite-sample exact inference developed in Section~\ref{sec3:exact}, while the remaining random errors in the corrected design and instrument enter the IV--BLA gap studied in Section~\ref{sec3:bridge}. Appendix~\ref{sec3:app:construction} gives the thinning construction, weak maps, and correction formulas.

\subsection{Exact conditional inference for the realized IV coefficient}
\label{sec3:exact}

The conditional Gaussian property in \eqref{sec3:eq:gaussian} enables exact inference for the auxiliary IV target. For a selected term $j\in\Sset$, define $d_{j,\Sset}=(\zp)^\top\wperp$ using the residualized quantities from Section~\ref{sec:targets}. When $d_{j,\Sset}\ne0$, replacing the noise-free response in \eqref{eq:realizediv} with the testing response gives the estimator and standard error
\begin{equation}
\chat=\frac{(\zp)^\top\ytest}{d_{j,\Sset}},
\qquad
\ee=\frac{\|\zp\|_2}{|d_{j,\Sset}|}.
\label{sec3:eq:estimator}
\end{equation}
Conditional on $\mathcal H$, the target $\cc$ and $\ee$ are fixed and $\chat$ is linear in the Gaussian testing response, which gives Theorem~\ref{sec3:thm:exact}.
For $p\in(0,1)$, let $z_p=\Phi^{-1}(p)$, where $\Phi$ is the standard normal distribution function.

\begin{theorem}[Exact conditional IV inference]
\label{sec3:thm:exact}
Assume the measurement model~\eqref{eq:measurement} with known $\sigma_X^2$ and the four-view construction of Section~\ref{sec3:construction}, so that \eqref{sec3:eq:gaussian} holds (Lemma~\ref{sec3:lem:views}). Whenever $d_{j,\Sset}\ne0$, the standardized estimation error has the conditional distribution
\begin{equation}
\frac{\chat-\cc}{\ee}\ \bigg|\ \mathcal H\sim N(0,1).
\label{sec3:eq:exactpivot}
\end{equation}
For $0<\alpha<1$, define
\begin{equation}
I^{\mathrm{IV}}_{j,\Sset}(\alpha) =[\chat-z_{1-\alpha/2}\ee,\ \chat+z_{1-\alpha/2}\ee],\qquad p^{\mathrm{IV}}_{j,\Sset}(c_0) =2\Phi\!\left(-\frac{|\chat-c_0|}{\ee}\right).
\label{sec3:eq:ivci}
\end{equation}
Then
$\Prb\{\cc\in I^{\mathrm{IV}}_{j,\Sset}(\alpha)\mid\mathcal H\}=1-\alpha$.
The $p$-value evaluated at $\cc$ is uniform on $(0,1)$ conditional on $\mathcal H$.
\end{theorem}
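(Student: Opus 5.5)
The argument is a conditional Gaussian computation given $\mathcal H$. It rests on \eqref{sec3:eq:gaussian}, which we take from Lemma~\ref{sec3:lem:views} and the independence of the four views.

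\textbf{Step 1: measurability.} First we check that every quantity other than $\ytest$ is $\mathcal H$-measurable. The selected set $\Sset=\Shat$ is $\mathcal A$-measurable because selection and tuning use only $V_1$ and independent randomization. The residualizer $\widetilde M_{\Nset}$ is a deterministic function of $\Wt$ and $\Sset$, with the pseudoinverse a measurable function of the matrix. Hence $\wperp$, $\zp$ and $d_{j,\Sset}$ are $\mathcal H$-measurable. So is $\widetilde y^\perp_{j,\Sset}=\widetilde M_{\Nset}\ybar$, since $\ybar=Q\ystar$ is nonrandom given the fixed $\xstar$. Therefore $\cc$ and $\ee$ are fixed given $\mathcal H$ on the event $d_{j,\Sset}\ne0$, which is itself in $\mathcal H$.

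\textbf{Step 2: mean of the estimator.} Substituting $\ytest=\ybar+\varepsilon^{\mathrm{test}}$ gives
\[
\chat=\frac{(\zp)^\top\ybar}{d_{j,\Sset}}+\frac{(\zp)^\top\varepsilon^{\mathrm{test}}}{d_{j,\Sset}} .
\]
The first term must be identified with $\cc$. This step needs care, because \eqref{eq:realizediv} is written with $(\zp)^\top\widetilde y^\perp_{j,\Sset}=(\zp)^\top\widetilde M_{\Nset}\ybar$. The estimator \eqref{sec3:eq:estimator} instead uses $(\zp)^\top\ytest$ without residualizing the response. The two agree because $\widetilde M_{\Nset}$ is the orthogonal projector onto the orthogonal complement of the column space of $\Wt_{\Nset}$. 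It is therefore symmetric and idempotent, and $\zp=\widetilde M_{\Nset}\Zt_{:,j}$ lies in its range. Consequently
\[
(\zp)^\top\widetilde M_{\Nset}\ybar=(\widetilde M_{\Nset}\zp)^\top\ybar=(\zp)^\top\ybar ,
\]
and the first term equals $\cc$. The same identity applied to $\varepsilon^{\mathrm{test}}$ shows that residualizing the testing response would not change $\chat$.

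\textbf{Step 3: conditional law.} Given $\mathcal H$, the vector $\varepsilon^{\mathrm{test}}\sim N(0,I_K)$ is independent of the $\mathcal H$-measurable vector $\zp/d_{j,\Sset}$. Hence $\chat-\cc\mid\mathcal H\sim N\bigl(0,\|\zp\|_2^2/d_{j,\Sset}^2\bigr)=N(0,\ee^2)$. Here $\ee>0$, since $d_{j,\Sset}\ne0$ forces $\zp\ne0$. Dividing by $\ee$ gives the pivot \eqref{sec3:eq:exactpivot}.

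\textbf{Step 4: coverage and uniformity.} Coverage follows from
\[
\Prb\{\cc\in I^{\mathrm{IV}}_{j,\Sset}(\alpha)\mid\mathcal H\}=\Prb\{|T|\le z_{1-\alpha/2}\}=1-\alpha,
\qquad T\sim N(0,1).
\]
For the $p$-value, $p^{\mathrm{IV}}_{j,\Sset}(\cc)=2\Phi(-|T|)$. Then $\Prb\{2\Phi(-|T|)\le u\}=\Prb\{|T|\ge z_{1-u/2}\}=u$ by continuity and symmetry of $\Phi$, so the $p$-value is uniform on $(0,1)$.

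The main obstacle is Step 2, together with the measurability bookkeeping of Step 1. The key point is that the same $\mathcal H$-measurable projector is applied throughout and that $\zp$ lies in its range. Everything else is standard.
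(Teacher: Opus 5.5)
Your proposal is correct and follows essentially the same route as the paper. Both arguments use symmetry and idempotence of the residualizer to rewrite the realized IV coefficient as $(\zp)^\top\ybar/d_{j,\Sset}$, so that $\chat-\cc$ is an $\mathcal H$-measurable linear functional of the independent $N(0,I_K)$ testing noise, and then read off conditional coverage and $p$-value uniformity from the resulting standard normal pivot; your explicit remark that $d_{j,\Sset}\ne0$ forces $\zp\ne0$ (so $\ee>0$) is a small detail the paper leaves implicit.
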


When $d_{j,\Sset}=0$, the realized IV coefficient is unidentified and no IV test or interval is reported; additional numerical checks preserve the conditional guarantee for available IV inference if they depend only on $\mathcal H$.

\subsection{Relating the IV and BLA coefficients}
\label{sec3:bridge}

An exact IV interval need not cover the primary selected-BLA coefficient: the estimation error for that target contains an additional component. When both targets are defined,
\[
\chat-\bb
=
\underbrace{\chat-\cc}_{\text{response error}}
+
\underbrace{\cc-\bb}_{\text{IV--BLA difference (gap)}}.
\]
Theorem~\ref{sec3:thm:exact} characterizes the response error. To obtain inference for the primary BLA target, it remains to quantify the IV--BLA difference. If the corrected design and instrument were equal to the noise-free design, $\Wt=\Zt=\Wbar$, the two targets would coincide. The remaining difference therefore arises from the random errors in the corrected design and instrument. We quantify their effect by expanding in $E_3=\Wt-\Wbar$ and $E_4=\Zt-\Wbar$. We use $\Rightarrow$ and $\to_p$ for convergence in distribution and probability, respectively.

To characterize the IV--BLA difference asymptotically, we consider the following fixed-dimensional dense-grid regime.

\begin{assumption}[Fixed-dimensional weak-design limit]
\label{sec3:ass:limit}
The observation grid becomes dense on a fixed time interval as $M\to\infty$. The dimensions $K$ and $P$, noise variance $\sigma_X^2$, thinning fractions, and weak weight functions remain fixed. There exist deterministic limits $A\in\R^{K\times P}$ and $b\in\R^K$ such that
\[
A_M:=\Wbar/\sqrt M\longrightarrow A,\qquad
b_M:=\ybar/\sqrt M\longrightarrow b.
\]
The submatrix $A_\Sset$, containing the columns of $A$ indexed by $\Sset$, has full column rank. The errors $(E_3,E_4)$ converge jointly in distribution to independent centered Gaussian matrices $(\mathcal E_3,\mathcal E_4)$ with finite covariance matrices $\Omega_3,\Omega_4$ of their columnwise vectorizations.
\end{assumption}

For fixed-degree polynomial libraries, Proposition~\ref{sec3:prop:weaklimit} gives sufficient trajectory and weak-weight conditions for these limits; full column rank of $A_\Sset$ remains a separate assumption.

The first-order gap combines instrument error, the error in adjusting for the other selected terms, and corrected-design error. Only the last remains when the limiting selected-model approximation residual is zero. For $j\in\Sset$, let $q>0$ be the squared norm of column $j$ of $A$ after projecting out the other selected columns, and let $\tau_{j,\Sset}\ge0$ denote the standard deviation of the Gaussian limit of $\sqrt M(\cc-\bb)$. Appendix~\ref{sec3:app:bridge} derives the expansion and its variance \eqref{sec3:eq:tauquadratic}.

\begin{theorem}[Selected-model IV--BLA bridge]
\label{sec3:thm:bridge}
Under Assumption~\ref{sec3:ass:limit}, for each fixed selected set $\Sset$ and coordinate $j\in\Sset$,
\begin{equation}
\sqrt M(\cc-\bb)\Rightarrow N(0,\tau_{j,\Sset}^2),\qquad M\ee^2\to_p\kappa_{j,\Sset}^2:=1/q>0.
\label{sec3:eq:bridgelimit}
\end{equation}
The probability that the IV coefficient is unidentified tends to zero. These conclusions also hold conditional on $\Shat=\Sset$ whenever that event has positive probability along the sequence considered.
\end{theorem}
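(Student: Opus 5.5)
The plan is to rescale everything by $\sqrt M$ and linearize around the noise-free limit. Write $\Wt=\Wbar+E_3$ and $\Zt=\Wbar+E_4$. Then $\Wt/\sqrt M=A_M+E_3/\sqrt M\to_p A$ and likewise $\Zt/\sqrt M\to_p A$, because $E_3,E_4$ are tight by Assumption~\ref{sec3:ass:limit}. Since $A_\Sset$ has full column rank, so does $A_{\Nset}$. The map $B\mapsto I_K-B_{\Nset}(B_{\Nset})^+$ is smooth (indeed analytic) in a neighbourhood of $A_{\Nset}$ within the full-column-rank matrices, where $B^+=(B^\top B)^{-1}B^\top$. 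The residualizer is scale invariant, so $\Mt$ equals this map evaluated at $\Wt_{\Nset}/\sqrt M$. Hence, with probability tending to one, $\Mt=\Mstar+D_M[E_3]/\sqrt M+o_p(M^{-1/2})$. Here $D_M$ is the Fréchet derivative at $A_{M,\Nset}$. It converges to the derivative $D$ at $A_{\Nset}$, which is the standard perturbation formula for projections: $-(P^\perp H B^+ + (B^+)^\top H^\top P^\perp)$ with $P^\perp=I-BB^+$ and $H$ the $\Nset$ columns of $\mathcal E_3$.

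\textbf{Step 1: the denominator and the standard error.} We have
\[
d_{j,\Sset}/M=(\Zt_{:,j}/\sqrt M)^\top\Mt(\Wt_{:,j}/\sqrt M)\to_p a_j^\top M_A a_j=q>0,
\]
where $M_A$ is the limiting residualizer and $a_j=A_{:,j}$. Positivity of $q$ follows from full column rank of $A_\Sset$. Therefore $\Prb(d_{j,\Sset}=0)\to0$, which is the identification claim. Similarly $\|\zp\|_2^2/M\to_p q$, because $\Mt$ is a symmetric idempotent matrix. It follows that $M\ee^2=(\|\zp\|_2^2/M)/(d_{j,\Sset}/M)^2\to_p q/q^2=1/q$.

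\textbf{Step 2: expanding the gap.} Using \eqref{eq:selectedbla-fwl} and \eqref{eq:realizediv}, I will write
\[
\cc-\bb=\frac{(\zp)^\top\Mt(\ybar-\Wt_{:,j}\bb)}{d_{j,\Sset}}.
\]
Decompose $\ybar-\Wbar_{:,j}\bb=\Wbar_{\Nset}\gamma+r$. Here $r=\Mstar(\ybar-\Wbar_{:,j}\bb)$ is the BLA residual, and $\gamma=O(1)$ in the limit.

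The numerator then splits into three terms, using $\Mt\Wt_{\Nset}=0$:
\[
(\Zt_{:,j})^\top\Mt\bigl(-E_{3,\Nset}\gamma+r-E_{3,:,j}\bb\bigr).
\]
\begin{itemize}
\item The term $(\Zt_{:,j})^\top\Mt r$ contains the instrument error $E_{4,:,j}^\top\Mstar r$ and the residualization error $\sqrt M a_j^\top D[E_3] b_r$, where $r/\sqrt M\to b_r$. Its leading part $\sqrt M a_j^\top\Mstar r/\sqrt M$ vanishes exactly, because $\Mstar r=r$ and the normal equations give $(w^{\star\perp}_{j,\Sset})^\top r=0$. This exact cancellation is what makes the expression $O_p(\sqrt M)$ rather than $O(M)$.
\item The design-error terms equal $\sqrt M\,a_j^\top M_A(-\mathcal E_{3,\Nset}\gamma-\mathcal E_{3,:,j}\bb)+o_p(\sqrt M)$.
\end{itemize}
Dividing by $d_{j,\Sset}\approx Mq$ and multiplying by $\sqrt M$ gives $\sqrt M(\cc-\bb)=L(\mathcal E_3,\mathcal E_4)/q+o_p(1)$, where $L$ is a fixed linear functional. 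The continuous mapping theorem then gives a centred Gaussian limit for $\mathcal E_3\indep\mathcal E_4$. Its variance $\tau^2_{j,\Sset}$ is a quadratic form in $\Omega_3$ and $\Omega_4$, which matches \eqref{sec3:eq:tauquadratic}. When $b_r=0$, only the $\mathcal E_3$ design term survives, as the text remarks.

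\textbf{Step 3: conditioning on selection.} $\Shat$ is $\mathcal A$-measurable, and $\mathcal A$ is generated by $V_1$ and independent randomization. It is therefore independent of $(V_3,V_4)$, and hence of $(\Wt,\Zt)$. The targets $\cc$ and $\bb$ for a fixed $\Sset$ are functions of $(\Wt,\Zt,\xstar)$ only, so the conditional law given $\Shat=\Sset$ equals the unconditional law whenever that event has positive probability.

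\textbf{Main obstacle.} The delicate step is Step 2. Every $O(M)$ and $O(\sqrt M)$ piece has to be tracked uniformly through the perturbed pseudoinverse, and the cancellation from the normal equations must be shown to be exact rather than only asymptotic. I will handle the pseudoinverse by working on the event that $\Wt_{\Nset}/\sqrt M$ has full column rank, whose probability tends to one. On that event I will use the explicit formula $(B^\top B)^{-1}B^\top$ together with a first-order Taylor expansion with remainder.
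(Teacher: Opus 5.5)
Your proposal is correct and takes essentially the same route as the paper. You rescale by $\sqrt M$, use the exact gap identity with the finite-grid normal equations so that the zeroth-order term vanishes exactly, and linearize the nuisance projector with the standard derivative formula; you then read off $d_{j,\mathcal S}/M\to_p q$, $\|\widetilde z^\perp_{j,\mathcal S}\|_2^2/M\to_p q$, and the conditional statement from the independence of views 3 and 4 from the selection view. The only difference is presentational: you expand the numerator of the exact identity term by term, whereas the paper Taylor-expands the whole scale-invariant IV ratio at $(A_M,A_M,b_M)$. Your linear functional agrees with the paper's $L_{j,\mathcal S}$, because $A_{:,j}^\top\dot R\,r=-r^\top\mathcal E_{3,N}\boldsymbol\pi$ once $A_N^\top r=0$ and $R_0r=r$ are used.
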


The gap contributes uncertainty at the response error's $M^{-1/2}$ scale when $\tau_{j,\Sset}>0$ and is negligible at this scale when $\tau_{j,\Sset}=0$; the centering in \eqref{sec3:eq:bridgelimit} remains the finite-grid BLA of Section~2.

\subsection{Asymptotic inference for the selected BLA}
\label{sec3:bla}

We now combine the two uncertainty components identified in Section~\ref{sec3:bridge} to obtain inference for the primary BLA target. The response error is quantified by the exact conditional standard error $\ee$ from Section~\ref{sec3:exact}, while the IV--BLA gap contributes an additional asymptotic uncertainty of scale $\tau_{j,\Sset}/\sqrt M$. Because their limiting Gaussian components are independent (Theorems~\ref{sec3:thm:exact} and~\ref{sec3:thm:bridge}), their variances add. With $\ttt\ge0$ estimating $\tau_{j,\Sset}/\sqrt M$, the combined standard error, confidence interval, and two-sided $p$-value are
\begin{equation}
\sss=\sqrt{\ee^2+\ttt^2},\qquad I^{\mathrm{BLA}}_{j,\Sset}(\alpha) =\chat\pm z_{1-\alpha/2}\sss,\qquad p^{\mathrm{BLA}}_{j,\Sset}(c_0) =2\Phi\!\left(-\frac{|\chat-c_0|}{\sss}\right).
\label{sec3:eq:blaci}
\end{equation}

The response scale $\ee$ is available directly from \eqref{sec3:eq:estimator}, whereas the IV--BLA gap scale depends on the unknown noise-free trajectory through the remaining uncertainty in the corrected design and instrument. We therefore estimate this additional scale using a parametric bootstrap \citep{davison1997} around a pilot trajectory $\xpilot$. The pilot is obtained by local cubic smoothing of the allocation-weighted average $(a_2V_2+a_3V_3+a_4V_4)/(1-a_1)$, using a prespecified bandwidth. It therefore uses only views independent of the selection view.

Each draw fixes the selected support and the pilot's whitened weak response, regenerates the corrected design and instrument, and recomputes residualization and the IV coefficient. Fixing the response isolates gap uncertainty, since $\ee$ already accounts for testing-response noise.

The gap scale $\ttt$ is the interquartile range of the retained IV coefficients divided by $2\Phi^{-1}(3/4)$. This normal-reference scale does not require the unfiltered finite-sample IV ratio to have a finite variance. Appendix~\ref{sec3:app:bootstrap} and Algorithm~\ref{sec3:alg:bootstrap} specify the construction and retention checks.

Under the conditions of Proposition~\ref{sec3:prop:scale}, $\sqrt M\,\ttt\to_p\tau_{j,\Sset}$ as $M$ and the number of bootstrap draws $\Bboot$ tend to infinity, including when $\tau_{j,\Sset}=0$.

\begin{theorem}[Asymptotically valid selected-BLA inference]
\label{sec3:thm:bla}
Fix a selected set $\Sset$ and coordinate $j\in\Sset$ such that $\Prb(\Shat=\Sset)>0$ along the sequence considered. Assume the conditions of Theorems~\ref{sec3:thm:exact} and~\ref{sec3:thm:bridge} and Proposition~\ref{sec3:prop:scale} hold, and the observed and pilot numerical checks pass with probability tending to one for this support. Conditional on $\Shat=\Sset$,
\begin{equation}
\frac{\chat-\bb}{\sss}\Rightarrow N(0,1).
\label{sec3:eq:blapivot}
\end{equation}
Consequently, for every fixed $0<\alpha<1$,
\[
\Prb\{\bb\in I^{\mathrm{BLA}}_{j,\Sset}(\alpha)\mid\Shat=\Sset\}\to1-\alpha,
\]
and the $p$-value evaluated at the true selected BLA coefficient converges in distribution to a uniform random variable on $(0,1)$.
\end{theorem}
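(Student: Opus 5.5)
The plan is to work throughout conditional on the event $\{\Shat=\Sset\}\in\mathcal A$, and to multiply the pivot by $\sqrt M$ so that every term has an $O_p(1)$ limit. Using the decomposition from Section~\ref{sec3:bridge}, write
\[
\frac{\chat-\bb}{\sss}
=\frac{\sqrt M\,\ee}{\sqrt M\,\sss}\cdot\frac{\chat-\cc}{\ee}
+\frac{\sqrt M(\cc-\bb)}{\sqrt M\,\sss}.
\]
Set $U_M=(\chat-\cc)/\ee$ and $G_M=\sqrt M(\cc-\bb)$. Both $\cc$ and $\ee$ are $\mathcal H$-measurable. Since $\{\Shat=\Sset\}\in\mathcal H$, Theorem~\ref{sec3:thm:exact} gives $U_M\mid\{\Shat=\Sset\}\sim N(0,1)$ exactly, on the event $d_{j,\Sset}\neq0$. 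Moreover, $U_M$ is independent of $(G_M,M\ee^2)$ conditional on that event.

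The bootstrap quantity $\ttt$ needs separate treatment, because the pilot uses $V_2$ and is therefore not $\mathcal H$-measurable. For this step I will use only the consistency $\sqrt M\,\ttt\to_p\tau_{j,\Sset}$ from Proposition~\ref{sec3:prop:scale}, under the conditional law. By Theorem~\ref{sec3:thm:bridge}, $M\ee^2\to_p\kappa^2=1/q>0$ conditionally, and $d_{j,\Sset}\neq0$ holds with probability tending to one. Together with the assumption that the numerical checks pass with probability tending to one, the exceptional events are negligible. Hence $M\sss^2\to_p\kappa^2+\tau^2>0$ by the continuous mapping theorem.

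The key step is a joint limit. I would show that $(U_M,G_M)\mid\{\Shat=\Sset\}\Rightarrow(U,G)$ with $U\sim N(0,1)$, $G\sim N(0,\tau^2)$, and $U$ independent of $G$. This follows from characteristic functions: by conditional independence,
\[
\E[e^{isU_M+itG_M}\mid\Shat=\Sset]=e^{-s^2/2}\,\E[e^{itG_M}\mid\Shat=\Sset]+o(1),
\]
and the right-hand side converges by the conditional statement of Theorem~\ref{sec3:thm:bridge}. The $o(1)$ accounts for the event $d_{j,\Sset}=0$. Slutsky's lemma then combines this joint limit with the convergence of $\sqrt M\ee$ and $\sqrt M\sss$ to constants, which gives
\[
\frac{\kappa U+G}{\sqrt{\kappa^2+\tau^2}}\sim N(0,1).
\]
This covers the degenerate case $\tau=0$ as well, since then $G_M\to_p0$.

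The main obstacle is the conditioning bookkeeping. Theorem~\ref{sec3:thm:bridge} is stated for the unconditional law and then extended to $\Shat=\Sset$, and $\ttt$ depends on the testing view. I must check that Proposition~\ref{sec3:prop:scale} delivers consistency of $\ttt$ under the law conditional on $\Shat=\Sset$. This should hold because $\Shat$ depends only on $V_1$ and the auxiliary randomization, both independent of $V_2,V_3,V_4$, so conditioning leaves the law of those views unchanged. Only convergence in probability to a constant is needed, so dependence between $\ttt$ and $U_M$ causes no difficulty in the Slutsky step.

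Coverage then follows because the limit distribution is continuous, so $\Prb(|\text{pivot}|\le z_{1-\alpha/2}\mid\Shat=\Sset)\to1-\alpha$. The $p$-value evaluated at $\bb$ equals $2\Phi(-|\text{pivot}|)$, a continuous function of the pivot. It therefore converges in distribution to $2\Phi(-|N(0,1)|)$, which is uniform on $(0,1)$.
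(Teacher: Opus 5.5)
Your proposal is correct and follows essentially the same route as the paper. Both arguments decompose the error into the exact IV pivot plus the $\mathcal H$-measurable gap $G_M$, use conditional independence given $\mathcal H$ and characteristic functions to combine the standard normal with the bridge limit, and finish with the conditional scale consistency of Proposition~\ref{sec3:prop:scale} and Slutsky's theorem. The only cosmetic difference is bookkeeping: the paper computes the characteristic function of $\sqrt M(\chat-\bb)$ directly and handles the random factor $\exp(-t^2M\ee^2/2)$ by convergence in mean, whereas you first obtain the joint limit of the standardized pair $(U_M,G_M)$ and let Slutsky absorb $\sqrt M\,\ee$.
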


BLA validity conditions on the selected support and allows the design and instrument to vary; exact IV validity conditions on $\mathcal H$. The BLA result assumes dense observations of a scalar trajectory, a fixed polynomial library, and fixed weak weights. The pilot may use $V_2$, since scale consistency does not require independence from the testing response.

IV inference remains available when design and instrument checks pass, even if the gap scale is unavailable. Failed BLA checks yield the interval $\R$ and $p=1$; their probabilities vanish under the stated asymptotic conditions. Within the selected family, Bonferroni intervals and Bonferroni or Holm tests provide simultaneous coverage and familywise error control, respectively, conditionally in finite samples for available IV inference and asymptotically for BLA under the respective conditions (Appendix~\ref{sec3:app:coverage}). Algorithms~\ref{sec3:alg:inference}--\ref{sec3:alg:bootstrap} specify reporting, and Appendix~\ref{sec3:proofs} contains the proofs.

%% file: sec4.tex
\section{Experiments}
\label{sec4:experiments}

\paragraph{Methods and evaluation metrics.}
\label{sec4:design}
We assess Type~I error, reference-null power, coverage, and median interval width at nominal level $0.05$. Type~I error is the rejection rate when testing each selected support's true BLA or realized IV coefficient; coverage is the complementary rate for the corresponding $95\%$ intervals on the same cases. Appendix~\ref{sec4:app:counts} specifies target-specific denominators, unavailable-inference handling, and pointwise Wilson Monte Carlo intervals.

Naive reuses the selection response while retaining the corrected design and independent instrument. Model-count Bonferroni adjusts Naive $p$-values over eligible models. Their definitions and guarantees are in Appendix~\ref{sec4:app:bon}.

\begin{figure*}[!t]
\centering\includegraphics[width=0.86\textwidth]{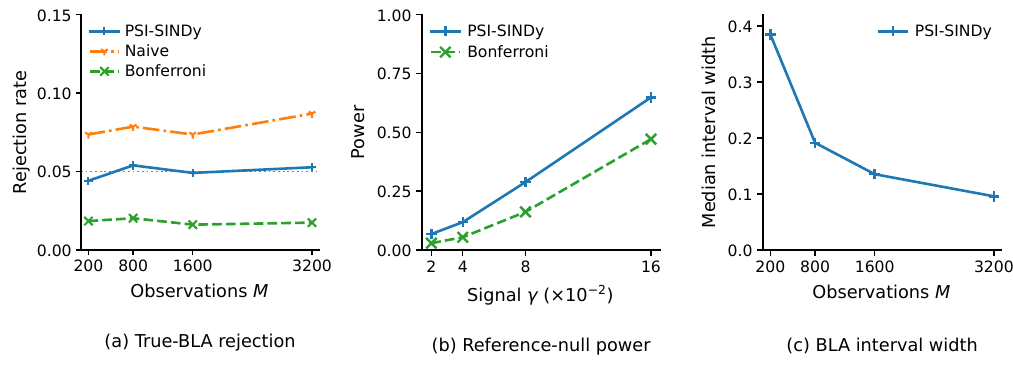}
\caption{Primary selected-BLA inference conditional on selecting $x^2$ ($\sigma_X=0.06$, $\delta=0$). (a) Rejection at each support's true BLA, $\gamma=0$; dotted line at 0.05. (b) Power against the reference null \eqref{sec4:eq:null}, $M=800$. (c) Median width of PSI-SINDy's 95\% BLA intervals, $\gamma=0$; counts and MC intervals in Tables~\ref{emp:tab:primary}--\ref{sec4:tab:power}.}
\label{fig:main-inference}
\end{figure*}

\paragraph{Simulation setup.}
We simulate noisy observations from the scalar dynamics
\begin{equation}
\dot x^\star(t)=-0.25-0.35x^\star(t)+\gamma\{x^\star(t)\}^2+\delta\sin\{2x^\star(t)\}, \qquad x^\star(0)=1.2,\qquad t\in[0,8].
\label{sec4:eq:ode}
\end{equation}
Primary simulations use independent Gaussian noise with known $\sigma_X=0.06$, the library $(1,x,x^2,x^3)$, $K=12$ weak equations, and equal thinning fractions. Sequentially thresholded least squares (STLSQ) selects corrected, whitened features on $V_1$, retaining the intercept, with a threshold fixed using only selected-support counts from separate pilot simulations.

Each primary setting has 5,000 independently generated datasets. Calibration uses $\gamma=\delta=0$ and $M\in\{200,800,1600,3200\}$; power uses $\delta=0$, $M=800$, and $\gamma\in\{0.02,0.04,0.08,0.16\}$. For the quadratic coefficient ($j=3$), BLA calibration uses each support's true coefficient as the null value. All methods use the same datasets selecting this term. Rejection, coverage, and power are proportions over these datasets, retaining each support's target. Bonferroni counts the four models containing the intercept and quadratic term.

For the power comparison, we test \begin{equation} H_{0,\Sset}:\quad \xi^\star_{3,\Sset}=\xi^{\star,(0)}_{3,\Sset}, \label{sec4:eq:null} \end{equation} where $\xi^{\star,(0)}_{3,\Sset}$ is the same subset's quadratic BLA on the $\gamma=0$ reference trajectory; omitted relevant terms can make it nonzero. Every selected quadratic coefficient received a finite PSI-SINDy interval in the primary simulations. Appendix~\ref{sec4:appendix} gives the protocol, selected counts, and pointwise Monte Carlo (MC) intervals \citep{morris2019}.

\subsection{Type I error, power, coverage, and interval width}
\label{sec4:bla}

Primary PSI-SINDy rejection rates at the true selected BLA are 4.41\%--5.39\% (Figure~\ref{fig:main-inference}), versus 7.36\%--8.69\% for Naive and 1.61\%--2.02\% for Bonferroni. Its 95\% intervals have 94.61\%--95.59\% coverage on these datasets, and all four MC rejection intervals contain 5\%. As $M$ increases from 200 to 3200, the median BLA interval width decreases from 0.3846 to 0.0957.

Across the signal grid, power for \eqref{sec4:eq:null} rises from 6.75\% to 64.85\% for PSI-SINDy and from 2.85\% to 47.10\% for Bonferroni; quadratic-term selection frequency rises from 70.80\% to 97.92\%. Higher power is specific to this protocol: a supplementary comparison reverses the ordering (Appendix~\ref{full:app:comparison}). Neither comparison establishes superiority at matched conditional Type~I error.

Pooled calibration does not establish calibration within every selected model: one of eight prespecified support groups in the supplementary study shows conditional BLA undercoverage (Appendix~\ref{app:support-calibration}).

\paragraph{Comparison with coefficient-bootstrap methods.}
We compare PSI-SINDy with Ensemble-SINDy data-bagging adapters \citep{fasel2022} and a residual-bootstrap adapter motivated by \citet{gao2023}. At $M=800$, $\sigma_X=0.06$, and $\delta=0$, we test the structural null $H_0:\gamma=0$ on the same 1,382 of 2,000 datasets in which PSI-SINDy selects $x^2$. Rejection is $4.70\%$ for PSI-SINDy, $8.10\%$ for the longer-smoothing ensemble, and $20.48\%$ for the residual adapter; the latter two MC intervals exclude $5\%$. The five-point ensemble has no observed rejections but about 15 times PSI-SINDy's median interval width. This empirical comparison concerns the specified adapters and structural target. Appendix~\ref{app:external-comparison} gives implementation details, coverage, widths, and a setting with a different ordering.

\subsection{Effect of measurement-error adjustments}
\label{sec4:main-bridge}\label{sec4:bridge}
\label{full:sec:supplementary}

\noindent
Table~\ref{tab:measurement-ablations} assesses feature correction and gap adjustment with matched observations, selection, views, and bootstrap randomness at the same true selected BLA; removing the gap scale keeps the estimate, whereas removing feature correction recomputes the design, instrument, estimate, and standard error.

Omitting the gap scale increases rejection for all three coefficients at both noise levels. Removing feature correction raises high-noise intercept rejection from 5.53\% to 11.06\%, but leaves the linear and quadratic tests unchanged: the included intercept absorbs the constant subtracted from the quadratic feature, an effect specific to the selected basis.

We also examine the accuracy of the asymptotic bridge at the sample sizes used. Primary bridge diagnostics show a decreasing median normalized linearization remainder with $M$ and a median estimated-to-oracle scale ratio close to one (Appendix~\ref{app:bridge-accuracy}). In a separate sine-drift study, rejection at the respective true coefficients is 12.18\% for BLA and 5.15\% for IV inference, conditional on selection of the quadratic term (Appendix~\ref{app:port-misspec-power}). BLA failure coincides with a larger bridge remainder (Table~\ref{tab:bridge-regimes}), illustrating that near-nominal IV calibration does not ensure finite-sample BLA calibration.

\begin{table}[!htbp]
\setlength{\tabcolsep}{3pt}
\caption{Rejection at the true selected BLA (\%; nominal 5\%) in the matched polynomial study: $\gamma=0.20$, $\delta=0$, $M=800$, $\Sset=\{1,2,3\}$. $N_\Sset$ counts datasets selecting this support. Full uses both adjustments. Paired differences and MC intervals are in Table~\ref{tab:paired-ablation-uncertainty}.}
\label{tab:measurement-ablations}
\begin{center}\small
\begin{tabular}{@{}rcrccc@{}}\toprule
$\sigma_X$ & Term & $N_\Sset$ & Full & No gap & \shortstack{No feature\\correction}\\\midrule
0.06 & $1$ & 812 & 5.42 & 7.51 & 6.28\\
0.06 & $x$ & 812 & 5.30 & 7.51 & $5.30^\dagger$\\
0.06 & $x^2$ & 812 & 5.42 & 6.28 & $5.42^\dagger$\\
\addlinespace[2pt]
0.12 & $1$ & 452 & 5.53 & 7.30 & 11.06\\
0.12 & $x$ & 452 & 6.86 & 8.85 & $6.86^\dagger$\\
0.12 & $x^2$ & 452 & 5.53 & 6.86 & $5.53^\dagger$\\
\bottomrule\end{tabular}
\par\vspace{2pt}{\footnotesize $\dagger$ For the selected basis $\{1,x,x^2\}$, the intercept absorbs the quadratic correction, leaving the $x$ and $x^2$ tests unchanged.}
\end{center}
\end{table}

\subsection{Larger libraries and multistate systems}
\label{sec4:extensions}\label{ext:sec:largerp}

\label{ext:sec:systems}
We examine libraries with up to $P=20$ candidates and four multistate systems. Larger-library inference remains informative within sparse selected models; in a separate study with unregularized selection, intervals widen substantially (Appendix~\ref{ext:app:largerp}). This does not establish growing-dimensional validity. Multistate experiments extend beyond the scalar BLA theorem: gap adjustment improves coverage, but nominal coverage is not uniformly attained (Appendix~\ref{ext:app:systems}).

\subsection{Supercapacitor discharge data}
\label{full:sec:real}

We analyze eight supercapacitor discharge traces \citep{hanschek2026,hanschekdata} using a cubic voltage library, $M=1600$ processed observations per device, and $K=20$ weak equations. Device~1 retains constant, linear, and cubic terms; Table~\ref{real:tab:processedcoef} reports estimates and nominal 95\% intervals. Estimated noise variance and unresolved temporal dependence place this application outside the theoretical guarantees. Appendix~\ref{full:app:real} specifies preprocessing and settings.

Simulations based on fixed, smoothed traces assess sensitivity to temporal dependence while inference assumes independent noise. With the true marginal noise scale supplied, selected-coefficient BLA coverage, averaged equally across devices, is 95.23\% under independent noise and 81.86\% under correlated noise (Table~\ref{real:tab:semisynthetic}). These simulations show sensitivity to dependence; they do not validate measured-trace coverage.

%% file: sec5.tex
\section{Conclusion}
\label{sec5:discussion}

We proposed PSI-SINDy for statistical inference after selecting dynamical terms from a noisy trajectory. Four-view data thinning and measurement-error correction address selection bias, shared response--design noise, and design measurement error. It gives finite-sample exact conditional inference for the auxiliary realized IV coefficient and, through an asymptotic bridge, asymptotically valid inference for the primary selected-model BLA coefficient. In the primary simulations, its Type~I error and coverage are compatible with the nominal level, whereas naive reuse of the selection response inflates rejection. Omitting the gap adjustment also increases rejection in the matched ablation.

The selected-model BLA is a least-squares approximation to the noise-free weak response and need not equal a governing-equation coefficient. Its inferential guarantee is asymptotic and currently established for scalar trajectories, fixed polynomial libraries, and independent Gaussian noise of known variance; in some supplementary settings, BLA intervals undercover in finite samples even where exact IV inference remains near nominal. Finite-sample calibration and extensions to multistate systems, broader libraries, dependent noise, and estimated noise variance remain open.

%% file: app.tex
\FloatBarrier
\label{main:end}
\input{ai_use_statement}

\input{bibliography}
\clearpage
\appendix
\section*{Appendices}
\numberwithin{equation}{section}
\numberwithin{theorem}{section}
\numberwithin{lemma}{section}
\numberwithin{assumption}{section}
\numberwithin{proposition}{section}
\section{Finite-grid weak-form formulation}
\label{app:formulation}

Throughout, $\R$ denotes the real numbers and $A^\top$ the transpose of a vector or matrix $A$. We use $\Prb$, $\E$, $\Var$ and $\Cov$ for probability, expectation, variance and covariance, respectively, and $\indep$ for statistical independence. The notation $N(\mu,\Sigma)$ denotes a Gaussian distribution with mean $\mu$ and covariance $\Sigma$ (variance in the scalar case), and iid denotes independent and identically distributed.

\subsection{Continuous-time dynamics and finite-grid approximation}
\label{app:finitegrid}

The weak-form literature often refers to the functions $\psi_k$ as test functions; here we call them weight functions to distinguish them from statistical tests.

We separate approximation by the selected library from numerical quadrature. For a continuously differentiable trajectory $x^\star$, assume that the feature integrands $\psi_k(t)\theta_j(x^\star(t))$ are integrable and define the continuous weak quantities over an interval of length $T=t_M-t_1$ by
\[
y^{\mathrm{cont}}_k=-\int_{t_1}^{t_M}\psi'_k(t)x^\star(t)\,dt,
\qquad
W^{\mathrm{cont}}_{kj}=\int_{t_1}^{t_M}
\psi_k(t)\theta_j(x^\star(t))\,dt.
\]
For the full-library approximation in \eqref{eq:libraryapprox}, multiplying \eqref{eq:dynamics} by $\psi_k$ and integrating by parts gives
\begin{equation}
-\int_{t_1}^{t_M}\psi'_k(t)x^\star(t)\,dt
\approx\sum_{j\in[P]}\xi_j
\int_{t_1}^{t_M}\psi_k(t)\theta_j\bigl(x^\star(t)\bigr)\,dt.
\label{eq:weakcontinuous}
\end{equation}

For a selected set $\Sset$ and any coefficient vector $\boldsymbol\beta\in\R^{|\Sset|}$, integration by parts and \eqref{eq:dynamics} give
\begin{equation}
y^{\mathrm{cont}}_k-\sum_{j\in\Sset}W^{\mathrm{cont}}_{kj}\beta_j
=\int_{t_1}^{t_M}\psi_k(t)
\left[f(x^\star(t))-\sum_{j\in\Sset}\beta_j\theta_j(x^\star(t))\right]dt.
\label{eq:continuousresidual}
\end{equation}
The boundary term is zero because the weights have support inside $(t_1,t_M)$. Here $\beta_j$ denotes the coefficient associated with original feature label $j$.

Suppose that $g_{k,0}(t)=-\psi'_k(t)x^\star(t)$ and
$g_{k,j}(t)=\psi_k(t)\theta_j(x^\star(t))$ are continuously differentiable. Their endpoint values are zero because the smooth weights are supported inside the observation interval. For any such integrand $g$, the full-grid sum equals the left Riemann sum because $g(t_M)=0$, and
\begin{equation}
\begin{aligned}
\left|\Delta t\sum_{m=1}^M g(t_m)-\int_{t_1}^{t_M}g(t)\,dt\right|
&\le \sum_{m=1}^{M-1}\int_{t_m}^{t_{m+1}}
|g(t_m)-g(t)|\,dt\\
&\le \frac{T\Delta t}{2}\,\|g'\|_\infty.
\end{aligned}
\label{eq:quadraturebound}
\end{equation}
The last step uses $|g(t_m)-g(t)|\le\|g'\|_\infty(t-t_m)$, where
$\|g'\|_\infty=\sup_{t\in[t_1,t_M]}|g'(t)|$.
Let $\boldsymbol e_y=\ystar-\boldsymbol y^{\mathrm{cont}}$ and
$E_W=W^\star-W^{\mathrm{cont}}$ be the quadrature errors. Then
\[
\ystar-W^\star_\Sset\boldsymbol\beta
=\bigl(\boldsymbol y^{\mathrm{cont}}-W^{\mathrm{cont}}_\Sset\boldsymbol\beta\bigr)
+\boldsymbol e_y-E_{W,\Sset}\boldsymbol\beta.
\]
Equation~\eqref{eq:continuousresidual} identifies the continuous approximation residual, while \eqref{eq:quadraturebound} bounds each quadrature-error component. The selected BLA in Section~2 is defined using the finite-grid quantities themselves. Its finite-grid interpretation therefore does not require the continuous approximation residual or the quadrature error to be zero.

\subsection{Normal equations and residualization within the selected model}
\label{full:app:normal}
Since $Q^\top Q=\Sigma_{\mathrm{test}}^{-1}$, the whitened objective in \eqref{eq:selectedbla} equals its covariance-weighted form:
\[
\|\ybar-\bar W^\star_\Sset\boldsymbol\beta\|_2^2
=(\ystar-W^\star_\Sset\boldsymbol\beta)^\top
\Sigma_{\mathrm{test}}^{-1}
(\ystar-W^\star_\Sset\boldsymbol\beta).
\]

Let $X=\Wbar_\Sset$, $\mu=\ybar$, and assume $X$ has full column rank. The selected BLA minimizes $\|\mu-X\boldsymbol\beta\|_2^2$ over $\boldsymbol\beta\in\R^{|\Sset|}$. Its unique minimizer is $(X^\top X)^{-1}X^\top\mu$, and its residual $\rho_\Sset=\mu-X\xistar_\Sset$ satisfies $X^\top\rho_\Sset=0$. These normal equations are used in the IV--BLA bridge derivation in Appendix~\ref{sec3:app:bridge}.

For $j\in\Sset$, write $X_j=\bar W^\star_{:,j}$ and $N=\Nset$, and let $X_N$ contain the columns with original feature labels in $N$. The matrix $R_N=I_K-X_NX_N^+$ is the orthogonal projector onto the orthogonal complement of the nuisance column space. Equivalently, write $\Mstar=R_N$ and define the noise-free residualized response and regressor by $y^{\star\perp}_{j,\Sset}=R_N\mu$ and $w^{\star\perp}_{j,\Sset}=R_NX_j$.

Minimizing first over the nuisance coefficients leaves the objective $\|R_N\mu-R_NX_j b\|_2^2$ for a scalar coefficient $b\in\R$. Full column rank gives $X_j^\top R_NX_j=\|R_NX_j\|_2^2>0$, so differentiation yields the Frisch--Waugh--Lovell representation \citep{lovell1963}:
\begin{equation}
\xi^\star_{j,\Sset}
=\frac{X_j^\top R_N\mu}{X_j^\top R_NX_j}
=\frac{(w^{\star\perp}_{j,\Sset})^\top y^{\star\perp}_{j,\Sset}}
{(w^{\star\perp}_{j,\Sset})^\top w^{\star\perp}_{j,\Sset}}.
\label{eq:fwl}
\end{equation}
This is the main-text representation \eqref{eq:selectedbla-fwl}. If $N$ is empty, $R_N=\Mstar=I_K$. Thus, $\xi^\star_{j,\Sset}$ is the coefficient of term $j$, retaining its original library label, after adjusting for the other selected terms. The coefficient can change when the selected adjustment set changes.
\setcounter{equation}{0}

\section{PSI-SINDy construction and proofs}
\label{sec3:proofs}

The proofs follow the construction in Section~\ref{sec3:method}. We first establish independence of the measurement views and unbiasedness of the corrected features. The conditional Gaussian law then gives auxiliary IV inference. Weak-design limits and differentiation of the nuisance projector yield the IV--BLA bridge, after which pilot and bootstrap consistency establish the primary BLA result.

\subsection{Independent views and polynomial correction}
\label{sec3:app:construction}

Write $\xobs=(x_1,\ldots,x_M)^\top$. For the fixed fractions in Section~\ref{sec3:construction}, independently of the observations draw mutually independent $g_i\sim N(0,a_i\sigma_X^2I_M)$. With $g=\sum_i g_i$, define
\begin{equation}
V_i=\xobs+\frac{g_i-a_i g}{a_i},
\qquad s_i^2=\frac{\sigma_X^2}{a_i},
\qquad i=1,\ldots,4.
\label{sec3:eq:thinning}
\end{equation}
Here $s_i>0$ is the noise standard deviation in view $i$.
\begin{lemma}[Independent measurement views]
\label{sec3:lem:views}
The vectors $V_i$ are mutually independent with $V_i\sim N(\xstar,s_i^2I_M)$, and $\sum_i a_iV_i=\xobs$.
\end{lemma}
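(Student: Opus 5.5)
Since $V_i = \xobs + (g_i - a_i g)/a_i$ and $\xobs = \xstar + \boldsymbol\varepsilon$, every view is an affine function of the jointly Gaussian vector $(\boldsymbol\varepsilon, g_1, \ldots, g_4)$. These five vectors are mutually independent by construction. The family $(V_1,\ldots,V_4)$ is therefore jointly Gaussian, and mutual independence reduces to verifying that all cross-covariances vanish. The plan is to compute the means, the variances, and the pairwise cross-covariances directly, and then check the aggregation identity algebraically.

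\textbf{Mean and aggregation.} Both $\boldsymbol\varepsilon$ and the $g_i$ are centered, so $\E V_i = \xstar$. For the identity, I will compute
\[
\sum_i a_iV_i = \Bigl(\sum_i a_i\Bigr)\xobs + \sum_i g_i - \Bigl(\sum_i a_i\Bigr)g = \xobs,
\]
using $\sum_i a_i = 1$ and $g = \sum_i g_i$.

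\textbf{Covariances.} Write $V_i - \xstar = \boldsymbol\varepsilon + g_i/a_i - g$. All the pieces have covariance proportional to $I_M$, with $\Cov(\boldsymbol\varepsilon) = \sigma_X^2 I_M$, $\Cov(g_i) = a_i\sigma_X^2 I_M$, and $\Cov(g) = \sigma_X^2 I_M$. I will use the cross terms
\[
\Cov(g_i, g) = a_i\sigma_X^2 I_M,\qquad \Cov(\boldsymbol\varepsilon, g_i) = \Cov(\boldsymbol\varepsilon, g) = 0.
\]
For $i \ne l$, this gives
\[
\Cov(V_i, V_l) = \sigma_X^2 I_M\,\bigl(1 + 0 - a_l/a_l - a_i/a_i + 1\bigr) = 0.
\]
For $i = l$, it gives
\[
\Cov(V_i) = \sigma_X^2 I_M\,\bigl(1 + 1/a_i - 2 + 1\bigr) = (\sigma_X^2/a_i)\, I_M = s_i^2 I_M.
\]
Joint Gaussianity with block-diagonal covariance then yields mutual independence and the marginal law $V_i \sim N(\xstar, s_i^2 I_M)$.

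The only delicate point is the bookkeeping of cross terms between $g_i/a_i$ and $g$. There is no real obstacle here. A remark can then note that, since $\xstar$ is fixed and the $g_i$ are independent of $\boldsymbol\varepsilon$, this is exactly the Gaussian data-thinning construction of \citet{neufeld2024}.
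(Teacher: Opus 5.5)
Your proof is correct and follows essentially the same route as the paper. Both arguments write each view as an affine function of the independent Gaussian vectors $(\boldsymbol\varepsilon,g_1,\ldots,g_4)$, show that the off-diagonal covariance blocks vanish and the diagonal blocks equal $s_i^2I_M$, and then invoke joint Gaussianity; the paper merely packages the perturbation as $G_i=g_i-a_ig$, computes $\Cov(G_i,G_\ell)=\sigma_X^2(a_i\mathbf 1_{\{i=\ell\}}-a_ia_\ell)I_M$ in one step, and obtains the reconstruction identity from $\sum_iG_i=0$, which is exactly your algebra.
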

This independence is with respect to measurement noise and thinning randomness, with the noise-free trajectory fixed. It does not hold after conditioning on the original observed vector $\xobs$.

\begin{proof}[Proof of Lemma~\ref{sec3:lem:views}]
Put $G_i=g_i-a_i g$. Independence of the $g_i$ gives
\[
\Cov(G_i,G_\ell)
=\sigma_X^2(a_i\mathbf1_{\{i=\ell\}}-a_i a_\ell)I_M,
\]
where $\mathbf1_E$ is the indicator of an event $E$. The original measurement error is independent of the $G_i$. Hence
\[
\Cov(V_i-\xstar,V_\ell-\xstar)
=\sigma_X^2I_M+\frac{\Cov(G_i,G_\ell)}{a_i a_\ell}
=\mathbf1_{\{i=\ell\}}\frac{\sigma_X^2}{a_i}I_M.
\]
The views are jointly Gaussian as affine functions of Gaussian vectors. Their block-diagonal covariance implies mutual independence. Their means equal $\xstar$, and $\sum_iG_i=0$ proves the reconstruction identity.
\end{proof}

\paragraph{Known variance and practical substitution.}
The exact guarantees assume that the variance used for thinning equals $\sigma_X^2$. Substituting an estimate, including one obtained from independent data, need not preserve independence of the thinned views \citep[Section~2.3]{neufeld2024} and is not covered by the present guarantees. The discussion in Section~\ref{sec:weaksetup} therefore distinguishes a practical variance estimate from the known variance assumed in the theoretical analysis.

\paragraph{Weak maps, whitening, and corrected features.}
The weak sums from Section~2 use the deterministic $K\times M$ matrices
\begin{equation}
(D_M)_{km}=-\psi'_k(t_m)\Delta t,\qquad (H_M)_{km}=\psi_k(t_m)\Delta t,\qquad k\in[K],\quad m\in[M].
\label{sec3:eq:weakmaps}
\end{equation}
The testing response before whitening is $D_MV_2$. Its covariance determines the whitening map introduced in Section~2:
\begin{equation}
\Sigma_{\mathrm{test}}=s_2^2D_MD_M^\top,\qquad Q=\Sigma_{\mathrm{test}}^{-1/2},\qquad C_M=QD_M,\qquad B_M=QH_M.
\label{sec3:eq:whitening}
\end{equation}
We assume that $\Sigma_{\mathrm{test}}$ is positive definite, so $s_2^2C_MC_M^\top=I_K$.
For $v\in\R^M$, let $\Theta(v)\in\R^{M\times P}$ have entries $\Theta(v)_{mj}=\theta_j(v_m)$. The noise-free quantities are then $\ybar=C_M\xstar$ and $\Wbar=B_M\Theta(\xstar)$, as in Section~2.

Independent views remove the shared noise between the testing response and features, but nonlinear feature evaluations can still have biased means. For the polynomial library $\theta_j(u)=u^{j-1}$, Gaussian measurement-error correction provides unbiased evaluations \citep{stefanski1989}. For a scalar argument $v$ and noise standard deviation $s>0$, define
\begin{equation}
\mathcal C_{0,s}(v)=1,\qquad \mathcal C_{1,s}(v)=v,\qquad \mathcal C_{d+1,s}(v) =v\mathcal C_{d,s}(v)-ds^2\mathcal C_{d-1,s}(v), \qquad d\ge1.
\label{sec3:eq:correction}
\end{equation}
Here $d$ denotes polynomial degree, so feature $j$ uses degree $j-1$. The correction satisfies
$\E\{\mathcal C_{d,s}(u+sG)\}=u^d$ for $G\sim N(0,1)$; a proof is given below. For a cubic library, the corrected evaluations are $(1,v,v^2-s^2,v^3-3s^2v)$.

Let $\mathcal C_s(v)$ be the $M\times P$ matrix with entries $\mathcal C_{j-1,s}(v_m)$. We construct the three inference quantities as
\begin{equation}
\ytest=C_MV_2,\qquad \Wt=B_M\mathcal C_{s_3}(V_3),\qquad \Zt=B_M\mathcal C_{s_4}(V_4).
\label{sec3:eq:inferencedata}
\end{equation}
The corrected design $\Wt$ and instrument $\Zt$ are independent and both have expectation $\Wbar$. Correction removes their mean bias, but they still fluctuate around this noise-free design.

For a real generating parameter $t$, the polynomial recursion in \eqref{sec3:eq:correction} has generating function
\[
\exp(tv-s^2t^2/2)=\sum_{d=0}^{\infty}\mathcal C_{d,s}(v)t^d/d!.
\]
Evaluating at $t=0$ gives the constant term; differentiating with respect to $t$ and equating coefficients gives the recursion and the linear term. For $G\sim N(0,1)$, the Gaussian moment-generating function gives
\[
\E\exp\{t(u+sG)-s^2t^2/2\}=e^{tu}.
\]
Taking the $d$th derivative at zero yields $\E\mathcal C_{d,s}(u+sG)=u^d$. The interchange with expectation is justified by finite Gaussian exponential moments on a neighborhood of zero.

For the subsequent covariance calculation, let $d,e$ be nonnegative polynomial degrees and define the probabilists' Hermite polynomials by
$\exp(tG-t^2/2)=\sum_{r\ge0}\mathrm{He}_r(G)t^r/r!$.
Multiplication of generating functions gives
\[
\mathcal C_{d,s}(u+sG)
=\sum_{r=0}^d\binom dr u^{d-r}s^r\mathrm{He}_r(G).
\]
For real generating parameters $t$ and $v$,
$\E\{\exp(tG-t^2/2)\exp(vG-v^2/2)\}=e^{tv}$.
Equating coefficients shows
$\E\{\mathrm{He}_r(G)\mathrm{He}_\ell(G)\}=r!\mathbf1_{\{r=\ell\}}$.
Consequently,
\begin{equation}
\Cov\{\mathcal C_{d,s}(u+sG),\mathcal C_{e,s}(u+sG)\}
=\sum_{r=1}^{\min(d,e)}
\binom dr\binom er r!u^{d+e-2r}s^{2r}.
\label{sec3:eq:featurecov}
\end{equation}
At fixed degrees, bounded $u$, and fixed $s$, all fixed moments of the centered corrected evaluations are uniformly bounded, since they are polynomials of a Gaussian variable with bounded coefficients.

The maps $B_M,C_M$ are deterministic. The correction identity and Lemma~\ref{sec3:lem:views} therefore give $\E\Wt=\E\Zt=\Wbar$ and independence of the two matrices. The response $\ytest=C_MV_2$ has mean $\ybar$ and covariance $s_2^2C_MC_M^\top=I_K$ and is independent of $\mathcal H$. This proves \eqref{sec3:eq:gaussian}.

\subsection{Exact conditional IV inference}
\label{sec3:app:exact}

For a fixed selected set $\Sset$ and $j\in\Sset$, write $N=\Sset\setminus\{j\}$. The residualizer from Section~2 is $\Mt=I_K-\Wt_N\Wt_N^+$, with $\Mt=I_K$ for empty $N$. Thus $\zp=\Mt\Zt_{:,j}$ and $\wperp=\Mt\Wt_{:,j}$. Since $\Mt$ is symmetric and idempotent, $\Mt\zp=\zp$ and \eqref{eq:realizediv} becomes
\[
\cc=\frac{(\zp)^\top\ybar}{d_{j,\Sset}}.
\]

\begin{proof}[Proof of Theorem~\ref{sec3:thm:exact}]
For this proof abbreviate $\zp$ by $z$ and $d_{j,\Sset}$ by $d$.
Both are $\mathcal H$-measurable. Substituting \eqref{sec3:eq:gaussian} into \eqref{sec3:eq:estimator} gives
\[
\chat-\cc=\frac{z^\top\varepsilon^{\mathrm{test}}}{d},
\qquad
U:=\frac{\chat-\cc}{\ee}
=\operatorname{sign}(d)\frac{z^\top\varepsilon^{\mathrm{test}}}{\|z\|_2}.
\]
Here $\operatorname{sign}(d)=d/|d|$ for $d\ne0$. Conditional on $\mathcal H$, the direction is fixed and has unit Euclidean norm. The last expression is therefore $N(0,1)$. Since $U\mid\mathcal H\sim N(0,1)$, $\Prb\{|U|\le z_{1-\alpha/2}\mid\mathcal H\}=1-\alpha$, which proves conditional coverage.
For $0<u<1$,
\[
\Prb\{2\Phi(-|U|)\le u\mid\mathcal H\}
=\Prb\{|U|\ge z_{1-u/2}\mid\mathcal H\}=u,
\]
which proves conditional uniformity.
\end{proof}

This argument requires independence of the testing noise from $\mathcal H$ and a nonzero IV denominator. Unbiased feature correction is needed for the BLA bridge, but not for this conditional Gaussian identity. When identification holds almost surely, the tower property also gives exact IV coverage conditional on $\Shat=\Sset$. Otherwise the exact statement applies conditional on the identified realizations.

\subsection{Sufficient conditions for the weak-design limit}
\label{sec3:app:weaklimit}

Write $\psi(t)=(\psi_1(t),\ldots,\psi_K(t))^\top$, and use $\vecop$ for columnwise vectorization.

\begin{appendixproposition}[Polynomial weak-design limit]
\label{sec3:prop:weaklimit}
Let the endpoints of the observation interval be fixed, with $T=t_M-t_1>0$. Suppose $x^\star$ and the prespecified weights $\psi_k$ are continuously differentiable and the polynomial library has fixed degree. If
$G_D=\int_{t_1}^{t_M}\psi'(t)\psi'(t)^\top dt$ is positive definite, define
$L_D=(s_2^2TG_D)^{-1/2}$. The limits in Assumption~\ref{sec3:ass:limit}, apart from its separate requirement on the rank of the selected design, hold with
\begin{equation}
\begin{aligned}
A&=L_D\int_{t_1}^{t_M}\psi(t)\theta(x^\star(t))^\top dt,\qquad
b=-L_D\int_{t_1}^{t_M}\psi'(t)x^\star(t)\,dt,\\
\Omega_i&=T\int_{t_1}^{t_M}\Gamma_i(x^\star(t))\otimes
\{L_D\psi(t)\psi(t)^\top L_D^\top\}\,dt,\qquad i=3,4.
\end{aligned}
\label{sec3:eq:weaklimits}
\end{equation}
Here $\theta(u)=(1,u,\ldots,u^{P-1})^\top$,
$\otimes$ is the Kronecker product, and $\Gamma_i(u)$ is the $P\times P$ covariance matrix from \eqref{sec3:eq:featurecov}, with $s=s_i$ and degrees $d=j-1$, $e=\ell-1$ in entry $(j,\ell)$. The matrices $\Omega_i$ may be singular.
\end{appendixproposition}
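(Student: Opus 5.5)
The plan is to reduce every claim in Assumption~\ref{sec3:ass:limit} to a Riemann-sum limit for the whitening matrix and the weak maps, plus a triangular-array central limit theorem for the correction errors. Throughout, $T$ is fixed and $\Delta t=T/(M-1)$. Continuity of $x^\star$ on the compact interval bounds the trajectory, so every integrand below is continuous on $[t_1,t_M]$.

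\emph{Step 1: the scale of $Q$.} From \eqref{sec3:eq:weakmaps},
\[
D_MD_M^\top=\Delta t^2\sum_m\psi'(t_m)\psi'(t_m)^\top=\Delta t\,\{G_D+o(1)\}.
\]
This follows from convergence of Riemann sums for the continuous integrand $\psi'\psi'^\top$. Hence
\[
\Sigma_{\mathrm{test}}/\Delta t=s_2^2\{G_D+o(1)\}.
\]
Because $G_D$ is positive definite and $\Sigma\mapsto\Sigma^{-1/2}$ is continuous on positive definite matrices, $\Sigma_{\mathrm{test}}$ is eventually positive definite. We also get
\[
\sqrt{\Delta t}\,Q\to(s_2^2G_D)^{-1/2}.
\]
Since $\sqrt{\Delta t}\,\sqrt M\to\sqrt T$, this gives $Q/\sqrt M\to(s_2^2TG_D)^{-1/2}=L_D$, and more usefully $\sqrt M\,\Delta t\,Q\to T L_D$.

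\emph{Step 2: deterministic limits.} By \eqref{sec3:eq:whitening} and the expressions below it,
\[
A_M=(Q/\sqrt M)\,H_M\Theta(\xstar),\qquad H_M\Theta(\xstar)=\Delta t\sum_m\psi(t_m)\theta(x^\star(t_m))^\top.
\]
The sum converges to $\int\psi\,\theta(x^\star)^\top dt$ by continuity, which is bounded by \eqref{eq:quadraturebound} in the $C^1$ case. Combining with Step 1 gives $A_M\to A$. The same argument applied to $b_M=(Q/\sqrt M)D_M\xstar$ gives $b_M\to b$.

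\emph{Step 3: the Gaussian error limit.} Write $\zeta_m^{(3)}$ for the centered vector $\mathcal C_{s_3}(V_3)_{m,:}-\theta(x_m^\star)$. By Lemma~\ref{sec3:lem:views} these are independent across $m$, have mean zero by the correction identity, and have covariance $\Gamma_3(x_m^\star)$ by \eqref{sec3:eq:featurecov}. Then
\[
\vecop(E_3)=\sum_m\zeta^{(3)}_m\otimes u_{M,m},\qquad u_{M,m}=\Delta t\,Q\psi(t_m).
\]
By Step 1, $u_{M,m}=M^{-1/2}\{TL_D\psi(t_m)+o(1)\}$ uniformly in $m$. The covariance is
\[
\sum_m\Gamma_3(x^\star_m)\otimes u_{M,m}u_{M,m}^\top=\frac{T^2}{M}\sum_m\Gamma_3(x^\star_m)\otimes L_D\psi(t_m)\psi(t_m)^\top L_D^\top+o(1).
\]
This is a Riemann sum with mesh $T/M$ up to $1+o(1)$, so it converges to $\Omega_3$, and likewise for $\Omega_4$.

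For the CLT I would use the Cram\'er--Wold device together with the Lyapunov condition for the triangular array. The summands are $O(M^{-1/2})$, and their fourth moments are uniformly bounded by the remark after \eqref{sec3:eq:featurecov}, since $x^\star$ is bounded and the degree is fixed. Hence the sum of fourth moments is $O(M\cdot M^{-2})\to0$. Singular $\Omega_i$ cause no difficulty: a linear combination with limiting variance zero converges to zero in probability. The arrays for views $V_3$ and $V_4$ are independent, so Cram\'er--Wold on the stacked vector gives joint convergence to independent limits $(\mathcal E_3,\mathcal E_4)$.

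The main obstacle is Step 1: getting the exact normalization $\sqrt M\,\Delta t\,Q\to TL_D$ that makes the error covariance and $A,b$ come out with the stated constants. This requires tracking the $\Delta t$ factors carefully and using continuity of the inverse square root. Once that is in hand, the rest is routine Riemann-sum convergence and a standard Lyapunov CLT.
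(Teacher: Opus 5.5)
Your proposal is correct and follows essentially the same route as the paper. Both arguments use Riemann-sum convergence of $M D_M D_M^\top$ (your $\sqrt{\Delta t}\,Q$ normalization is equivalent) with continuity of the inverse square root to get $Q/\sqrt M\to L_D$ and the limits $A$, $b$. Both then write $\vecop(E_i)$ as a sum of independent summands with covariance $\sum_m\Gamma_i(x^\star_m)\otimes w_{Mm}w_{Mm}^\top\to\Omega_i$, apply a Lyapunov CLT (you use fourth moments where the paper uses $2+\delta$), use Chebyshev for degenerate contrasts, and finish with Cram\'er--Wold and independence of views 3 and 4.
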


\begin{proof}
Riemann-sum convergence and $M\Delta t\to T$ give
\[
MD_MD_M^\top
=M(\Delta t)^2\sum_{m=1}^M\psi'(t_m)\psi'(t_m)^\top
\longrightarrow TG_D.
\]
Continuity of the symmetric inverse square root on positive-definite matrices gives
$Q/\sqrt M\to L_D$. Combining this limit with the weak sums gives the limits of $\Wbar/\sqrt M$ and $\ybar/\sqrt M$ in \eqref{sec3:eq:weaklimits}. Positive definiteness also ensures that the finite-grid whitening is well defined for all sufficiently large $M$.

Let $w_{Mm}=Q\Delta t\,\psi(t_m)$ be column $m$ of $B_M$. Then
$\max_m\|w_{Mm}\|_2=O(M^{-1/2})$ and
$\sum_m\|w_{Mm}\|_2^2=O(1)$.
For view $i$, let $h_{im}\in\R^P$ be the centered vector of corrected feature evaluations at time $t_m$. We have
\[
E_i=\sum_{m=1}^M w_{Mm}h_{im}^\top,\qquad
\Cov\{\vecop(E_i)\}
=\sum_{m=1}^M\Gamma_i(x^\star(t_m))\otimes(w_{Mm}w_{Mm}^\top).
\]
The latter sum converges to $\Omega_i$ by Riemann-sum convergence. For any fixed linear contrast of $\vecop(E_i)$, the independent centered summands have total $(2+\delta)$th absolute moment $O(M^{-\delta/2})$, for any fixed $\delta>0$, by the polynomial Gaussian moment bounds above. When its limiting variance is positive, the Lyapunov central limit theorem applies. When that variance is zero, Chebyshev's inequality gives convergence in probability to zero. The Cram\'er--Wold criterion yields the Gaussian matrix limit, and independence of views 3 and 4 gives independence of their limits.
\end{proof}

This argument also gives bounded row sums of absolute entries in $B_M/\sqrt M$ and $C_M/\sqrt M$. These bounds will transfer uniform pilot consistency to the normalized weak quantities. Neither smoothness nor the correction identity implies full rank of $A_\Sset$; that identification condition must be imposed for each support under consideration.

\subsection{IV--BLA expansion and bridge proof}
\label{sec3:app:bridge}

Write $N=\Sset\setminus\{j\}$.
The selected BLA residual is
$\rho_\Sset=\ybar-\Wbar_\Sset\xistar_\Sset$, with
$(\Wbar_\Sset)^\top\rho_\Sset=0$.
The IV moment gives the exact identity
\begin{equation}
\cc-\bb
=\frac{(\zp)^\top\rho_\Sset
-(\zp)^\top E_{3,\Sset}\xistar_\Sset}{d_{j,\Sset}}.
\label{sec3:eq:gap}
\end{equation}
The numerator contains the selected-model approximation residual and corrected-design error, both evaluated against the estimated residualized instrument. We allow $\rho_\Sset\ne0$: the selected model need not represent the noise-free weak response exactly.

To describe the first-order difference, let $\beta$ and $r$ be the BLA coefficient and approximation residual for the limiting pair $(A_\Sset,b)$. Let $R_0$ be the orthogonal projector onto the orthogonal complement of the limiting nuisance column space, let $v$ be the residualized tested column, and let $q$ be its squared norm. The vector $\boldsymbol\pi$ gives the projection coefficients of the original tested column $A_{:,j}$ on the nuisance columns:
\begin{equation}
\begin{aligned}
\beta&=(A_\Sset^\top A_\Sset)^{-1}A_\Sset^\top b,
&r&=b-A_\Sset\beta,\\
R_0&=I_K-A_NA_N^+,
&v&=R_0A_{:,j},\\
q&=v^\top v>0,
&\boldsymbol\pi&=A_N^+A_{:,j}.
\end{aligned}
\label{sec3:eq:limitobjects}
\end{equation}
The coordinate labels of $\beta$ retain the original feature indices, and empty nuisance terms are omitted. Define
\begin{equation}
L_{j,\Sset} =\frac{r^\top((\mathcal E_4)_{:,j}-\mathcal E_{3,N}\boldsymbol\pi) -v^\top\mathcal E_{3,\Sset}\beta}{q},\qquad \tau_{j,\Sset}^2=\Var(L_{j,\Sset}).
\label{sec3:eq:linearization}
\end{equation}
The expansion separates uncertainty from the instrument, the adjustment for the other selected terms, and the corrected design. The latter two contributions involve the same design error $\mathcal E_3$ and can be correlated; their covariance is included in the limiting variance $\Var(L_{j,\Sset})$.

If $r=0$, the instrument and nuisance-projector terms in \eqref{sec3:eq:linearization} vanish, although corrected-design uncertainty can remain. The bridge is centered at the finite-grid target $\bb$, rather than the limiting coefficient $\beta_j$, so it requires no additional rate for the convergence of $A_M$ and $b_M$.

For this subsection abbreviate $\widetilde M_N$ by $R$, $\zp$ by $z$, and $d_{j,\Sset}$ by $d$. The selected BLA normal equations give $(\Wbar_\Sset)^\top\rho_\Sset=0$. Appendix~\ref{full:app:normal} derives these equations and the residualization formula.

Because $z^\top\Wt_N=0$, substituting
$\ybar=(\Wt_\Sset-E_{3,\Sset})\xistar_\Sset+\rho_\Sset$ yields
\[
z^\top\ybar
=d\bb-z^\top E_{3,\Sset}\xistar_\Sset+z^\top\rho_\Sset.
\]
Division by $d$ proves \eqref{sec3:eq:gap}. At a noise-free pair $\Wt=\Zt=\Wbar$, residualization and the normal equations give $\cc=\bb$, including when $\rho_\Sset\ne0$.

We next differentiate the nuisance projector. This is a full-column-rank instance of projector differentiation \citep{golub1973}; the calculation is included here.
For a full-column-rank matrix $F$ and perturbation $D$, put $G=F^\top F$ and $\Pi(F)=FG^{-1}F^\top$. For $F(u)=F+uD$, a dot denotes differentiation at $u=0$. The product rule and differentiation of $GG^{-1}=I$ give
\[
\dot G=D^\top F+F^\top D,\qquad
\dot{(G^{-1})}=-G^{-1}\dot GG^{-1}.
\]
Substitution in $\dot\Pi$ and collection of terms yield
\[
\dot\Pi=(I-\Pi)DF^++(F^+)^\top D^\top(I-\Pi).
\]
For fixed direction matrices $D_3,D_4\in\R^{K\times P}$ and scalar perturbation parameter $u$, perturb the normalized design and instrument to $A+uD_3$ and $A+uD_4$ with $b$ fixed. In this differentiation, $R$, $z$ and $c$ denote the residualizer, residualized instrument and IV ratio formed from these normalized matrices. Then
\begin{equation}
\dot R=-R_0D_{3,N}A_N^+-(A_N^+)^\top D_{3,N}^\top R_0.
\label{sec3:eq:projectorderivative}
\end{equation}
For empty $N$, $\dot R=0$.
At the unperturbed point $z=v$, the derivative of the residualized instrument is
\[
\dot z
=R_0(D_{4,:,j}-D_{3,N}\boldsymbol\pi)
-(A_N^+)^\top D_{3,N}^\top v.
\]
Here $D_{i,:,j}$ denotes column $j$ of the direction matrix $D_i$. Since $v^\top A_{:,j}=q$ and $v^\top b=\beta_jq$, the quotient rule gives
\[
\dot c=\frac{\dot z^\top(b-\beta_jA_{:,j})-\beta_jv^\top D_{3,:,j}}{q}.
\]
Differentiating $z(u)^\top(A_N+uD_{3,N})=0$ at $u=0$ gives
$\dot z^\top A_N=-v^\top D_{3,N}$.
The normal equations imply $r^\top A_N=0$ and $R_0r=r$, so
$\dot z^\top r=r^\top(D_{4,:,j}-D_{3,N}\boldsymbol\pi)$.
Substituting $b-\beta_jA_{:,j}=A_N\beta_N+r$ proves
\begin{equation}
\dot c
=\frac{r^\top(D_{4,:,j}-D_{3,N}\boldsymbol\pi)-v^\top D_{3,\Sset}\beta}{q}.
\label{sec3:eq:targetderivative}
\end{equation}

\begin{proof}[Proof of Theorem~\ref{sec3:thm:bridge}]
The IV ratio is unchanged when its design, instrument, and response mean are all divided by $\sqrt M$. At the moving deterministic baseline $(A_M,A_M,b_M)$, its value is exactly $\bb$, by the noise-free identity above.

Full rank of $A_\Sset$ and $q>0$ give a neighborhood of $(A,A,b)$ on which the nuisance inverse and ratio denominator are nonsingular. The ratio is twice continuously differentiable there, with bounded second derivatives on a smaller closed neighborhood. The normalized perturbations are $E_3/\sqrt M$ and $E_4/\sqrt M$, both $O_p(M^{-1/2})$. We use the multivariate delta-method principle \citep[Chapter~3]{vandervaart1998} at this moving deterministic baseline. Taylor's theorem at $(A_M,A_M,b_M)$ gives a remainder $O_p(M^{-1})$. After multiplication by $\sqrt M$, this remainder is $o_p(1)$. The derivative coefficients converge to those in \eqref{sec3:eq:targetderivative}; applying the joint Gaussian limit proves the first claim in \eqref{sec3:eq:bridgelimit}.

Continuity of residualization gives $z/\sqrt M\to_p v$ and $d/M\to_p q>0$. This proves asymptotic identification, and
\[
M\ee^2
=\frac{\|z/\sqrt M\|_2^2}{(d/M)^2}\to_p\frac1q.
\]
For each fixed support, these quantities are functions of views 3 and 4 and deterministic problem quantities. Their law is unchanged by conditioning on the independent event $\Shat=\Sset$ when that event has positive probability. This proves the conditional statements without a uniform positive lower bound on the selection probability.
\end{proof}

For an explicit covariance expression, let $e_j^{(\Sset)}\in\R^{|\Sset|}$ have value one at the selected coordinate corresponding to $j$ and zero elsewhere. Let $\boldsymbol\pi^{\mathrm{emb}}\in\R^{|\Sset|}$ contain $\boldsymbol\pi$ at the nuisance coordinates and zero at $j$. Put
$J_3=-v\beta^\top-r(\boldsymbol\pi^{\mathrm{emb}})^\top$ and $J_4=r(e_j^{(\Sset)})^\top$.
If $\Omega_{i,\Sset}$ is the covariance submatrix for selected columns, then
\begin{equation}
\tau_{j,\Sset}^2
=\frac{\vecop(J_3)^\top\Omega_{3,\Sset}\vecop(J_3)
+\vecop(J_4)^\top\Omega_{4,\Sset}\vecop(J_4)}{q^2}.
\label{sec3:eq:tauquadratic}
\end{equation}
This expression uses the covariance of the limiting Gaussian matrices. No finite-sample variance of the random-denominator ratio is required.

\subsection{Pilot consistency and the bootstrap gap scale}
\label{sec3:app:bootstrap}

The pilot is constructed from
\begin{equation}
\overline V=\frac{a_2V_2+a_3V_3+a_4V_4}{1-a_1},
\label{sec3:eq:pilotview}
\end{equation}
using local cubic smoothing with a prespecified bandwidth $h_M$.

By independence of the views,
\[
\Var(\overline V)
=\frac{\sum_{i=2}^4 a_i^2s_i^2}{(1-a_1)^2}I_M
=\frac{\sigma_X^2}{1-a_1}I_M.
\]
Its mean is $\xstar$, and it is independent of $\mathcal A$. Write this variance factor as $\sigma_{\mathrm{pil}}^2$.

\begin{assumption}[Conditions for the local cubic pilot]
\label{sec3:ass:pilot}
The trajectory $x^\star$ has four continuous derivatives on the fixed interval. The deterministic fitting windows have radius of order $h_M$ and contain order $Mh_M$ observations, including at the boundaries. Their cubic Gram matrices, normalized by the window counts with coordinates scaled by $h_M$, have uniformly bounded inverses. The bandwidth satisfies
$h_M\to0$ and $Mh_M/\log M\to\infty$.
\end{assumption}

This construction gives
\begin{equation}
\|\xpilot-\xstar\|_\infty
=O(h_M^4)+O_p\!\left(\sqrt{\frac{\log M}{Mh_M}}\right)=o_p(1).
\label{sec3:eq:pilotr}
\end{equation}
Here $\|\cdot\|_\infty$ is the largest absolute coordinate.

Under Assumption~\ref{sec3:ass:pilot}, the local cubic fit at $t_m$ is a linear smoother $\widehat x_m=\sum_\ell w_{m\ell}\overline V_\ell$. The bounded inverse Gram matrices and rescaled coordinates give, uniformly in $m$,
\[
\sum_\ell|w_{m\ell}|\le C,\qquad
\sum_\ell w_{m\ell}^2\le \frac{C}{Mh_M}
\]
for a finite constant $C$. Cubic reproduction and a fourth-order Taylor remainder give bias $O(h_M^4)$. Each noise term is Gaussian with variance at most $C\sigma_{\mathrm{pil}}^2/(Mh_M)$. A Gaussian tail bound followed by a union bound therefore gives
\[
\Prb\!\left\{\max_m\left|\sum_\ell w_{m\ell}
(\overline V_\ell-x^\star_\ell)\right|>u\right\}
\le 2M\exp\{-cMh_Mu^2/\sigma_{\mathrm{pil}}^2\}
\]
for a constant $c>0$. Taking $u$ to be a sufficiently large multiple of
$\sqrt{\log M/(Mh_M)}$ proves \eqref{sec3:eq:pilotr}. One-sided boundary windows satisfy the same argument under the stated Gram condition.

We resample only the feature views because $\ee$ already accounts for testing-response noise. Each draw recomputes the adjustment for the other selected terms while keeping the response mean fixed. We use a parametric bootstrap of the known measurement model \citep{davison1997}, with superscript $\boot$ denoting bootstrap quantities. Hold the selected support fixed and condition on the inference views. Generate independent
$V_i^\boot\sim N(\xpilot,s_i^2I_M)$ for $i=3,4$, correct their features, and form $\Wt^\boot$ and $\Zt^\boot$ as in \eqref{sec3:eq:inferencedata}.
The resulting nuisance adjustment is
\[
\widetilde M_N^\boot=I_K-\Wt_N^\boot(\Wt_N^\boot)^+,\qquad
z_{j,\Sset}^\boot=\widetilde M_N^\boot\Zt^\boot_{:,j}.
\]
Holding the response mean at $\widehat{\bar{\boldsymbol y}}=C_M\xpilot$, calculate
\begin{equation}
c_{j,\Sset}^\boot
=\frac{(z_{j,\Sset}^\boot)^\top\widehat{\bar{\boldsymbol y}}}
{(z_{j,\Sset}^\boot)^\top\Wt^\boot_{:,j}}
\label{sec3:eq:bootratio}
\end{equation}
whenever the denominator is nonzero.

From $\Bboot$ draws independent conditional on the inference views, with the selected support held fixed, we compute $\ttt$ by dividing the retained IV coefficients' empirical interquartile range by the standard normal interquartile range. Algorithm~\ref{sec3:alg:bootstrap} specifies the retention checks and the empirical-quantile calculation.

Define the pilot weak design $\widehat{\bar W}=B_M\Theta(\xpilot)$.
When the pilot selected design has full column rank, let
\[
\widehat\xi^{\mathrm{pil}}_{j,\Sset}
=\left[
(\widehat{\bar W}_\Sset^\top\widehat{\bar W}_\Sset)^{-1}
\widehat{\bar W}_\Sset^\top\widehat{\bar{\boldsymbol y}}
\right]_j,
\]
where $[\cdot]_j$ denotes the coordinate associated with original feature $j$.
Let $\mathcal D=\sigma(V_2,V_3,V_4)$.
We use $\Rightarrow^\boot$ for weak convergence under bootstrap randomness conditional on $\mathcal D$, in probability with respect to the original experiment, with the support held fixed.
For the convergence arguments only, assign zero to an undefined pilot BLA or bootstrap coefficient. The calculations below show that pilot-rank failure has probability tending to zero and that the conditional probability of bootstrap nonidentification tends to zero in probability; numerical reporting remains as in Algorithm~\ref{sec3:alg:bootstrap}.

We first establish
\begin{equation}
\sqrt M(c^\boot_{j,\Sset}-\widehat\xi^{\mathrm{pil}}_{j,\Sset})
\Rightarrow^\boot N(0,\tau_{j,\Sset}^2)
\quad\text{in probability}.
\label{sec3:eq:bootlaw}
\end{equation}
Uniform pilot consistency implies $\max_m|\widehat x_m|=O_p(1)$. Fixed-degree polynomials are Lipschitz on bounded intervals. The normalized row-sum bounds from Appendix~\ref{sec3:app:weaklimit} then imply
$\widehat{\bar W}/\sqrt M\to_p A$ and
$\widehat{\bar{\boldsymbol y}}/\sqrt M\to_p b$.
The pilot selected design is consequently full rank with probability tending to one.

Conditional on $\mathcal D$, the centered bootstrap feature errors are independent across times, with covariances $\Gamma_i(\widehat x_m)$. Formula~\eqref{sec3:eq:featurecov} shows that the entries of these covariance matrices are polynomials. Uniform pilot consistency therefore makes their differences from $\Gamma_i(x^\star_m)$ uniformly $o_p(1)$ on bounded pilot events. Since $\sum_m\|w_{Mm}\|_2^2=O(1)$, the bootstrap weak-error covariances converge in probability to $\Omega_i$. The same fixed-degree moment bounds give a conditional Lyapunov bound $O(M^{-\delta/2})$ on those events.

To justify conditional convergence, take any subsequence and then a further subsequence on which uniform pilot consistency and convergence of the pilot baselines and covariance limits hold almost surely. For every data realization in this probability-one event, the triangular-array argument in Proposition~\ref{sec3:prop:weaklimit} gives the Gaussian bootstrap error limit. The Taylor expansion in Appendix~\ref{sec3:app:bridge}, now at the pilot baseline, gives \eqref{sec3:eq:bootlaw}. The subsequence characterization of convergence in probability proves the assertion for the original sequence. The same positive limiting denominator $q$ gives bootstrap identification with conditional probability tending to one.

\begin{proposition}[Consistency of the bootstrap gap scale]
\label{sec3:prop:scale}
Suppose the polynomial weak-design conditions of Proposition~\ref{sec3:prop:weaklimit}, the selected-design rank condition in Assumption~\ref{sec3:ass:limit}, and Assumption~\ref{sec3:ass:pilot} hold. If $\Bboot\to\infty$ and any filtering has conditional acceptance probability tending to one, then
\begin{equation}
\sqrt M\,\ttt\to_p\tau_{j,\Sset}.
\label{sec3:eq:scaleconsistency}
\end{equation}
This includes $\tau_{j,\Sset}=0$. The conclusion holds conditional on every positive-probability event $\Shat=\Sset$ for the fixed support.
\end{proposition}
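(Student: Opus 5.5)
The argument is in three steps: the bootstrap law \eqref{sec3:eq:bootlaw}, the quantile map, and the finite-$\Bboot$ error.

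\textbf{Step 1: conditional bootstrap law.} I take \eqref{sec3:eq:bootlaw} as given from the preceding discussion. Conditional on $\mathcal D$, the law of $\sqrt M(c^\boot_{j,\Sset}-\widehat\xi^{\mathrm{pil}}_{j,\Sset})$ converges weakly, in probability, to $N(0,\tau_{j,\Sset}^2)$. The interquartile range is invariant to the location shift by $\widehat\xi^{\mathrm{pil}}_{j,\Sset}$. Write $F_M^\boot$ for the conditional distribution function of $\sqrt M c^\boot_{j,\Sset}$ after the retention filter. Retention has conditional probability tending to one in probability, so the filtered law differs from the unfiltered one by $o_p(1)$ in total variation. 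Hence the filtered law has the same Gaussian limit.

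\textbf{Step 2: from the law to the population quartiles.} For the case $\tau_{j,\Sset}>0$, the limit distribution function is continuous and strictly increasing. Weak convergence therefore implies convergence of the $1/4$ and $3/4$ quantiles of the centered law to $\pm\tau_{j,\Sset}\Phi^{-1}(3/4)$. I would run this through the same subsequence device used for \eqref{sec3:eq:bootlaw}: extract an almost surely convergent subsequence, apply the deterministic quantile-convergence lemma (van der Vaart, Lemma~21.2), and return to convergence in probability.

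For the case $\tau_{j,\Sset}=0$, the limit is a point mass at $0$. Then for every $\epsilon>0$ the conditional probability of $|\sqrt M(c^\boot-\widehat\xi^{\mathrm{pil}})|>\epsilon$ tends to zero in probability. Both population quartiles therefore lie within $\epsilon$ of zero with probability tending to one, and the scaled IQR tends to $0=\tau_{j,\Sset}$.

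\textbf{Step 3: finite $\Bboot$.} This is the main obstacle, since $\Bboot$ and $M$ grow jointly and the law of the draws changes with $M$. I would use the Dvoretzky--Kiefer--Wolfowitz inequality conditionally on $\mathcal D$. The retained draws are i.i.d.\ given $\mathcal D$, and their count is at least $\Bboot/2$ with probability tending to one. So the empirical distribution function is within $O_p(\Bboot^{-1/2})=o_p(1)$ of $F_M^\boot$ uniformly, and this bound holds regardless of $M$. Sandwiching then places the empirical quartiles between population quantiles at levels $1/4\pm\eta$ and $3/4\pm\eta$ for any fixed $\eta>0$. Step 2 applied at these nearby levels bounds them, and letting $\eta\downarrow0$ gives the result. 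The degenerate case needs no continuity, because all quantiles at levels in $(0,1)$ collapse to zero. Dividing by $2\Phi^{-1}(3/4)$ gives \eqref{sec3:eq:scaleconsistency}.

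\textbf{Conditional statement.} The pilot uses only $V_2,V_3,V_4$, and the bootstrap randomness is independent of $\mathcal A$. So all the quantities above are independent of the event $\Shat=\Sset$. Conditioning on that event, when it has positive probability, leaves their laws unchanged, exactly as in the proof of Theorem~\ref{sec3:thm:bridge}.
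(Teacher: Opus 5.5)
Your argument is correct. It shares the paper's overall structure: take \eqref{sec3:eq:bootlaw} as given, use location invariance of the IQR, control filtering and finite-$\Bboot$ sampling error, treat $\tau_{j,\Sset}=0$ separately, and get the conditional statement from the independence of $\mathcal D$ and the bootstrap randomness from $\mathcal A$. Your quantile step, however, takes a different technical route.

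The paper never forms population quantiles of the bootstrap law. Its steps are:
\begin{itemize}
\item It fixes the two points $\tau_{j,\Sset}z_p\pm\epsilon$ (or $\pm\epsilon$ when $\tau_{j,\Sset}=0$).
\item It uses \eqref{sec3:eq:bootlaw} only to place the \emph{unfiltered} conditional bootstrap distribution function strictly below and above $p$ at those points.
\item It controls the unfiltered empirical proportions there by Chebyshev, with variance at most $1/(4\Bboot)$.
\item It passes to the retained sample through the deterministic bound $1-n/\Bboot$ on the gap between filtered and unfiltered empirical distribution functions, with $n/\Bboot\to_p1$ by Markov's inequality.
\end{itemize}

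You instead handle filtering at the level of laws via a total-variation bound. You get convergence of population quantiles from van der Vaart's Lemma~21.2 and a subsequence argument. You then control the retained empirical distribution function uniformly by the Dvoretzky--Kiefer--Wolfowitz inequality, with a level margin $\eta\downarrow0$.

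Your route is distribution-free and uniform, which makes the joint growth of $M$ and $\Bboot$ transparent. It does rely on two facts you assert without justification:
\begin{itemize}
\item The retained draws are i.i.d.\ from the filtered law conditional on $\mathcal D$ and on which draws are retained, so DKW is applied conditionally on $n$.
\item $n\ge\Bboot/2$ with probability tending to one, which follows from the same Markov step the paper uses.
\end{itemize}
The paper's two-point argument is more elementary. It needs only pointwise convergence of the bootstrap distribution function at continuity points of the limit, and no convergence of quantile functions.
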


\begin{proof}[Proof of Proposition~\ref{sec3:prop:scale}]
Write $n$ for the retained count. The conditional expected fraction of discarded draws tends to zero, so Markov's inequality gives $n/\Bboot\to_p1$. Thus $n\to_p\infty$ because $\Bboot\to\infty$. On $n>0$, the empirical distribution functions before and after filtering differ by at most $1-n/\Bboot$ at every point.

For a nondegenerate limiting distribution, the argument uses the usual quantile-consistency principle \citep[Chapter~21]{vandervaart1998}; we give the details and handle the degenerate case separately. For $p\in\{1/4,3/4\}$ and $\tau=\tau_{j,\Sset}>0$, put $a_p=\tau z_p$. For any $\epsilon>0$, the limiting distribution function is strictly below $p$ at $a_p-\epsilon$ and strictly above $p$ at $a_p+\epsilon$. The unfiltered conditional bootstrap probabilities satisfy the same strict inequalities with probability tending to one by \eqref{sec3:eq:bootlaw}. Conditional on $\mathcal D$, the empirical proportions from the $\Bboot$ independent draws have variance at most $1/(4\Bboot)$. Chebyshev's inequality and the bound on the effect of filtering place the retained empirical $p$-quantile between the two points with probability tending to one.

If $\tau=0$, weak convergence is to a point mass at zero. For every $\epsilon>0$, the probabilities below $-\epsilon$ and below $+\epsilon$ tend to zero and one, respectively. The same empirical-proportion argument shows that both quartiles tend to zero. Thus the quantile conclusion remains valid in this case.

Subtracting the pilot BLA from every draw leaves the IQR unchanged. The two scaled empirical quartiles converge to $\tau z_{1/4}$ and $\tau z_{3/4}$, so their difference divided by $2z_{3/4}$ converges to $\tau$. This proves \eqref{sec3:eq:scaleconsistency}.

For each fixed support, the entire pilot and scale calculation uses only $\mathcal D$ and bootstrap randomization independent of $\mathcal A$. Its law and failure probabilities are unchanged by conditioning on any positive-probability event $\Shat=\Sset$. This proves the conditional assertion.
\end{proof}

One sufficient filtering rule retains draws with full selected and nuisance rank and absolute denominator greater than $\delta_M$, where $\delta_M\ge0$ and $\delta_M/M\to0$. If numerical rank uses the smallest-to-largest singular-value ratio, a relative tolerance tending to zero suffices. A fixed tolerance instead needs to lie strictly below the corresponding limiting selected and nonempty nuisance ratios. These conditions give acceptance probability tending to one, since normalized selected designs converge to a full-rank limit and normalized denominators converge to $q>0$. They also make the observed and pilot rank checks pass with probability tending to one: the normalized pilot design converges to $A_\Sset$, and the normalized augmented nuisance/instrument matrix converges to a column permutation of $A_\Sset$. For an observed relative-denominator tolerance $\eta_M$, $\limsup_M\eta_M<1$ suffices, since $|d_{j,\Sset}|/(\|\zp\|_2\|\wperp\|_2)\to_p1$. A fixed minimum retained count is then reached with probability tending to one. Together, these conditions satisfy the numerical-check hypothesis of Theorem~\ref{sec3:thm:bla}.

A zero IQR is not treated as failure: on identified realizations, the positive response scale still makes the combined standard error positive. In the constant-only selected model $\Sset=\{1\}$, provided its noise-free weak column is nonzero, the corrected design and instrument equal the noise-free constant column exactly, so the IV and BLA targets coincide and we set the gap scale to zero directly.

\subsection{Selected-BLA coverage and simultaneous inference}
\label{sec3:app:coverage}

\begin{proof}[Proof of Theorem~\ref{sec3:thm:bla}]
Write $G_M=\sqrt M(\cc-\bb)$ and let $U$ be the exact pivot in \eqref{sec3:eq:exactpivot}. On identified realizations,
\[
\sqrt M(\chat-\bb)=\sqrt M\,\ee U+G_M.
\]
Both $\ee$ and $G_M$ are $\mathcal H$-measurable. Conditional on $\mathcal H$, $U$ is standard normal. Integrating over $\mathcal H$ conditional on $\Shat=\Sset$ gives the characteristic function, with $i^2=-1$,
\[
\E\!\left[\exp\{itG_M-t^2M\ee^2/2\}\mid\Shat=\Sset\right]+o(1),
\qquad t\in\R.
\]
The $o(1)$ allows an arbitrary extension on nonidentification events, whose probabilities vanish. The bounded factor
$\exp(-t^2M\ee^2/2)$ converges in probability, hence in mean, to
$\exp(-t^2\kappa_{j,\Sset}^2/2)$.
The bridge limit for $G_M$ therefore gives
\[
\sqrt M(\chat-\bb)\Rightarrow
N(0,\kappa_{j,\Sset}^2+\tau_{j,\Sset}^2)
\quad\text{conditional on }\Shat=\Sset.
\]
Moreover,
$M\sss^2\to_p\kappa_{j,\Sset}^2+\tau_{j,\Sset}^2>0$
by Proposition~\ref{sec3:prop:scale}. Slutsky's theorem proves \eqref{sec3:eq:blapivot}, with any extension of the pivot to unavailable realizations. The limiting normal distribution is continuous at the critical values, which gives coverage. For $0<u<1$ and $T_M=(\chat-\bb)/\sss$, the identity $\{2\Phi(-|T_M|)\le u\}=\{|T_M|\ge z_{1-u/2}\}$ gives the $p$-value statement. The theorem's numerical-check hypothesis and the retained-count argument above make failure probabilities vanish. Reporting $\R$ and $p=1$ on those events therefore leaves these limits unchanged.
\end{proof}

The proof uses bounded characteristic functions and requires neither finite-sample second moments of the ratio nor independence between the estimated scale and the numerator. The bridge variance $\tau_{j,\Sset}^2$ is the variance of the limiting linearization, not necessarily $M\Var(\cc-\bb)$ at finite $M$.

For a nonempty selected family of size $m_\Sset=|\Sset|$, let $p_j$ be the applicable IV or BLA $p$-value at its specified null value. For multiplicity adjustment, assign $p_j=1$ whenever inference is unavailable; undefined IV targets are excluded from the true-null set.
Bonferroni testing uses $p_j\le\alpha/m_\Sset$, and simultaneous intervals use $z_{1-\alpha/(2m_\Sset)}$ in place of $z_{1-\alpha/2}$. This factor counts all selected hypotheses; any unavailable hypotheses are left unrejected. The familywise error rate (FWER) is at most $\alpha$ for the identified IV family conditional on $\mathcal H$, and asymptotically at most $\alpha$ for a fixed BLA family conditional on $\Shat=\Sset$. Simultaneous IV coverage concerns only the available intervals, an $\mathcal H$-measurable subfamily; no coverage is claimed for an undefined IV target. BLA intervals include the $\R$ fallback on failures.

For simultaneous inference, let $\Sset_0$ be the true-null indices in a tested family of size $m_\Sset$. The union bound gives
\[
\Prb\{\exists j\in\Sset_0:p_j\le\alpha/m_\Sset\mid\mathcal H\}
\le\sum_{j\in\Sset_0}\Prb\{p_j\le\alpha/m_\Sset\mid\mathcal H\}
\le|\Sset_0|\alpha/m_\Sset\le\alpha
\]
for identified IV targets. For BLA targets under the fixed-support conditional law, each summand is $\alpha/m_\Sset+o(1)$. Since $m_\Sset$ is fixed, the limit superior is at most $\alpha$. Applying the union bound over available IV intervals, or over all selected BLA intervals, gives simultaneous coverage at least $1-\alpha$ in the corresponding exact or asymptotic sense. No independence across coordinate $p$-values is needed.

For testing this same selected family, Holm's step-down procedure \citep{holm1979} is an alternative. Order the $p$-values as $p_{(1)}\le\cdots\le p_{(m_\Sset)}$, and reject in sequence while $p_{(k)}\le\alpha/(m_\Sset-k+1)$, stopping at the first failed comparison.

If the family contains $m_0>0$ true nulls, its first rejected true null must have $p_j\le\alpha/m_0$. A union bound over those $m_0$ indices gives error probability at most $\alpha$ conditional on $\mathcal H$ for identified IV targets, and $\alpha+o(1)$ conditional on $\Shat=\Sset$ for a fixed BLA family. If $m_0=0$, a familywise error is impossible.

The fixed-support argument also permits pooling coverage over a fixed finite collection of eligible selected supports, provided the conditions hold for each support and each interval is assessed against that support's own target. The largest fixed-support approximation error then tends to zero, so any probability-weighted average does as well. A Type~I error experiment with null value zero additionally requires a zero target on every support included in that null experiment.
\subsection{Inference and bootstrap algorithms}
\label{sec3:app:algorithms}

Algorithm~\ref{sec3:alg:inference} combines independent selection, IV inference and BLA inference; Algorithm~\ref{sec3:alg:bootstrap} estimates the gap scale required by the last step. Both use known measurement variance, fixed thinning fractions and a prespecified polynomial library. The observation times are equally spaced, with $\Delta t=(t_M-t_1)/(M-1)$. We write $\Phi$ for the standard normal distribution function, $z_p=\Phi^{-1}(p)$, and $c_0\in\R$ for a null coefficient value. The vector $\mathbf1_M$ contains $M$ ones.

\begin{algorithm}[!htbp]
\caption[PSI-SINDy: selection and coefficient inference]{\phantomsection\label{sec3:alg:inference}PSI-SINDy: selection and coefficient inference}
\small
\begin{algorithmic}[1]
\Require Observations $\xobs=(x_1,\ldots,x_M)^\top$ at equally spaced times $t_1,\ldots,t_M$; known variance $\sigma_X^2$; weights $\psi_k$; library $\theta_j(u)=u^{j-1}$, $j\in[P]$; selection rule; fractions $a_i>0$ summing to one; level $\alpha$; bootstrap settings.
\Ensure Selected support, coefficient estimates, coefficientwise IV and BLA intervals and $p$-value functions of $c_0$, and availability indicators.
\State Independently of $\xobs$, draw mutually independent $g_i\sim N(0,a_i\sigma_X^2I_M)$, $i=1,\ldots,4$; set $g=\sum_i g_i$.
\State Set $V_i=\xobs+(g_i-a_i g)/a_i$ and $s_i=\sigma_X/\sqrt{a_i}$.
\State Set $(D_M)_{km}=-\psi'_k(t_m)\Delta t$ and $(H_M)_{km}=\psi_k(t_m)\Delta t$.
\State Apply the prespecified selection and tuning rule to $V_1$, using only $V_1$ and known quantities for any preprocessing; obtain $\Sset$.
\State \textbf{if} $\Sset$ is empty, \textbf{return} no coefficient hypotheses.
\State Set $Q=(s_2^2D_MD_M^\top)^{-1/2}$, $C_M=QD_M$, $B_M=QH_M$; require positive-definite response covariance.
\State Use $\mathcal C_{0,s}(v)=1$, $\mathcal C_{1,s}(v)=v$ and
  $\mathcal C_{d+1,s}(v)=v\mathcal C_{d,s}(v)-ds^2\mathcal C_{d-1,s}(v)$.
\State Set $\mathcal C_s(V)_{mj}=\mathcal C_{j-1,s}(V_m)$ and
  $\ytest=C_MV_2$, $\Wt=B_M\mathcal C_{s_3}(V_3)$, $\Zt=B_M\mathcal C_{s_4}(V_4)$.
\State Obtain gap scales $\widehat t_{j,\Sset}$ and their availability indicators from Algorithm~\ref{sec3:alg:bootstrap}.
\For{each $j\in\Sset$}
  \State Set $N=\Sset\setminus\{j\}$, $R=I_K-\Wt_N\Wt_N^+$, $z=R\Zt_{:,j}$, $d_{j,\Sset}=z^\top\Wt_{:,j}$.
  \If{the prescribed observed rank or denominator checks fail}
    \State Mark IV inference unavailable; report $I^{\mathrm{BLA}}_{j,\Sset}=\R$, $p^{\mathrm{BLA}}_{j,\Sset}=1$; continue.
  \EndIf
  \State Set $\widehat c=z^\top\ytest/d_{j,\Sset}$ and $e=\|z\|_2/|d_{j,\Sset}|$.
  \State Report $I^{\mathrm{IV}}_{j,\Sset}=[\widehat c-z_{1-\alpha/2}e,\widehat c+z_{1-\alpha/2}e]$ and
  $p^{\mathrm{IV}}_{j,\Sset}(c_0)=2\Phi(-|\widehat c-c_0|/e)$.
  \If{the gap scale is available}
    \State Set $\widehat s=\sqrt{e^2+\widehat t_{j,\Sset}^{\,2}}$.
    \State Report $I^{\mathrm{BLA}}_{j,\Sset}=[\widehat c-z_{1-\alpha/2}\widehat s,\widehat c+z_{1-\alpha/2}\widehat s]$ and
    $p^{\mathrm{BLA}}_{j,\Sset}(c_0)=2\Phi(-|\widehat c-c_0|/\widehat s)$.
  \Else
    \State Report $I^{\mathrm{BLA}}_{j,\Sset}=\R$ and $p^{\mathrm{BLA}}_{j,\Sset}=1$.
  \EndIf
\EndFor
\end{algorithmic}
\end{algorithm}

\begin{algorithm}[!htbp]
\caption[Bootstrap estimation of the IV--BLA gap scale]{\phantomsection\label{sec3:alg:bootstrap}Bootstrap estimation of the IV--BLA gap scale}
\small
\begin{algorithmic}[1]
\Require Inference views {$V_2$, $V_3$, and $V_4$} at times $t_1,\ldots,t_M$; polynomial library $\theta_j(u)=u^{j-1}$, $j\in[P]$; fixed selected support $\Sset$; fractions $a_i$ and standard deviations $s_3,s_4$; maps $B_M,C_M$; prespecified local-cubic bandwidth $h_M$; draw count $\Bboot$; fixed minimum retained count $n_{\min}\ge2$; rank and denominator thresholds.
\Ensure Nonnegative gap scales $\widehat t_{j,\Sset}$, or an unavailable indicator.
\If{$\Sset=\{1\}$ contains only the constant feature}
  \If{$B_M\mathbf1_M\ne0$}
    \State \Return $\widehat t_{1,\{1\}}=0$.
  \Else
    \State Mark the gap scale unavailable and \textbf{return}.
  \EndIf
\EndIf
\State Set $\overline V=(a_2V_2+a_3V_3+a_4V_4)/(1-a_1)$.
\State Fit the specified local cubic smoother to $\overline V$; obtain $\xpilot$ and set $\widehat{\bar{\boldsymbol y}}=C_M\xpilot$.
\State Form the pilot design without added measurement noise, $\widehat{\bar W}=B_M\Theta(\xpilot)$, where $\Theta(\xpilot)_{mj}=\widehat x_m^{\,j-1}$.
\State \textbf{if} $\widehat{\bar W}_{\Sset}$ fails the prescribed rank check, mark all gap scales unavailable and \textbf{return}.
\State Initialize an empty list $\mathcal L_j$ for each $j\in\Sset$.
\For{$\ell=1,\ldots,\Bboot$}
  \State Independently draw $V_3^\boot\sim N(\xpilot,s_3^2I_M)$ and $V_4^\boot\sim N(\xpilot,s_4^2I_M)$.
  \State Correct each degree using $\mathcal C_{0,s}=1$, $\mathcal C_{1,s}(v)=v$ and
  $\mathcal C_{d+1,s}(v)=v\mathcal C_{d,s}(v)-ds^2\mathcal C_{d-1,s}(v)$.
  \State Form $\Wt^\boot=B_M\mathcal C_{s_3}(V_3^\boot)$ and
  $\Zt^\boot=B_M\mathcal C_{s_4}(V_4^\boot)$, with feature $j$ using degree $j-1$.
  \For{each $j\in\Sset$}
    \State Set $N=\Sset\setminus\{j\}$, $R^\boot=I_K-\Wt_N^\boot(\Wt_N^\boot)^+$,
    $z^\boot=R^\boot\Zt^\boot_{:,j}$, $d_{j,\Sset}^\boot=(z^\boot)^\top\Wt^\boot_{:,j}$.
    \If{the selected and nuisance rank checks pass and $|d_{j,\Sset}^\boot|>\delta_M$}
      \State Append $(z^\boot)^\top\widehat{\bar{\boldsymbol y}}/d_{j,\Sset}^\boot$ to $\mathcal L_j$.
    \EndIf
  \EndFor
\EndFor
\For{each $j\in\Sset$}
  \If{$|\mathcal L_j|\ge n_{\min}$}
    \State Set $\widehat t_{j,\Sset}=(\widehat Q_j(3/4)-\widehat Q_j(1/4))/(2z_{3/4})$, where $\widehat Q_j(p)$ is the linearly interpolated empirical quantile at position $1+(n-1)p$ in the sorted retained sample, with $n=|\mathcal L_j|$; retain a zero value.
  \Else
    \State Mark the gap scale unavailable.
  \EndIf
\EndFor
\end{algorithmic}
\end{algorithm}
For an empty nuisance set, the column-space projection is zero and the residualizer is $I_K$; its rank check passes. Observed IV rank and denominator checks use only the corrected design and instrument. The rank checks require full column rank of $\Wt_\Sset$ and $[\Wt_N,\Zt_{:,j}]$ for each $j\in\Sset$, with $N=\Sset\setminus\{j\}$. For a prespecified tolerance $0\le\eta_M<1$, the observed denominator check requires finite $d_{j,\Sset}$, a finite positive product $\|\zp\|_2\|\wperp\|_2$, and $|d_{j,\Sset}|>\eta_M\|\zp\|_2\|\wperp\|_2$. Pilot and bootstrap checks may use the inference views, but failure of a gap-scale check does not invalidate an otherwise available IV interval. The asymptotic threshold conditions are stated in Appendix~\ref{sec3:app:bootstrap}. All bootstrap draws use fresh randomness independent of selection. Algorithm~\ref{sec3:alg:inference} returns coefficientwise intervals and tests; simultaneous inference requires the separate adjustment in Section~\ref{sec3:bla}.
\subsection{Calibration conditional on feature inclusion}
\label{full:app:inclusion}
Let $\mathfrak M$ be a fixed finite family containing every possible selected support, fix a feature index $j$, and set $\mathfrak M_j=\{\Sset\in\mathfrak M:j\in \Sset\}$ and $\mathcal J_{M,j}=\{j\in\Shat\}$. Each support retains its own target $\xi^\star_{j,\Sset}$; pooling does not make those coefficients numerically identical.
\begin{appendixproposition}[Inclusion-conditional calibration]
\label{full:prop:inclusion}
Assume $\Prb(\mathcal J_{M,j})>0$. No positive lower bound on the inclusion probability is required. If every support in $\mathfrak M_j$ has almost-sure IV identification, the true-value IV $p$-value in \eqref{sec3:eq:ivci} is uniform conditional on $\mathcal J_{M,j}$. Assigning $p=1$ when numerical checks make IV inference unavailable gives a superuniform law instead. Conditional also on IV availability, the law remains exactly uniform whenever that conditioning event has positive probability. If the fixed-support BLA conditions of Theorem~\ref{sec3:thm:bla} hold for every support in this finite family, then the corresponding BLA interval has limiting coverage $1-\alpha$ and, for $0<u<1$,
\[
\Prb\{p^{\mathrm{BLA}}_{j,\Shat}(\xi^\star_{j,\Shat})\leq u\mid\mathcal J_{M,j}\}\longrightarrow u.
\]
\end{appendixproposition}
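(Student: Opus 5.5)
The plan is to decompose the event $\mathcal J_{M,j}$ over the fixed finite family $\mathfrak M_j$ and then average the fixed-support guarantees already established. Write
\[
\Prb\{\,\cdot\mid\mathcal J_{M,j}\}=\sum_{\Sset\in\mathfrak M_j}w_{M,\Sset}\,\Prb\{\,\cdot\mid\Shat=\Sset\},\qquad w_{M,\Sset}=\frac{\Prb(\Shat=\Sset)}{\Prb(\mathcal J_{M,j})}.
\]
The sum runs only over supports with $\Prb(\Shat=\Sset)>0$. The weights are nonnegative, sum to one, and may vary with $M$ arbitrarily. Because the conclusion is a convex combination, no lower bound on any selection or inclusion probability is needed.

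For the IV part, fix $\Sset\in\mathfrak M_j$ and use Theorem~\ref{sec3:thm:exact}. Conditional on $\mathcal H$, the $p$-value $p^{\mathrm{IV}}_{j,\Sset}(\cc)$ is exactly uniform on identified realizations. Since $\{\Shat=\Sset\}\in\mathcal A\subseteq\mathcal H$, the tower property shows that uniformity also holds conditional on $\Shat=\Sset$ when identification is almost sure. Averaging with the weights $w_{M,\Sset}$ then gives exact uniformity conditional on $\mathcal J_{M,j}$.

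Numerical checks depend only on $\mathcal H$. Let $\mathcal K\in\mathcal H$ be the availability event.
\begin{itemize}
\item On $\mathcal K$, the conditional law given $\mathcal H$ is still uniform.
\item On $\mathcal K^c$, the reported value is $p=1$.
\end{itemize}
Hence $\Prb\{p\le u\mid\mathcal J_{M,j}\}=u\,\Prb(\mathcal K\mid\mathcal J_{M,j})\le u$, which is superuniform. Conditioning additionally on $\mathcal K\cap\mathcal J_{M,j}$ with positive probability restores exact uniformity, by the same tower argument applied to the $\mathcal H$-measurable event $\mathcal K\cap\{\Shat=\Sset\}$.

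For the BLA part, apply Theorem~\ref{sec3:thm:bla} to each $\Sset\in\mathfrak M_j$ for which the selection probability is eventually positive. This gives $\Prb\{p^{\mathrm{BLA}}_{j,\Sset}(\bb)\le u\mid\Shat=\Sset\}\to u$ and the analogous coverage limit $1-\alpha$. The step I expect to need the most care is that the theorem is stated for supports with $\Prb(\Shat=\Sset)>0$ along the whole sequence, whereas here the weights may make some supports have zero or vanishing probability for some $M$.

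To handle this, I would bound the error uniformly in the weights:
\[
\Bigl|\Prb\{p\le u\mid\mathcal J_{M,j}\}-u\Bigr|\le\max_{\Sset\in\mathfrak M_j}\Bigl|\Prb\{p\le u\mid\Shat=\Sset\}-u\Bigr|,
\]
where the maximum ranges over supports with positive probability at $M$. It then suffices that each fixed-support error tends to zero along the indices $M$ where that support has positive probability. The proof of Theorem~\ref{sec3:thm:bridge} provides this, because the relevant laws are functions of $\mathcal D$ and independent bootstrap randomness, and these laws do not depend on the conditioning event. Concretely, the fixed-support conditional law equals an unconditional law that does not involve selection, so its convergence holds along the full sequence whatever the selection probabilities are.

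Because $\mathfrak M_j$ is finite, the maximum of finitely many vanishing errors tends to zero. This yields both the limiting coverage $1-\alpha$ and the $p$-value limit $u$ in the statement, as in the pooling remark at the end of Appendix~\ref{sec3:app:coverage}.
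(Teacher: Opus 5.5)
Your proposal is correct and takes essentially the same route as the paper. You weight by $\Prb(\Shat=\Sset\mid\mathcal J_{M,j})$ over the finite family, and you apply the tower property with the $\mathcal H$-measurable availability event for the IV claims. For BLA you use the paper's device: each fixed-support statistic is built only from the inference views and independent bootstrap randomness, so its conditional law equals a selection-free law with error $\epsilon_M(\Sset)\to0$, and the pooled error is at most $\max_{\Sset}|\epsilon_M(\Sset)|$. One small citation fix: the independence you need is that of the whole BLA pipeline ($V_2,V_3,V_4$ and the bootstrap) from $\mathcal A$, as used in the proofs of Theorem~\ref{sec3:thm:bla} and Proposition~\ref{sec3:prop:scale}, not only in the bridge proof.
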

\begin{proof}
The weights $w_M(\Sset)=\Prb(\Shat=\Sset\mid\mathcal J_{M,j})$ sum to one. Integrating the exact IV law within each support and then summing with these weights gives $\sum_{\Sset\in\mathfrak M_j}w_M(\Sset)u=u$. Write $\mathcal E_{M,j}$ for IV availability. Since $\mathcal E_{M,j}\in\mathcal H$, the reported true-value $p$-value satisfies, for $0<u<1$,
\[
\Prb\{p^{\mathrm{IV,rep}}_{j,\Shat}\le u\mid\mathcal J_{M,j}\}
=u\Prb(\mathcal E_{M,j}\mid\mathcal J_{M,j})\le u.
\]
Integrating the conditional uniform law over $\mathcal J_{M,j}\cap\mathcal E_{M,j}$ proves the availability-conditional assertion.

For BLA inference, construct the hypothetical statistic for each deterministic support using only the inference views and independent bootstrap randomness. It is independent of the selection information. Its true-value rejection probability is $u+\epsilon_M(\Sset)$, where $\epsilon_M(\Sset)\to0$ by the fixed-support theorem. For every support event of positive probability, conditioning on that event leaves this law unchanged. Therefore the absolute error of the pooled rejection probability is bounded by
\[
\sum_{\Sset\in\mathfrak M_j}w_M(\Sset)|\epsilon_M(\Sset)|
\leq\max_{\Sset\in\mathfrak M_j}|\epsilon_M(\Sset)|\longrightarrow0.
\]
The last step uses finiteness of the family. The same maximum argument makes the probabilities of unavailable BLA inference vanish. Test inversion gives the interval statement.
\end{proof}
For a pooled reference-null experiment, the specified value must equal the current true BLA separately for every possible selected support. Testing another trajectory's reference coefficient measures power when that reference value differs from the current support-specific BLA. Proposition~\ref{full:prop:inclusion} justifies pooling under its assumptions; pooled empirical calibration alone cannot establish calibration in a rare support.

\subsection{Comparator calibration and model-count Bonferroni inference}
\label{full:app:comparator}
The comparator in Section~\ref{sec4:design} replaces the testing response by $Y^{(1)}=C_MV_1$. Fix a deterministic support $\Sset$ and coefficient $j\in\Sset$, set $N=\Sset\setminus\{j\}$, and define $z=(I_K-\Wt_N\Wt_N^+)\Zt_{:,j}$, $d=z^\top\Wt_{:,j}$ and $\boldsymbol\ell=z/d$ on identified realizations. The response $Y^{(1)}$ is Gaussian and independent of $\mathcal F=\sigma(V_3,V_4)$. Conditional on $\mathcal F$, the contrast is fixed, and \[ \widehat c^{\mathrm N}_{j,\Sset}=\boldsymbol\ell^\top Y^{(1)},\qquad c^\star_{j,\Sset}=\boldsymbol\ell^\top\ybar,\qquad e^{\mathrm N}_{j,\Sset}=(s_1/s_2)\|\boldsymbol\ell\|_2. \]
The factor $s_1/s_2$ follows from $s_2^2C_MC_M^\top=I_K$. The naive comparator of \citet[Section~6]{matsukawa2025} likewise uses a classical Gaussian test without conditioning on selection.

\begin{appendixproposition}[Fixed-support distributions of the comparator statistics]
\label{full:prop:comparator}
For deterministic $\Sset$, $(\widehat c^{\mathrm N}_{j,\Sset}-c^\star_{j,\Sset})/e^{\mathrm N}_{j,\Sset}$ is standard normal conditional on $\mathcal F$ on identified realizations. Under the fixed-support bridge and bootstrap assumptions,
\[
\frac{\widehat c^{\mathrm N}_{j,\Sset}-\xi^\star_{j,\Sset}}
{\sqrt{(e^{\mathrm N}_{j,\Sset})^2+\widehat t_{j,\Sset}^2}}\Rightarrow N(0,1).
\]
These statements do not condition on the event $\Shat=\Sset$.
\end{appendixproposition}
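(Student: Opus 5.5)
The first statement is the comparator analogue of Theorem~\ref{sec3:thm:exact}, and I will prove it the same way. By Lemma~\ref{sec3:lem:views}, $V_1\sim N(\xstar,s_1^2I_M)$ is independent of $(V_3,V_4)$. Hence $Y^{(1)}=C_MV_1$ is Gaussian with mean $C_M\xstar=\ybar$ and covariance $s_1^2C_MC_M^\top=(s_1^2/s_2^2)I_K$, and it is independent of $\mathcal F$. The last equality uses $s_2^2C_MC_M^\top=I_K$ from \eqref{sec3:eq:whitening}.

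On identified realizations the contrast $\boldsymbol\ell=z/d$ is $\mathcal F$-measurable. Conditionally on $\mathcal F$, $\widehat c^{\mathrm N}_{j,\Sset}-c^\star_{j,\Sset}=\boldsymbol\ell^\top(Y^{(1)}-\ybar)$ is therefore a fixed linear functional of a centered Gaussian vector, with variance $(s_1/s_2)^2\|\boldsymbol\ell\|_2^2=(e^{\mathrm N}_{j,\Sset})^2$. Here I use $\Mt z=z$ to identify $c^\star_{j,\Sset}$ with $\boldsymbol\ell^\top\ybar$, as in Appendix~\ref{sec3:app:exact}. Dividing gives the conditional standard normal law. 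No selection event enters, because $\Sset$ is deterministic and nothing is conditioned on $\Shat$.

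For the second statement I follow the proof of Theorem~\ref{sec3:thm:bla} but drop the selection conditioning. Write
\[
\sqrt M(\widehat c^{\mathrm N}_{j,\Sset}-\xi^\star_{j,\Sset})=\sqrt M e^{\mathrm N}_{j,\Sset}U^{\mathrm N}+G_M,
\]
where $U^{\mathrm N}$ is the pivot just obtained and $G_M=\sqrt M(\cc-\bb)$ is $\mathcal F$-measurable. Theorem~\ref{sec3:thm:bridge} gives $G_M\Rightarrow N(0,\tau_{j,\Sset}^2)$ and $M\ee^2\to_p1/q$. Since $e^{\mathrm N}_{j,\Sset}=(s_1/s_2)\ee$, it follows that $M(e^{\mathrm N}_{j,\Sset})^2\to_p(s_1/s_2)^2/q$, and nonidentification has vanishing probability.

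Conditioning on $\mathcal F$ and integrating the Gaussian characteristic function of $U^{\mathrm N}$ gives the limit $E[\exp\{itG_M-t^2M(e^{\mathrm N})^2/2\}]+o(1)$. The bounded factor converges in mean, so
\[
\sqrt M(\widehat c^{\mathrm N}_{j,\Sset}-\xi^\star_{j,\Sset})\Rightarrow N\bigl(0,(s_1/s_2)^2/q+\tau_{j,\Sset}^2\bigr).
\]
Proposition~\ref{sec3:prop:scale} gives $\sqrt M\ttt\to_p\tau_{j,\Sset}$. Hence $M\{(e^{\mathrm N})^2+\ttt^2\}$ converges in probability to the same positive variance, and Slutsky's theorem finishes the argument.

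The only point needing care is that $\ttt$ is built from the pilot, which is formed from $V_2,V_3,V_4$. It is therefore not $\mathcal F$-measurable and may depend on $V_2$. This is harmless, since the argument above needs only consistency of the scale, not independence from the numerator. Also, $V_1$ does not enter the pilot, so $Y^{(1)}$ is independent of $(V_2,V_3,V_4)$; I can condition on $\sigma(V_2,V_3,V_4)$ instead of $\mathcal F$ if that is cleaner. I expect the main obstacle to be checking that the consistency of Proposition~\ref{sec3:prop:scale} holds unconditionally, i.e.\ for a deterministic support with no selection event. This is immediate from its proof, which uses only $\mathcal D$ and fresh bootstrap randomness.
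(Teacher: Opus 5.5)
Your proposal is correct and follows essentially the same route as the paper. For the exact law, you apply the fixed contrast $\boldsymbol\ell$ to $Y^{(1)}\sim N(\bar{\boldsymbol y}^\star,(s_1/s_2)^2I_K)$, which is independent of $\mathcal F$. For the asymptotic law, you split the error into response error plus $G_M$, run the conditional characteristic-function argument with $M(e^{\mathrm N}_{j,\mathcal S})^2\to_p(s_1/s_2)^2/q$, and finish with scale consistency and Slutsky, correctly noting that the estimated scale may depend on the other views. The only addition in the paper is an explicit remark that numerical-check failures, not just nonidentification, have vanishing probability under the threshold conditions of Appendix~\ref{sec3:app:bootstrap}; your argument implicitly absorbs this into the consistency of the gap scale.
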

\begin{proof}
Independence of $V_1$ and $(V_3,V_4)$ gives the first law by applying the fixed contrast $\boldsymbol\ell$ to $Y^{(1)}\sim N(\ybar,(s_1/s_2)^2I_K)$. For the second, decompose the scaled estimation error into the response error and $G_M=\sqrt M(c^\star_{j,\Sset}-\xi^\star_{j,\Sset})$. Conditioning first on $\mathcal F$ and then integrating gives the characteristic-function argument in Appendix~\ref{sec3:app:coverage}, with $M(e^{\mathrm N}_{j,\Sset})^2\to_p(s_1/s_2)^2/q$. The bridge contributes limiting variance $\tau_{j,\Sset}^2$. Scale consistency and Slutsky's theorem complete the proof, even though the estimated scale can depend on the other views. Under the stated assumptions, identification failure has vanishing probability. Failures of the numerical checks also have vanishing probability when the checks satisfy the conditions in Appendix~\ref{sec3:app:bootstrap}.
\end{proof}
For equal allocations, $e^{\mathrm N}_{j,\Sset}=\ee$ and both methods use the same BLA scale. Their estimates differ through the response view.

For the primary quadratic comparison, the Naive estimator therefore has the ratio form
\[
\widehat c^{\mathrm N}_{3,\Sset}
=\frac{(\widetilde z^\perp_{3,\Sset})^\top C_MV_1}{d_{3,\Sset}}.
\]
It retains the corrected design and instrument, and equal thinning fractions give the selection and testing responses the same covariance. For target type $a\in\{\mathrm{IV},\mathrm{BLA}\}$, its two-sided $p$-value at null value $b\in\R$ is
\begin{equation}
p^{\mathrm N,a}_{\Sset}(b)=2\Phi\!\left(-\frac{|\widehat c^{\mathrm N}_{3,\Sset}-b|}{s^a_{3,\Sset}}\right),
\label{sec4:eq:naive}
\end{equation}
where $s^{\mathrm{IV}}_{3,\Sset}=e_{3,\Sset}$ and $s^{\mathrm{BLA}}_{3,\Sset}=\widehat s_{3,\Sset}$.

The argument below applies these fixed-support laws to the finite family of possible selected models. The resulting model-count protection averages over selection. It differs from Bonferroni or Holm adjustment of several PSI-SINDy hypotheses within one selected support, as proved in Appendix~\ref{sec3:app:coverage}.

\paragraph{Error control of the model-count Bonferroni procedure.}
\label{sec4:app:bon}

Model-count Bonferroni uses
\begin{equation}
p^{\mathrm{Bon},a}_{\Sset}(b)=\min\{1,4p^{\mathrm N,a}_{\Sset}(b)\},
\label{sec4:eq:bon}
\end{equation}
counting the four supports containing the intercept and quadratic, with the linear and cubic terms optional. Appendix~\ref{full:app:comparison} also assesses multiplier eight.

For this comparison, the support family is
\[
\mathfrak M_3=\bigl\{\{1,3\},\{1,2,3\},\{1,3,4\},\{1,2,3,4\}\bigr\}.
\]
For each deterministic $\Sset\in\mathfrak M_3$, Proposition~\ref{full:prop:comparator} gives $\Prb\{p^{\mathrm N,\mathrm{IV}}_\Sset(c^\star_{3,\Sset})\le u\mid\mathcal F\}=u$ on identified realizations, where $\mathcal F=\sigma(V_3,V_4)$ and $0<u<1$.
For an unidentified hypothetical model, the rejection event is false without evaluating an undefined target. With this convention, its conditional rejection probability is at most $u$.

Although the selected support depends on $V_1$, a rejection at its true realized target is contained in the union of the four fixed-model rejection events. Consequently,
\begin{equation}
\begin{split}
&\Prb\!\left\{3\in\Shat,\
p^{\mathrm{Bon},\mathrm{IV}}_{\Shat}(c^\star_{3,\Shat})\le\alpha\mid\mathcal F\right\}\\
&\quad\le\sum_{\Sset\in\mathfrak M_3}
\Prb\!\left\{p^{\mathrm N,\mathrm{IV}}_\Sset(c^\star_{3,\Sset})\le\alpha/4\mid\mathcal F\right\}
\le\alpha.
\end{split}
\label{sec4:eq:bonproof}
\end{equation}
This bound averages over selection and response noise conditional on $\mathcal F$. It does not divide by the probability of selecting a particular support or the quadratic term. Thus it does not imply a bound of $\alpha$ for the conditional rates plotted in Section~4.

For the BLA comparison, the corresponding statement is asymptotic. If the fixed-support rank, pilot, and scale conditions of Theorem~\ref{sec3:thm:bridge} and Proposition~\ref{sec3:prop:scale} hold for every model in $\mathfrak M_3$, then each fixed-model Naive statistic, centered at its BLA and divided by its total scale, converges to a standard normal distribution. Proposition~\ref{full:prop:comparator} combines the independent Gaussian response error with the bridge limit and scale consistency. Summing the four fixed-model error probabilities gives
\begin{equation}
\limsup_{M\to\infty}\Prb\!\left\{3\in\Shat,\
p^{\mathrm{Bon},\mathrm{BLA}}_{\Shat}(\xi^\star_{3,\Shat})\le\alpha\right\}\le\alpha.
\label{sec4:eq:bonbla}
\end{equation}
This asymptotic bound averages over selection. It provides neither support-conditional BLA control nor a power ordering relative to PSI-SINDy.

\section{Simulation protocols and results}
\label{sec4:appendix}\label{sec4:app:protocol}
This appendix documents the primary selected-BLA study, supporting realized-IV calibration, and a separate comparison with coefficient-bootstrap uncertainty for Section~\ref{sec4:experiments}; Appendix~\ref{full:app:simulations} gives the bridge approximation diagnostics.
The primary study, support-conditioned study, and additional simulation studies use separate cohorts. The following sections specify the assessment events, numerical conventions and Monte Carlo uncertainty for the reported results.

\subsection{Primary simulation protocol}
The primary study generates 5,000 independent datasets in each of eight cells: the reference trajectory at $M\in\{200,800,1600,3200\}$ and the four positive signals of Section~\ref{sec4:design}. The scalar library is $(1,x,x^2,x^3)$, the interval is $[0,8]$, and the observation grid is $t_m=8(m-1)/(M-1)$. Twelve weak weights are
\begin{equation}
\psi_k(t)=\left[1-\{(t-c_k)/0.75\}^2\right]_+^4,
\qquad c_k=0.8+(k-1)(7.2-0.8)/11.
\label{sec4:eq:weights}
\end{equation}
Here $[u]_+=\max(u,0)$. Whitening uses the known response covariance, so $s_2^2C_MC_M^\top=I_K$. All four thinning allocations are equal. Selection uses only $V_1$: least-squares refitting on retained corrected columns, threshold $\lambda_M=0.02\sqrt{800/M}$ in original coefficient units, mandatory intercept, and at most 20 iterations. A separate selection-only pilot used 1,000 null datasets at each $M$ and inspected support counts only. The threshold allows assessment of quadratic-term selection under the reference trajectory; we do not claim that it is optimal for model recovery.

The trajectory pilot fits local cubic polynomials to $(V_2+V_3+V_4)/3$, using windows of 21, 61, 105 and 183 observations at the four observation counts. Boundary values use the first or last full window. Bootstrap budgets are 799, 1,199, 1,599 and 2,399; all positive-signal cells use 1,199. Each draw holds the selected support fixed, regenerates corrected design and instrument, and recomputes the nuisance projector. The estimated gap scale is the interquartile range of the bootstrap IV ratios divided by $2\Phi^{-1}(3/4)$.

The smallest-to-largest eigenvalue ratio of the response covariance must exceed $10^{-12}$, and matrix rank is assessed with relative singular-value tolerance $10^{-11}$. The primary implementation checks nuisance rank, a nonzero residualized instrument, and a finite nonzero denominator. The absolute bootstrap denominators exceed $10^{-12}/s_2^2$ in fully whitened units; at least 100 usable draws are required. The equivalent covariance-shape whitening uses cutoff $10^{-12}$ and retains $s_2$ in the response scale; coefficients, scales and ratios are invariant to that common rescaling. All 30,437 selected primary focal cases were identified, had positive estimated gap scales and retained every requested draw. In the primary simulations, IV inference is available whenever the observed design and instrument checks pass. BLA inference additionally requires a finite positive estimated gap scale; otherwise, the confidence interval is $\mathbb R$ and the $p$-value is $1$. Algorithm~\ref{sec3:alg:inference} permits a zero estimated gap scale and checks full selected rank, augmented instrument rank and a relative denominator tolerance, as specified in Appendix~\ref{sec3:app:algorithms}.

The accompanying code supplement provides a reproducibility manifest mapping each reported figure and table to its implementation, configuration, random-number settings, and regeneration command. Its selected-support inference routine separates IV availability from gap-scale availability and permits a zero gap scale, as in Algorithm~\ref{sec3:alg:inference}.

\subsection{Assessment denominators and Monte Carlo uncertainty}
\label{sec4:app:counts}\label{scope:app:records}
In tables and method subscripts, PSI denotes PSI-SINDy and Bon. (or Bon) denotes Bonferroni.
For replication $r$, let $\Shat_r$ denote the selected support, $A_r$ indicate focal inclusion and $R_{r,v}$ rejection by method $v$. Set $R_{r,v}=0$ when the focal term is absent. With $N_A=\sum_r A_r$ and $R_{\rm MC}$ generated datasets,
\begin{equation}
\widehat p_{\rm cond,v}=\frac{\sum_r A_rR_{r,v}}{N_A},\qquad
\widehat p_{\rm joint,v}=\frac{\sum_r A_rR_{r,v}}{R_{\rm MC}}.
\label{sec4:eq:rates}
\end{equation}
Conditional rates are undefined at $N_A=0$. Summaries conditional on a selected support instead use $N_{\Sset}=\sum_r\mathbf1_{\{\Shat_r=\Sset\}}$. Calibration evaluates the test at each support's true BLA coefficient, whereas reference-null rejection tests the reference trajectory's coefficient within that same support. The calibration and reference-null values agree only when the reference null is true. IV checks evaluate each identified realized target separately. Unavailable BLA inference returns no rejection and the confidence interval $\R$; such intervals remain in the coverage assessment when the target is assessable. For conditional IV rates, the sum and denominator are restricted to focal selections with available IV inference; joint rates retain $R_{\rm MC}$ as their denominator. Nonunique latent BLAs are excluded from BLA assessments, and excluded cases are counted separately. Finite-width medians exclude whole-line intervals, whose frequency must also be reported.

Rates are reported with pointwise 95\% Wilson intervals \citep{wilson1927}. These quantify Monte Carlo error, not coefficient uncertainty or simultaneous error over the displayed grid. For paired method differences, set $D_r=R_{r,\rm PSI}-R_{r,\rm Bon}$ on the selected datasets. For $N_A>1$, define
\[
\overline D=\frac{1}{N_A}\sum_{r:A_r=1}D_r,\qquad
s_D^2=\frac{1}{N_A-1}\sum_{r:A_r=1}(D_r-\overline D)^2.
\]
With $z_{\rm MC}=\Phi^{-1}(0.975)$, the approximate paired interval is $\overline D\pm z_{\rm MC}s_D/\sqrt{N_A}$; it is undefined for $N_A\le1$. Replications, not coefficients within a replication, are the independent sampling units.

\subsection{Primary numerical results}
\label{sec4:iv}\label{sec4:main-calibration}
Table~\ref{emp:tab:primary} reports primary calibration at each target and median coefficient-interval widths. Rejection at the true target and coverage are complementary events; their Wilson intervals transform accordingly. For PSI-SINDy, all pointwise Monte Carlo intervals for IV and BLA coverage in these four primary cells contain 95\%. This evaluates finite-sample calibration; the exact-IV and asymptotic BLA guarantees are established separately in Section~\ref{sec3:method}. Table~\ref{sec4:tab:power} reports selected-BLA reference-null power and the corresponding joint selection-and-rejection frequencies.

\begin{table}[!htb]\setlength{\tabcolsep}{4pt}
\caption{Primary quadratic calibration, conditional on selection. Both panels report rejection percentages with pointwise 95\% Wilson intervals at the stated true target. Widths are median coefficient-interval widths. Bonferroni uses multiplier four.}
\label{emp:tab:primary}\label{sec4:tab:ivcal}\label{sec4:tab:blacal}\label{sec4:tab:coverage}
\begin{center}\small
\begin{tabular}{rrccc}\toprule
& & \multicolumn{3}{c}{Selected-BLA target}\\
\cmidrule(lr){3-5}
$M$ & $N_A$ & Naive & Bonferroni & PSI-SINDy\\\midrule
200&3425&7.36 [6.53,8.28]&1.84 [1.44,2.35]&4.41 [3.77,5.15]\\
800&3360&7.86 [6.99,8.82]&2.02 [1.60,2.56]&5.39 [4.67,6.20]\\
1600&3423&7.36 [6.53,8.29]&1.61 [1.24,2.09]&4.91 [4.23,5.68]\\
3200&3452&8.69 [7.80,9.68]&1.74 [1.35,2.23]&5.27 [4.58,6.07]\\
\midrule
& & \multicolumn{3}{c}{Realized-IV target}\\
\cmidrule(lr){3-5}
$M$ & $N_A$ & Naive & Bonferroni & PSI-SINDy\\\midrule
200&3425&7.85 [7.00,8.80]&1.64 [1.26,2.12]&5.05 [4.37,5.84]\\
800&3360&7.98 [7.11,8.94]&2.05 [1.63,2.59]&5.57 [4.84,6.39]\\
1600&3423&7.04 [6.23,7.95]&1.75 [1.36,2.25]&4.94 [4.26,5.71]\\
3200&3452&7.94 [7.08,8.89]&1.51 [1.15,1.97]&4.92 [4.25,5.70]\\
\bottomrule\end{tabular}
\par\smallskip
\begin{tabular}{rcccc}\toprule
$M$&200&800&1600&3200\\\midrule
PSI median IV width&0.3690&0.1841&0.1301&0.0921\\
PSI median BLA width&0.3846&0.1913&0.1352&0.0957\\\bottomrule
\end{tabular}
\end{center}
\end{table}

\begin{table}[!htbp]
\caption{Selected-BLA power. The Bonferroni and PSI-SINDy cells give rejection counts, conditional power percentages and pointwise 95\% Monte Carlo intervals, with denominator $N_A$. The last two columns give joint selection-and-rejection percentages with denominator 5,000.}\label{sec4:tab:power}
\begin{center}\small
\setlength{\tabcolsep}{5pt}
\renewcommand{\arraystretch}{1.12}
\begin{tabular}{rrcccc}
\toprule
$\gamma$ & $N_A$ & Bonferroni & PSI-SINDy & Bon. joint & PSI joint \\
\midrule
0.02 & 3540 & \shortstack{101\\2.85 [2.35, 3.45]} & \shortstack{239\\6.75 [5.97, 7.63]} & 2.02 & 4.78 \\
0.04 & 3901 & \shortstack{210\\5.38 [4.72, 6.14]} & \shortstack{460\\11.79 [10.82, 12.84]} & 4.20 & 9.20 \\
0.08 & 4440 & \shortstack{718\\16.17 [15.12, 17.28]} & \shortstack{1280\\28.83 [27.52, 30.18]} & 14.36 & 25.60 \\
0.16 & 4896 & \shortstack{2306\\47.10 [45.70, 48.50]} & \shortstack{3175\\64.85 [63.50, 66.17]} & 46.12 & 63.50 \\
\bottomrule
\end{tabular}
\end{center}
\end{table}

Primary PSI-minus-Bonferroni paired power differences are 3.90, 6.41, 12.66 and 17.75 percentage points, respectively. These differences use the selected denominators; the marginal guarantee for model-count protection is stated in Appendix~\ref{sec4:app:bon}. At $M=200$ under the reference trajectory, the two selections of $\Sset=\{1,3,4\}$ have quadratic BLA approximately $0.13977$, not zero; the reference-null definition retains their correct targets. The pooled results predominantly assess supports containing the linear term and do not establish precise calibration for rare supports.

\subsection{Comparison with coefficient-bootstrap methods}
\label{app:external-comparison}

\paragraph{Purpose and setup.}
This supplementary comparison examines whether coefficient-bootstrap intervals provide calibrated tests and coverage for a dynamical term reported after selection. The evaluation target is the structural quadratic coefficient $\gamma$ in \eqref{sec4:eq:ode}, rather than the selected finite-grid BLA used in the primary study. We use $\gamma=\delta=0$, $M=800$, and independent Gaussian measurement errors with known $\sigma_X=0.06$. The library is $(1,x,x^2,x^3)$, and PSI-SINDy retains the primary study's weak weights and equal thinning allocations. Unlike the primary selection protocol, PSI selection always retains $1$ and $x$, with threshold 0.02 for the remaining terms. Thus the PSI-selected support contains all nonzero dynamical terms in this benchmark.

\paragraph{Compared procedures.}
The Ensemble-SINDy data-bagging adapters use cubic Savitzky--Golay smoothing, finite-difference derivatives, paired time-row resampling, and sparse least-squares refits \citep{fasel2022}. SG5 uses the five-point smoothing window from the official uncertainty-quantification example; the longer-SG adapter uses a prespecified 61-point window. Both use a sparse-fit threshold of 0.02 and the full candidate library without mandatory terms, ridge regularization, or library bagging. Their nominal 95\% coefficient intervals are empirical percentile intervals of all 999 bootstrap draws, including zero coefficients.

The residual-bootstrap adapter uses the longer-SG preprocessing and the PSI-selected support. Motivated by \citet{gao2023}, it constructs studentized residual-bootstrap intervals from 999 draws, with centered residuals, resampling-pool scaling $\sqrt{n/(n-q)}$, and residual-variance divisor $n-q$, where $n=M$ is the number of time rows and $q$ is the fitted dimension. These preprocessing, support, and interval conventions are explicit adaptations. External adapters receive the original noisy trajectory. PSI-SINDy uses its BLA interval with 1,199 bootstrap draws to estimate the IV--BLA gap scale. This structural-target assessment is empirical: in this benchmark, finite-grid quadrature error can make the selected BLA coefficient differ from $\gamma$.

\paragraph{Assessment.}
All methods are evaluated on the same 1,382 of 2,000 paired datasets for which PSI selects $x^2$. Ensemble fits need not select the same support as PSI. We reject the structural null $H_0:\gamma=0$ when the nominal 95\% coefficient interval excludes zero, and assess coverage by whether that interval contains $\gamma=0$. Both rates use the common selected denominator. At this null, coverage is therefore the complement of rejection. All intervals in this comparison are finite. Pointwise 95\% Wilson intervals quantify Monte Carlo uncertainty in the rejection rates.

\begin{table}[!htbp]
\setlength{\tabcolsep}{5pt}
\caption{Focal structural-null comparison at $M=800$, $\sigma_X=0.06$, and $\gamma=\delta=0$. All methods are assessed on the same 1,382 datasets, out of 2,000, in which PSI selects the quadratic term. Type~I error, structural-coefficient coverage, and pointwise 95\% Wilson Monte Carlo intervals are expressed as percentages. Monte Carlo intervals refer to Type~I error. Width is the median finite coefficient-interval width.}
\label{tab:external-focal}
\begin{center}\small
\begin{tabular}{@{}lrrrr@{}}\toprule
Method & Type~I error & \shortstack{95\% MC interval\\for Type~I error} & \shortstack{Structural\\coverage} & \shortstack{Median\\width}\\\midrule
PSI-SINDy & 4.70 & [3.71, 5.95] & 95.30 & 0.1914\\
Ensemble, SG5 & 0.00 & [0.00, 0.28] & 100.00 & 2.9573\\
Ensemble, longer SG & 8.10 & [6.78, 9.66] & 91.90 & 0.1675\\
Residual, matched support & 20.48 & [18.43, 22.69] & 79.52 & 0.1676\\
\bottomrule\end{tabular}
\end{center}
\end{table}

\paragraph{Interpretation.}
In Table~\ref{tab:external-focal}, PSI-SINDy's observed rejection is compatible with the nominal 5\% level, while its intervals are substantially shorter than those of the conservative SG5 ensemble. The longer-SG ensemble and matched residual adapter produce narrower intervals but undercover in this setting. These results illustrate why coefficient variability should be assessed through both calibration and interval width.

Performance varies across settings. For example, at $M=200$ and $\sigma_X=0.12$, the longer-SG ensemble has 5.00\% null rejection and median width 0.5707, compared with 3.45\% and 0.8086 for PSI-SINDy. The accompanying code reproduces the complete null-grid results and exact-BLA sensitivity analyses. The comparison concerns the specified application adapters and structural target; it neither establishes a uniform ordering nor tests the cited methods' theoretical guarantees outside their assumptions.
\FloatBarrier

\section{Bridge diagnostics and supplementary inference results}
\label{full:app:simulations}\label{sec4:app:bridge}
\subsection{Support-conditioned calibration and familywise error}
\label{app:support-calibration}
The supplementary cohort has 28,000 independent datasets, 2,000 per setting, under
$\dot x=-0.25-0.35x+\gamma x^2+\delta\sin(2x)$, $x(0)=1.2$, on $[0,8]$.
Eight calibration cells use $\gamma=0.20$, $\sigma_X=0.06$, $\delta\in\{0,0.05\}$ and $M\in\{200,800,1600,3200\}$. The system labels Polynomial and Sine-augmented denote $\delta=0$ and $\delta=0.05$, respectively. Five further signals are $\gamma\in\{0,0.05,0.10,0.15,0.30\}$ at $M=800$, $\delta=0$, and one additional ablation cell has $\sigma_X=0.12$. The $\gamma=0.20,M=800$ cell is reused for power and ablation, not counted twice. Selection retains the intercept and uses STLSQ threshold 0.18 in original coefficient units. Weak weights and pilot windows match the primary protocol; bootstrap counts are 399, 799, 1,199 and 1,599. In fourteen datasets the selected model was constant-only; these use the exact zero gap scale. All 72,807 selected-coordinate assessments are retained, with no BLA fallback, bootstrap rank failure or discarded draw.

The prespecified focal groups assess the slope: $j=2$, $\Sset=\{1,2,3\}$ for Polynomial and $\Sset=\{1,2\}$ for Sine-augmented. Table~\ref{tab:expandedprimary} retains all eight groups. To assess familywise error, we test every selected coordinate at its own true BLA and record whether any null hypothesis is rejected in each dataset. In this all-true-null evaluation, Holm and Bonferroni have identical any-rejection events; simultaneous Bonferroni coverage is their complement. These are within-selected-family adjustments, distinct from the model-count comparator.

Table~\ref{tab:expandedprimary} reports selected-BLA slope coverage and median interval width for these exact-support events. Polynomial coverage is 92.92\%--94.70\%, and sine-augmented coverage is 94.50\%--95.65\%. At $M=200$, the polynomial group has 420/452 coverage, with pointwise MC interval [90.18\%, 94.94\%]; the other seven intervals contain 95\%. Coverage need not improve monotonically over a finite grid, and these results do not establish the asymptotic limit. From $M=200$ to $3200$, median widths decrease from 0.11091 to 0.02769 for the polynomial group and from 0.07656 to 0.01908 for the sine-augmented group. Width comparisons across the groups concern different selected targets. For the sine-augmented group at $M=200$, the familywise rejection rate is 6.98\% (47/673; pointwise 95\% Monte Carlo interval [5.29\%, 9.16\%]), indicating a finite-sample departure from the nominal 5\% level. These intervals are not adjusted for comparisons across the displayed groups.

\begin{table}[!htb]\setlength{\tabcolsep}{4pt}
\caption{All prespecified support-conditioned groups. Coverage and FWER entries are percentages with pointwise 95\% Wilson intervals. $n=N_\Sset$; median width is in coefficient units. Simultaneous Bonferroni coverage is $100-\mathrm{FWER}$.}
\label{tab:expandedprimary}\label{presentation:tab:coverage-details}\label{tab:expandedfamilies}
\begin{center}\small
\begin{tabular}{llrcrrc}\toprule
System&$M$&$n$&Slope BLA coverage&Width&Family events&FWER\\\midrule
Polynomial&200&452&92.92 [90.18,94.94]&0.11091&14&3.10 [1.85,5.13]\\
Polynomial&800&812&94.70 [92.94,96.04]&0.05536&32&3.94 [2.81,5.51]\\
Polynomial&1600&1046&94.55 [93.00,95.77]&0.03917&45&4.30 [3.23,5.71]\\
Polynomial&3200&1234&94.17 [92.72,95.34]&0.02769&40&3.24 [2.39,4.38]\\
Sine-augmented&200&673&94.50 [92.51,95.99]&0.07656&47&6.98 [5.29,9.16]\\
Sine-augmented&800&1096&94.62 [93.12,95.80]&0.03818&54&4.93 [3.80,6.37]\\
Sine-augmented&1600&1219&95.65 [94.36,96.66]&0.02699&50&4.10 [3.12,5.37]\\
Sine-augmented&3200&1290&94.57 [93.20,95.68]&0.01908&69&5.35 [4.25,6.71]\\\bottomrule
\end{tabular}
\end{center}
\end{table}

\subsection{Bridge approximation and bootstrap scale accuracy}
\label{app:bridge-accuracy}
For $A_M=\bar W^\star/\sqrt M$, $b_M=\bar{\boldsymbol y}^\star/\sqrt M$, a full-rank selected design, and $N=\Sset\setminus\{j\}$, set
\begin{equation}
\begin{aligned}
\beta_M&=A_{M,\Sset}^{+}b_M,&r_M&=b_M-A_{M,\Sset}\beta_M,\\
R_M&=I_K-A_{M,N}A_{M,N}^{+},&v_M&=R_MA_{M,:,j},\\
q_M&=v_M^\top v_M,&\boldsymbol\pi_M&=A_{M,N}^{+}A_{M,:,j}.
\end{aligned}\label{sec4:eq:grid}
\end{equation}
Here $\beta_M$ is the selected finite-grid BLA. With realized feature errors $E_3=\widetilde W-\bar W^\star$ and $E_4=\widetilde Z-\bar W^\star$, the first-order scaled gap is
\begin{equation}
L_{M,j,\Sset}=\{r_M^\top(E_{4,:,j}-E_{3,N}\boldsymbol\pi_M)-v_M^\top E_{3,\Sset}\beta_M\}/q_M.
\label{sec4:eq:LM}
\end{equation}
The term involving $E_{3,N}\boldsymbol\pi_M$ retains nuisance-projector uncertainty. We calculate $\tau_{M,j,\Sset}^2=\Var(L_{M,j,\Sset})$ using the known trajectory and exact corrected-feature covariance, solely for assessment. It is not the variance of the nonlinear IV ratio. The normalized remainder and scale ratio are
\begin{equation}
\frac{|\sqrt M(c^\star_{j,\Sset}-\xi^\star_{j,\Sset})-L_{M,j,\Sset}|}{\tau_{M,j,\Sset}},
\qquad \frac{\sqrt M\widehat t_{j,\Sset}}{\tau_{M,j,\Sset}}.
\label{sec4:eq:diagnostics}
\end{equation}
All oracle scales used in these diagnostics are positive. Figure~\ref{full:fig:bridge} assesses the complete first-order expansion in the primary quadratic cohort using \eqref{sec4:eq:diagnostics}. From $M=200$ to $3200$, the median normalized remainder decreases from 0.0607 to 0.0156 and the median estimated-to-oracle scale ratio changes from 1.0195 to 1.0002. The oracle scale is the standard deviation of the scaled first-order term; finite-sample IV-ratio moments are not required. The reported intervals use the estimated inference-view pilot, never the oracle trajectory. These median diagnostics assess approximation and scale accuracy, whereas the coverage results additionally assess tail calibration. Table~\ref{tab:expandedcomponents} gives complementary support-conditioned diagnostics: a scale ratio near one does not alone establish tail accuracy, absence of bias, or nominal coverage.

\begin{figure}[!htb]
\centering\includegraphics[width=0.64\linewidth]{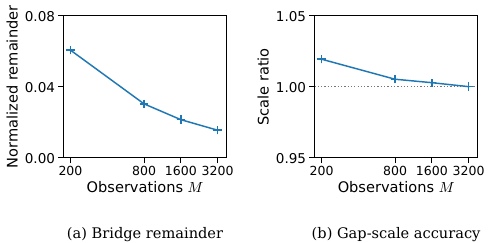}
\caption{Primary IV--BLA bridge accuracy conditional on selecting $x^2$, with $\gamma=\delta=0$. (a) Median absolute first-order approximation error divided by the oracle scale. (b) Median estimated-to-oracle gap-scale ratio; reference one. In \eqref{sec4:eq:diagnostics}, the oracle scale is the standard deviation of the scaled first-order gap. Both $M$ axes are logarithmic. Table~\ref{tab:expandedcomponents} gives complementary support-conditioned diagnostics.}
\label{full:fig:bridge}
\end{figure}

\begin{table}[!htb]

\caption{IV and bridge diagnostics for the same slope and exact-support groups as Table~\ref{tab:expandedprimary}, with the same assessment counts. Brackets give pointwise 95\% Wilson Monte Carlo intervals for IV rejection. Here $\widehat t=\widehat t_{j,\Sset}$ and $\tau=\tau_{M,j,\Sset}$ denote the estimated gap scale and oracle standard deviation of the scaled first-order gap for the stated coordinate and support. The scale ratio is the median $\sqrt M\widehat t/\tau$; the remainder is the median absolute scaled linearization error divided by $\tau$. SD denotes the empirical standard deviation of the studentized BLA statistic.}\label{tab:expandedcomponents}
\begin{center}\footnotesize
\begin{tabular}{lrrcrr}\toprule
System & $M$ & IV rejection (\%) & Scale ratio & Remainder & SD \\\midrule
Polynomial & 200 & 6.19 [4.32, 8.81] & 0.986 & 0.0267 & 1.005 \\
Polynomial & 800 & 5.42 [4.06, 7.20] & 0.996 & 0.0154 & 1.010 \\
Polynomial & 1600 & 6.12 [4.82, 7.74] & 0.999 & 0.0103 & 1.003 \\
Polynomial & 3200 & 4.86 [3.80, 6.21] & 0.998 & 0.0074 & 1.008 \\
Sine-augmented & 200 & 5.35 [3.89, 7.32] & 0.998 & 0.0145 & 1.005 \\
Sine-augmented & 800 & 5.02 [3.88, 6.48] & 0.999 & 0.0076 & 1.002 \\
Sine-augmented & 1600 & 4.02 [3.05, 5.27] & 1.000 & 0.0055 & 0.971 \\
Sine-augmented & 3200 & 5.66 [4.52, 7.06] & 0.998 & 0.0037 & 1.031 \\
\bottomrule\end{tabular}
\end{center}
\end{table}

Table~\ref{tab:bridge-regimes} compares the primary regime with the two calibration failures highlighted in Section~\ref{sec4:experiments}. Both failures have much larger normalized remainders. The median estimated-to-oracle scale ratio exceeds one for the pendulum, so median scale underestimation does not explain both failures. These oracle diagnostics identify limitations of the first-order approximation; they neither establish a unique cause of undercoverage nor provide a calibration check from observed data alone. The pendulum calculation applies the same algebraic expansion equationwise, without asserting the scalar asymptotic theorem for multistate observations.

\begin{table}[!htb]\setlength{\tabcolsep}{5pt}
\caption{Bridge diagnostics in the primary regime and two cases of undercoverage. Each row conditions on inclusion of the stated term and retains each support's own target. $n$ is the selected count; remainder and scale ratio are medians of \eqref{sec4:eq:diagnostics}. Primary and sine rows use $\sigma_X=0.06$; the sine drift is $\dot x=-2\sin x$. The pendulum row uses its second equation and $\sigma_X=0.05$. Coverage is expressed as a percentage.}
\label{tab:bridge-regimes}
\begin{center}\small
\begin{tabular}{llrrcrr}\toprule
Study & Term & $M$ & $n$ & BLA coverage & Remainder & Scale ratio\\\midrule
Primary & $x^2$ & 200 & 3425 & 95.59 & 0.0607 & 1.0195\\
Primary & $x^2$ & 3200 & 3452 & 94.73 & 0.0156 & 1.0002\\
Sine drift & $x^2$ & 800 & 2989 & 87.82 & 0.8341 & 0.9232\\
Pendulum & $xy^2$ & 1600 & 275 & 85.82 & 0.9562 & 1.4048\\
\bottomrule\end{tabular}
\end{center}
\end{table}

\subsection{Component ablations and sensitivity analyses}
The matched ablation in the main text uses common observations, selected support, views and bootstrap randomness. Table~\ref{tab:paired-ablation-uncertainty} reports all twelve paired component contrasts for the six noise-level--coordinate groups. At $\sigma_X=0.12$, removing feature correction raises intercept rejection by 5.53 percentage points [2.73,8.33]. At $\sigma_X=0.06$, removing the gap scale raises intercept rejection by 2.09 percentage points [1.11,3.08]. In the selected basis $\{1,x,x^2\}$, Gaussian feature correction changes the quadratic feature only by a constant shift, which is absorbed by the included intercept. Thus the slope and quadratic decisions are unchanged in this ablation; this mechanism is specific to the stated basis and does not make feature correction generally irrelevant.

\begin{table}[!htb]
\setlength{\tabcolsep}{3pt}
\caption{Matched component assessment at $M=800$ and $\Sset=\{1,2,3\}$. Here $N_\Sset$ is the number of datasets selecting exactly $\Sset$. The full-method column reports rejection at the true selected BLA (nominal 5\%). The last columns give ablated-minus-full changes in percentage points with approximate paired 95\% Monte Carlo intervals. Shared measurements, selection, views and bootstrap randomness give matched comparisons. A dagger marks unchanged decisions because, for the selected basis $\{1,x,x^2\}$, the intercept absorbs the quadratic feature correction.}\label{tab:paired-ablation-uncertainty}
\begin{center}\small
\begin{tabular}{rrrccc}\toprule
$\sigma_X$ & $j$ & $N_\Sset$ & \shortstack{Full-method\\rejection (\%)} & \shortstack{Without gap\\scale} & \shortstack{Without feature\\correction}\\\midrule
0.06 & 1 & 812 & 5.42 & \shortstack{$+2.09$\\$[1.11,3.08]$} & \shortstack{$+0.86$\\$[-0.72,2.44]$} \\
0.06 & 2 & 812 & 5.30 & \shortstack{$+2.22$\\$[1.20,3.23]$} & $0^{\dagger}$ \\
0.06 & 3 & 812 & 5.42 & \shortstack{$+0.86$\\$[0.23,1.50]$} & $0^{\dagger}$ \\
0.12 & 1 & 452 & 5.53 & \shortstack{$+1.77$\\$[0.55,2.99]$} & \shortstack{$+5.53$\\$[2.73,8.33]$} \\
0.12 & 2 & 452 & 6.86 & \shortstack{$+1.99$\\$[0.70,3.28]$} & $0^{\dagger}$ \\
0.12 & 3 & 452 & 5.53 & \shortstack{$+1.33$\\$[0.27,2.38]$} & $0^{\dagger}$ \\
\bottomrule\end{tabular}
\end{center}
\end{table}

\paragraph{Additional component comparisons.}\label{app:component-comparisons}
A separate six-cell component study uses 30,000 datasets, with $M\in\{200,800\}$, $\sigma_X\in\{0.06,0.12,0.24\}$, and 5,000 datasets per cell. Table~\ref{tab:ext-components} reports intercept results at $M=800$ for the two larger noise scales. The study compares full PSI with variants that omit feature correction, share the response and design views, or omit the separate instrument; raw, whitened, corrected and selection-view ordinary least squares (OLS); and response reuse with and without model-count adjustments. Arms share observations but can select different supports. Multiplier-four and multiplier-eight $p$-values are capped at one; the support-count variant uses eight for the mandatory intercept and four for other coordinates. The no-gap intervals retain the full method's coefficient estimate and response scale. No pointwise MC interval for full-method BLA coverage lies entirely below 95\%; no-gap coefficient intervals undercover in several cells. Oracle bridge summaries are not assigned to shared-design or no-instrument arms.
Raw OLS uses an unweighted projection target, whereas whitened methods use the covariance-weighted BLA. Shared response--design views violate response independence; using the corrected design as instrument changes the bridge assumptions. These comparisons therefore assess each arm at its own target and are distinct from the matched ablation above.

\begin{table}[!htb]\setlength{\tabcolsep}{5pt}
\caption{Additional intercept assessment at $M=800$, pooled over selected supports, $n=5000$ per noise level. Entries are percentages with pointwise 95\% Wilson intervals. OLS denotes ordinary least squares after whitening. The full and no-gap intervals target the same selected BLA.}
\label{tab:ext-components}\label{tab:ext-gap-coverage}
\begin{center}\footnotesize
\begin{tabular}{rccc}\toprule
&\multicolumn{3}{c}{Rejection at the true selected BLA}\\
\cmidrule(lr){2-4}
$\sigma_X$&Full&Uncorrected&OLS\\\midrule
0.12&4.76 [4.20,5.39]&6.74 [6.08,7.47]&9.64 [8.85,10.49]\\
0.24&4.46 [3.92,5.07]&5.82 [5.20,6.50]&11.98 [11.11,12.91]\\
\bottomrule\end{tabular}
\par\smallskip
\begin{tabular}{rcc}\toprule
&\multicolumn{2}{c}{Selected-BLA coverage}\\
\cmidrule(lr){2-3}
$\sigma_X$&Full&No gap\\\midrule
0.12&95.24 [94.61,95.80]&92.74 [91.99,93.43]\\
0.24&95.54 [94.93,96.08]&92.00 [91.22,92.72]\\
\bottomrule\end{tabular}
\end{center}
\end{table}

\paragraph{Comparator sensitivity.}\label{full:app:comparison}\label{sec4:app:additional}
The fixed-threshold comparator analysis reuses 26,000 supplementary datasets, excluding the high-noise ablation cell; it is not an independent cohort. With threshold 0.18 and multiplier eight, Bonferroni has higher observed reference-null power at four of five positive signals. At $\gamma=0.10$, its count is $165/177$ (93.22\% [88.52,96.08]) versus PSI's $84/177$ (47.46\% [40.23,54.79]). At zero signal no quadratic is selected among 2,000 datasets, so conditional reference-null calibration is undefined, not zero. These comparisons neither establish equal conditional Type~I error nor isolate the changes in threshold, signal grid and multiplier. Replacing multiplier four by eight on the primary $p$-values lowers Bonferroni power from 2.85--47.10\% to 1.47--38.64\%; PSI-SINDy still has higher observed power in the primary comparison.

\paragraph{Support-specific calibration departures.}\label{app:port-support-departures}
Across the eight prespecified calibration cells, we assessed 110 support--coordinate groups and 38 selected families. Six pointwise MC intervals for empirical coverage have upper endpoints below 95\%, including the prespecified polynomial $M=200$ slope and five further groups. For example, the polynomial $M=1600$, $\Sset=\{1,2\}$ intercept has $755/810$ coverage, 93.21\% [91.27,94.75]; the rare $\Sset=\{1,2,4\}$ cubic has $69/77$, 89.61\% [80.82,94.64]. The scan is descriptive and not multiplicity-adjusted.

\section{Inference with larger candidate libraries}\label{ext:app:largerp}
This paired study distinguishes support recovery, selected-coefficient calibration, and interval informativeness. The library size $P$ counts candidate scalar functions, not state variables. The results do not establish a growing-$P$ theorem or uniform weak-identification robustness for the BLA target.
\paragraph{Paired simulation design.}
The study generates 3,000 independent datasets: 500 for each combination of
$M\in\{800,3200\}$ and the three dynamics
\[
 x'(t)=-0.25-0.35x(t)+\gamma x(t)^d,\qquad x(0)=1.2,\quad t\in[0,8].
\]
The null case has $\gamma=0$, the quadratic case has $(\gamma,d)=(0.2,2)$,
and the quartic case has $(\gamma,d)=(0.2,4)$.
Each dataset is evaluated at $P\in\{4,6,8,10,15,20\}$ with both fixed selectors,
giving 36,000 paired evaluations, rather than 36,000 independent datasets.
The scalar library is $\{1,x,\ldots,x^{P-1}\}$, so increasing $P$ also increases
polynomial degree. Measurement-noise standard deviation is $0.06$; each of the
four equally allocated independent Gaussian views has standard deviation
$s_i=0.12$. The generation seed is 2026092501. The selectors and protocol were fixed before generating these independent evaluation datasets, with no outcome-dependent stopping or switching between selectors.

\paragraph{STLSQ variants and selection criterion.}
The prespecified integrated variant (I) uses a cumulative-integral
design constructed from a fixed cubic Savitzky--Golay smoother of $V_1$.
The smoothing windows are 61 and 183 observations at $M=800$ and $3200$,
respectively, with endpoint interpolation. Hermite correction uses the exactly
propagated marginal smoothing variance. This correction does not remove bias
in the smoothed mean. Cumulative trapezoidal integration gives the feature
matrix $F$; centering $F$ and $V_1$ accounts for an unpenalized trajectory offset
during candidate generation. The matched weak-design variant (W) uses the
corrected weak $V_1$ design and weak response, with $K=12$.

For each variant, write its selection design and response as $W_1$ and $Y_1$,
and let $A_1$ be the design after normalizing each column to unit Euclidean norm.
On the current support $\Sset$, STLSQ refits
\[
 \widehat{\boldsymbol b}^{\mathrm{sel}}_{\Sset}(\lambda_{\rm ridge})
 \in\mathop{\rm argmin}_{\boldsymbol b}\bigl\{
 \|Y_1-A_{1,\Sset}\boldsymbol b\|_2^2+\lambda_{\rm ridge}\|\boldsymbol b\|_2^2\bigr\},
 \qquad
 \lambda_{\rm ridge}\in\{0\}\cup\{10^{-10+12k/64}:k=0,\ldots,64\}.
\]
It removes coefficients whose normalized magnitude is below a common threshold $\lambda_{\rm sel}$,
then refits until the support stabilizes. The drift constant is penalized in
ridge fitting but is never removed. Every computed nonnegative scalar-threshold
path interval is examined, with the same fixed ridge grid at every $P$.
There is no cardinality cap. When $P>K$, the zero-ridge full weak fit uses the
minimum-norm least-squares solution; positive ridge gives a unique refit.

Both variants choose among their own STLSQ-generated supports using
\[
 \operatorname{Score}(\Sset)=\min_{b_0,\boldsymbol b}
 \frac{\|V_1-b_0\mathbf1_M-F_{\Sset}\boldsymbol b\|_2^2}{s_1^2}
 +( |\Sset|+1 )\log M.
\]
The offset $b_0$ is distinct from the drift constant. This is a plug-in selection
score, not a likelihood-calibrated Bayesian information criterion (BIC), because $F$ is constructed from $V_1$.
Candidates are ordered by score, support size, lexicographic support, ridge strength, and the first threshold-path value, in that order. Numerically rank-deficient score designs
are assigned infinite score at relative singular-value tolerance $10^{-10}$.

\paragraph{Inference and numerical implementation.}
The other three independent views provide the testing response, corrected design
and instrument. No ridge penalty is used in the IV ratio, its uncertainty or the
selected-BLA target. The weak maps remain at $K=12$ and the existing pilot and
IQR bootstrap scale are retained, with 1,199 draws at $M=800$ and 2,399 at $M=3200$.
For numerical stability, the implementation normalizes columns before QR factorization, residualizes twice, and screens dimensionless moment denominators using only the realized design and instrument. Its relative rank tolerance is $10^{-11}$.
These numerical changes preserve the estimands in exact arithmetic, although
the implementation differs from the primary implementation.
For assessment, the noiseless selected weak-design BLA is solved by
column-normalized least squares, with relative rank tolerance $10^{-12}$.
Unassessable targets are excluded from coverage denominators and counted separately. Unavailable BLA inference returns the whole-real-line interval, which remains in coverage assessment when the target is assessable.

\paragraph{Assessment denominators.}
Exact support recovery and focal inclusion use all 500 generated datasets.
The focal term is $x^2$ in the null and quadratic cases and $x^4$ in the quartic
case. Focal coverage is conditional on its selection and on an assessable
noiseless selected-BLA target, pooled across the selected supports. It does not
assert separate calibration for every support. IV coverage is conditional on
focal selection and an identified realized-IV target. BLA calibration tests use the true finite-grid selected-BLA coefficient, without substituting a physical coefficient or an assumed zero; IV checks use the realized-IV coefficient.
Reported Monte Carlo intervals use the Wilson method described in Appendix~\ref{sec4:app:counts}. These denominators separate focal inclusion, target availability and coefficient calibration.

Table~\ref{app:tab:largerp-compact} reports the complete integrated-selector signal grids together with their selected denominators and support sizes.

\begin{table}[!htb]
\setlength{\tabcolsep}{2.5pt}
\caption{Complete quadratic and quartic grids for the integrated ridge-STLSQ selector. Size is the selected-support mean/maximum over all 500 datasets; $n$ counts focal selection ($x^2$ for quadratic dynamics and $x^4$ for quartic dynamics). Coverage is a percentage with its pointwise 95\% Wilson interval, conditional on focal selection and pooled across supports. Width is the median selected-BLA interval width on the same assessed datasets. The quartic focal term is absent from the library at $P=4$, so its coverage and width are undefined. There are no unavailable intervals or unassessable selected targets in these cells.}
\label{app:tab:largerp-compact}
\begin{center}\footnotesize
\begin{tabular}{@{}lrcrcrcrcr@{}}\toprule
& & \multicolumn{4}{c}{$M=800$} & \multicolumn{4}{c}{$M=3200$}\\
\cmidrule(lr){3-6}\cmidrule(l){7-10}
Dynamics & $P$ & Size & $n/500$ & Coverage [MC] & Width & Size & $n/500$ & Coverage [MC] & Width\\\midrule
Quadratic & 4 & 3.010/4 & 473/500 & 95.77 [93.56,97.25] & 0.1615 & 3.004/4 & 500/500 & 95.40 [93.19,96.92] & 0.0806\\
 & 6 & 3.034/4 & 434/500 & 95.85 [93.54,97.36] & 0.1614 & 3.002/4 & 495/500 & 95.56 [93.36,97.05] & 0.0806\\
 & 8 & 3.052/5 & 426/500 & 95.54 [93.14,97.13] & 0.1614 & 3.018/4 & 487/500 & 95.48 [93.26,97.00] & 0.0806\\
 & 10 & 3.054/5 & 426/500 & 95.77 [93.42,97.31] & 0.1614 & 3.014/4 & 489/500 & 95.50 [93.28,97.01] & 0.0806\\
 & 15 & 3.056/5 & 424/500 & 95.75 [93.39,97.30] & 0.1614 & 3.016/4 & 488/500 & 95.49 [93.27,97.00] & 0.0806\\
 & 20 & 3.054/5 & 426/500 & 95.54 [93.14,97.13] & 0.1614 & 3.016/4 & 488/500 & 95.49 [93.27,97.00] & 0.0806\\
\midrule
Quartic & 4 & 3.000/3 & 0/500 & --- & --- & 3.366/4 & 0/500 & --- & ---\\
 & 6 & 3.118/4 & 264/500 & 95.45 [92.22,97.38] & 0.2147 & 3.014/4 & 435/500 & 94.02 [91.39,95.89] & 0.1069\\
 & 8 & 3.080/5 & 249/500 & 95.58 [92.26,97.52] & 0.2154 & 3.010/4 & 415/500 & 94.46 [91.82,96.28] & 0.1069\\
 & 10 & 3.090/4 & 248/500 & 94.76 [91.24,96.91] & 0.2155 & 3.010/4 & 418/500 & 94.50 [91.88,96.31] & 0.1069\\
 & 15 & 3.066/5 & 237/500 & 94.09 [90.33,96.45] & 0.2154 & 3.010/4 & 415/500 & 94.46 [91.82,96.28] & 0.1069\\
 & 20 & 3.064/5 & 236/500 & 94.49 [90.80,96.75] & 0.2147 & 3.012/4 & 410/500 & 94.39 [91.72,96.23] & 0.1068\\
\bottomrule\end{tabular}
\end{center}
\end{table}

\begin{figure}[!htbp]
\centering
\includegraphics[width=.94\textwidth]{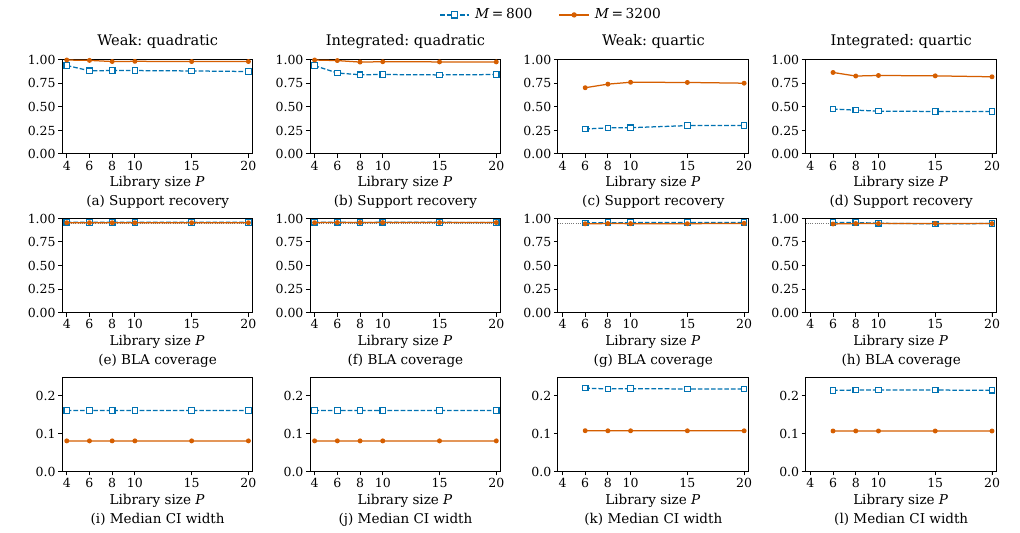}
\caption{Columns compare weak and integrated ridge-STLSQ selectors for quadratic and quartic dynamics; rows show exact support recovery, focal selected-BLA coverage, and median interval width. The focal term is $x^2$ for quadratic dynamics and $x^4$ for quartic dynamics. Both selectors and all library sizes reuse 500 datasets per dynamics/$M$ cell. Recovery uses all datasets; coverage and width condition on focal selection and pool selected supports. Dotted lines mark nominal 0.95 coverage. Quartic $P=4$ points are omitted: recovery is zero and focal inference is undefined because $x^4$ is absent. Table~\ref{app:tab:largerp-compact} gives integrated-selector counts and Monte Carlo intervals.}
\label{fig:larger-library-comparison}\label{fig:larger-library}
\end{figure}

\paragraph{Support recovery and inferential performance.}
At $M=3200$, integrated-selector quartic recovery is 83.2\%, 82.8\% and 81.8\%
at $P=10,15,20$, respectively. The corresponding focal BLA coverage is
94.5\%, 94.5\% and 94.4\%; at $P=20$ this is $387/410$, with Wilson interval
$[91.7\%,96.2\%]$ and median interval width 0.1068.
Weak-design recovery in these cells is 76.0\%, 75.8\% and 75.0\%.
Quartic support recovery remains difficult at the smaller sample size: at $M=800,P=20$, recovery is
30.0\% (W) and 45.0\% (I), whereas integrated-selector focal coverage is
$223/236=94.5\%$. Thus coverage on selected coordinates does not measure
successful discovery across all datasets. The integrated variant is not
uniformly better: quadratic recovery at $M=3200,P=20$ is 98.0\% (W) and 97.6\% (I).
For quadratic dynamics, each selector has its own $P=4$ baseline in Figure~\ref{fig:larger-library-comparison}.

There are no selection failures, unavailable selected-coordinate intervals,
unassessable selected targets or incomplete bootstrap coordinates in this cohort.
Accordingly, finite-interval and all-assessable-target coverage coincide here.
The largest selected support has six terms, although no size cap is imposed; at $P=20$ the maximum is five.
Every $P=20$ dataset has an STLSQ candidate with finite selection score
containing a term of degree 10 or above. The full candidate library can be ill-conditioned or, for the weak
variant at $P>12$, underdetermined; the selected supports need not share that
full-library rank defect.
These observations do not establish noninferiority as $P$ increases.
Across the null cells and both selectors, $x^2$ was selected in only 1--5 of the 500 datasets per cell. The corresponding zero-rejection estimates are therefore too
imprecise to establish conditional 5\% Type~I calibration.

\paragraph{Separate study with unregularized STLSQ.}\label{app:port-larger-p}
A separate study uses the dynamics in \eqref{sec4:eq:ode} with $\gamma=0.20$, $\delta=0$, $x(0)=1.2$ and $t\in[0,8]$, and uses the unregularized weak-design selector at $P\in\{4,5,6,7,8,10\}$, $M\in\{800,3200\}$, $K=12$ and $\sigma_X=0.06$, with 3,000 datasets per cell. For $M=800$ and $3200$, the thresholds are 0.02 and 0.01, and the bootstrap draw counts are 1,199 and 2,399, respectively. These $P$ values are all below $K$. Every selected target was assessable; no invalid coordinate or whole-line interval was recorded. No pointwise MC interval for full-method BLA coverage lay entirely below 95\%, but interval widths increased: at $M=800$, the quadratic median BLA width rose from 0.305 at $P=4$ to 18.3 at $P=10$, while rejection of zero fell from $2251/2974$ to $81/2988$. Selected supports approached the full library and condition numbers grew. This is distinct from exclusions caused by $|\Sset|>K$ or deficient selected rank.

\FloatBarrier
\section{Coordinatewise experiments on multistate systems}\label{ext:app:systems}
\subsection{Construction and simulation settings}
The equations below are evaluated separately with a shared candidate-function family for each system. Noise is independent across observation times and state coordinates, with Gaussian standard deviation $\sigma_X\in\{0.01,0.05\}$ in every coordinate. Four views have equal allocations $a_i=1/4$. Let $d_{\mathrm{state}}$ denote the number of state coordinates and $\mathbb N_0=\{0,1,\ldots\}$ the nonnegative integers. Let $\boldsymbol\nu\in\mathbb N_0^{d_{\mathrm{state}}}$ be a monomial exponent vector, and let $V_{i,\ell,m}$ denote the entry of view $i$ for state coordinate $\ell$ at time $t_m$. The corrected monomial feature in view $i$ is
\[
 \prod_{\ell=1}^{d_{\mathrm{state}}}
 \mathcal C_{\nu_\ell,s_i}(V_{i,\ell,m}),
 \qquad s_i^2=4\sigma_X^2,
\]
where $\mathcal C_{k,s_i}$ is the univariate corrected polynomial at the noise scale of view $i$. Independent coordinate errors imply that its expectation is the corresponding noise-free monomial. This product correction specifies the empirical multistate construction; it does not assert a multistate version of the scalar dense-grid BLA theorem.

For each equation, STLSQ uses only the first view and retains the constant. The other three views provide the testing response, corrected design, and instrument. The selected-model residualization and the combined response-plus-gap scale are applied equationwise. The observation-noise, thinning, and bootstrap random streams are equation-specific, so the experiment does not assess simultaneous inference across different equations of one noisy vector observation. The assessments below concern selected coefficients within individual equations.

All systems use $M\in\{400,800,1600\}$ equally spaced observations, with 2,000 datasets per cell and master seed 20260918. The bootstrap draw counts are 999, 1199, and 1599, respectively. The library contains every monomial up to the stated total degree, including the constant. Weak-weight radii and selection thresholds are fixed by system. The local cubic pilot uses windows of 35, 61 and 105 observations for $M=400,800,1600$, respectively. On each observation interval $[t_1,t_M]$, the $K$ weak-weight centers are equally spaced from $t_1+R+0.05$ to $t_M-R-0.05$, where $R$ is the radius listed below.

\begin{center}\small
\begin{tabular}{lrrrrrr}
\toprule
System & States & Degree & $P$ & $K$ & Radius & Threshold\\
\midrule
Van der Pol & 2 & 3 & 10 & 20 & 0.60 & 0.10\\
Lotka--Volterra & 2 & 2 & 6 & 24 & 0.75 & 0.02\\
Lorenz & 3 & 2 & 10 & 30 & 0.25 & 0.20\\
Pendulum & 2 & 3 & 10 & 30 & 0.35 & 0.10\\
\bottomrule
\end{tabular}
\end{center}
The dynamics are \begin{align*} \text{Van der Pol:}\quad&\dot x=y,\quad\dot y=-x+(1-x^2)y;\\ \text{Lotka--Volterra:}\quad&\dot x=x-0.1xy,\quad\dot y=-1.5y+0.075xy;\\ \text{Lorenz:}\quad&\dot x=10(y-x),\quad\dot y=28x-y-xz,\quad\dot z=xy-\tfrac83z;\\ \text{Pendulum:}\quad&\dot x=y,\quad\dot y=-9.81\sin x. \end{align*}
The initial states and final times are $(2,0)$ and $T=10$ for Van der Pol, $(10,5)$ and $T=15$ for Lotka--Volterra, $(-8,7,27)$ and $T=5$ for Lorenz, and $(2,0)$ and $T=10$ for the pendulum; each interval starts at $t=0$.
The pendulum's second equation is outside its finite polynomial library. Its inferential target is therefore the selected BLA, not a coefficient of an exact polynomial governing equation.

\subsection{Metrics and assessment denominators}\label{app:systems-metrics}
For each equation and term, $n$ counts datasets selecting that term and is the denominator for the rates in Table~\ref{ext:tab:systems-main}. Both full and no-gap intervals are assessed against the selected-BLA coefficient for each realized support; IV rejection is evaluated at the realized-IV target. Rejection at the true selected BLA is the complement of coverage.

Rates are reported with pointwise 95\% Wilson intervals, including rarely selected coordinates; no minimum selection count is imposed. Never-selected terms have zero selection frequency and undefined conditional rates. All 328,110 recorded selected-coordinate assessments were available and returned finite intervals: invalid, unassessable, unavailable-IV, and whole-line counts are zero. Diagnostics were recorded for all 108,000 equation-level fits from 48,000 simulation replications. Every selected coordinate retained every requested draw, with no bootstrap rank failure or invalid fit. In particular, pendulum $xy^2$ coverage is 236/275 (85.82\%) at $M=1600$ and $\sigma_X=0.05$, despite complete retention. Discarded bootstrap draws therefore do not explain this undercoverage; Table~\ref{tab:bridge-regimes} instead documents a large linearization remainder.

\begin{table}[!htb]
\setlength{\tabcolsep}{3pt}
\caption{Multistate assessment at $M=1600$, $\sigma_X=0.05$. BLA and No gap give selected-BLA coverage; IV gives rejection at the realized-IV target. Entries are percentages with pointwise 95\% Wilson intervals. $n$ counts selected datasets. VdP: Van der Pol; LV: Lotka--Volterra; parentheses identify the equation.}
\label{ext:tab:systems-main}
\begin{center}\footnotesize
\begin{tabular}{@{}llrccc@{}}\toprule
Equation & Term & $n$ & BLA & No gap & IV\\\midrule
VdP (2)&$1$&2000&94.85 [93.79,95.74]&13.85 [12.41,15.43]&4.25 [3.45,5.23]\\
VdP (2)&$x^2y$&2000&95.30 [94.28,96.14]&75.45 [73.52,77.29]&5.25 [4.36,6.32]\\
LV (1)&$xy$&2000&94.65 [93.58,95.55]&56.55 [54.37,58.71]&4.45 [3.63,5.44]\\
Lorenz (2)&$1$&2000&94.00 [92.87,94.96]&15.00 [13.50,16.63]&4.80 [3.95,5.83]\\
Lorenz (2)&$xz$&2000&98.00 [97.29,98.53]&55.50 [53.31,57.67]&4.90 [4.04,5.94]\\
Pendulum (2)&$x$&2000&90.50 [89.14,91.71]&78.75 [76.90,80.49]&4.50 [3.68,5.50]\\
Pendulum (2)&$xy^2$&275&85.82 [81.20,89.45]&60.36 [54.48,65.96]&6.18 [3.90,9.68]\\
\bottomrule\end{tabular}
\end{center}
\end{table}

\paragraph{Calibration results and limitations.}
Table~\ref{ext:tab:systems-main} reports $M=1600$, $\sigma_X=0.05$; the following comparisons summarize the six sample-size/noise cells for each system. Gap adjustment improves coverage in the displayed coordinates, but does not uniformly yield nominal BLA coverage. Across all six Lorenz cells, rejection at the true selected BLA for the second-equation intercept ranges from 6.00\% to 9.05\%, with all six pointwise Wilson intervals above 5\%; the $xz$ coefficient instead has conservative BLA coverage. For the misspecified pendulum equation, the $x$ coefficient has rejection between 6.50\% and 11.50\%, again with all six intervals above 5\%. At $M=1600$, $\sigma_X=0.05$, the pendulum $xy^2$ term has BLA coverage 85.82\% among 275 selections, compared with 60.36\% without the gap scale. Higher coverage than the no-gap variant does not establish nominal calibration.

\section{Sensitivity analyses and inference availability}
\label{app:port-experiments}\label{ext:app:port}
Table~\ref{emp:tab:grids} lists the sensitivity grids, each evaluated at level 0.05 with seed 20260918. Unless varied, the scalar library has $P=4$, $K=12$, weak radius 0.75, equal thinning allocations, and drift $\dot x=-0.25-0.35x+0.2x^2$, $x(0)=1.2$ on $[0,8]$. Coordinatewise rates use the number of datasets selecting that coordinate, pooled over supports, as denominator. Familywise rates use datasets for which all selected coefficients are assessable; all generated datasets satisfied this condition in these grids. No setting was chosen by inspecting calibration outcomes. Pointwise BLA-undercoverage and IV flags are descriptive, not multiplicity-adjusted tests over the grid.

\begin{table}[!htb]
\caption{Complete sensitivity grids. $R_{\rm MC}$ is the number of independent datasets per cell, $\sigma_{\rm work}$ is the supplied working noise standard deviation, and $c$ multiplies the drift in the signal sweep. Except for tuning, the threshold is $0.02\sqrt{800/M}$. For $M=200,800,1600,3200$, bootstrap counts are 799,1199,1599,2399, respectively.}
\label{emp:tab:grids}
\begin{center}\small
\begin{tabularx}{\linewidth}{@{}lrrX@{}}\toprule
Study&Cells&$R_{\rm MC}$&Settings varied\\\midrule
Tuning&13&4000&Threshold $0.01,0.02,0.05,0.10,0.18$; $K=8,12,16,24$; radius $0.5,0.75,1$; allocations specified below. $M=800$, $\sigma_X=0.06$.\\
Noise scale&8&4000&$\sigma_{\rm work}/\sigma_X=0.5,0.75,0.9,1,1.1,1.25,1.5,2$; $M=800$, $\sigma_X=0.06$.\\
Misspecification&27&4000&Sine drift, exponential, logistic; $M=200,800,1600$ and $\sigma_X=0.02,0.06,0.12$.\\
Misspecified signal&12&3000&Sine and exponential drifts, $c=0.25,0.5,0.75,1,1.5,2$; $M=800$, $\sigma_X=0.06$.\\
Components&6&5000&$M=200,800$ and $\sigma_X=0.06,0.12,0.24$; all arms retained.\\
Identification&12&3000&$M=200,800,3200$ and $\sigma_X=0.03,0.06,0.12,0.24$.\\\bottomrule
\end{tabularx}
\end{center}
\end{table}

\paragraph{Sensitivity to procedure settings.}\label{app:port-tuning}
One setting changes at a time from threshold 0.02, $K=12$, radius 0.75 and equal allocation. The three unequal allocations are $(0.4,0.2,0.2,0.2)$, $(0.1,0.3,0.3,0.3)$ and $(0.25,0.45,0.15,0.15)$. All cells use 1,199 bootstrap draws. No pointwise MC interval for full-method BLA coverage lies entirely below 95\%.

\paragraph{Sensitivity to the supplied noise scale.}\label{app:port-sigma-sensitivity}
An incorrect noise scale affects thinning, feature correction, selection and both uncertainty components; it does not merely resize the final interval. For twelve full-method coordinate cells, MC intervals for rejection at the true selected BLA lie above 5\%. The largest rate is 15.51\% [14.32,16.79] for $x^3$ at scale ratio 0.5 ($n=3301$). The corresponding IV rejection rate is 15.24\% [14.05,16.50]. These observations concern violation of the assumption of a known, correct noise scale and do not isolate bridge error.

\paragraph{Model misspecification and signal strength.}\label{app:port-misspec}\label{app:port-misspec-power}
Sine and exponential drifts are $\dot x=-c\sin x$, $x(0)=2$, and $\dot x=-0.5cxe^{-x}$, $x(0)=1.5$. Neither belongs to the cubic library. The logistic control, $\dot x=0.8x(1-x/2)$, $x(0)=0.2$, belongs to the library. Calibration fixes $c=1$; the signal sweep varies $c$ on the grid above. The sine results in Table~\ref{tab:bridge-regimes} are from the signal sweep at $c=2$. The calibration rates reported for both grids evaluate rejection at each selected model's true BLA coefficient; IV checks use the realized-IV target. Seven coordinate cells in the calibration grid have MC intervals for BLA rejection above 5\%, including sine, exponential and logistic cells. The largest is 6.43\% [5.70,7.24] for $x^2$ in the sine cell $M=1600$, $\sigma_X=0.12$ ($n=3922$). Five signal-sweep coordinate cells have MC intervals for BLA rejection above 5\%; the largest sine signal gives $x^2$ rejection 12.18\% [11.05,13.40] ($n=2989$), while its realized-IV check is 5.15\% [4.42,6.00]. These results show that near-nominal auxiliary-IV calibration does not ensure finite-sample selected-BLA calibration.

\paragraph{Component study.}\label{app:port-naive-ablation}
The six-cell component study is reported with its protocol and results in Appendix~\ref{app:component-comparisons}.

\paragraph{Calibration across sample sizes and noise levels.}\label{app:port-instrument}\label{ext:app:instrument}
This sample-size and noise-level grid assesses calibration and inference availability; it does not establish uniform robustness to weak identification. One grid coordinate has an MC interval for BLA rejection entirely above 5\%: $x^2$ at $M=200$, $\sigma_X=0.03$, with $6.14\%$ [5.33,7.06] ($n=2950$). Its IV rejection rate is 5.86\% [5.07,6.77], with the pointwise MC interval also above 5\%. These isolated comparisons, without adjustment across the grid, do not by themselves establish systematic miscalibration.

\paragraph{Inference availability and bootstrap diagnostics.}\label{app:port-availability}
These scalar grids and the unregularized larger-$P$ grid report no invalid coordinate, unavailable IV target, undefined selected BLA or whole-line interval. For tuning, supplied scale, misspecification, misspecified signal, identification and unregularized larger $P$, no bootstrap rank failure or dataset with fewer than 100 retained draws was recorded. This does not assert that every requested coordinate draw was retained. For multistate systems, retention diagnostics cover every equation-level fit and show no discarded draws (Appendix~\ref{app:systems-metrics}).

\section{Supercapacitor discharge analysis and noise sensitivity}
\label{full:app:real}\label{real:sec:preprocessing}\label{scope:real:estimates}
\subsection{Measurements, preprocessing and inference}
The public version-1.0 supercapacitor dataset of \citet{hanschek2026,hanschekdata} supplies all eight Vishay devices in the 50\,F group, Method 1A, Class 3 (Creative Commons Attribution 4.0; CC BY 4.0). We use time and voltage only. Measurement metadata record a discharge current of 0.60\,A. Time zero is the first supplied observation. A selected model approximates discharge rate as a function of voltage over the fitting interval. A monotone trace cannot separate voltage from unobserved history dependence, so voltage is not asserted to be a complete physical state. Nominal capacitance does not identify a known BLA coefficient.

We average ten consecutive 100\,Hz observations within $[20,180)$\,s, assigning their mean times, to obtain $M=1600$ nonoverlapping observations. The latent state sequence consists of the corresponding ten-point averages of the noise-free voltage, which need not equal that voltage evaluated at the mean times. Averaging was chosen after diagnostic inspection and the application is exploratory. For device 1, the detrended $[260,280)$\,s periodogram has a peak at about 23.25\,Hz, with 73.2\% of total power within 0.5\,Hz of that peak. Ten-point averaging attenuates this component, with amplitude response approximately 0.128 at the peak, but does not eliminate aliasing or establish independent Gaussian errors.

For fitting-window residuals $r_k$ about a 101-point local cubic smooth, let $d_k=r_{k+1}-r_k$. The working noise scale is
\[
\widehat\sigma_{\rm work}=
\frac{\operatorname{median}_k|d_k-\operatorname{median}_\ell d_\ell|}{\Phi^{-1}(0.75)\sqrt2}.
\]
The normalization corresponds to differences of independent Gaussian errors before smoothing; here smoothing, dependence and reuse of the fitting observations make it a working scale. Thus the exact-IV and BLA theorems do not establish coverage for these measured data. Let $V$ denote the processed voltage. We use scaled coordinates $u=V/(3\,\mathrm V)$ and $t_{\rm sc}=(t-20\,\mathrm s)/(100\,\mathrm s)$. Table~\ref{real:tab:protocol} summarizes the preprocessing and inference settings.

\begin{table}[!htb]

\caption{Common protocol for the averaged-data coefficient analysis. All intervals have a working-model interpretation.}\label{real:tab:protocol}
\begin{center}\small
\begin{tabularx}{\linewidth}{lX}
\toprule
Quantity & Setting\\\midrule
Processed observations & Ten-point averages assigned their mean times; 10\,Hz\\
Fitting window & $[20,180)$\,s; $M=1600$ per device\\
Library and weak equations & $(1,u,u^2,u^3)$; $K=20$ prespecified weights\\
Thinning allocation & Four equal fractions, using the estimated working scale\\
Selection & STLSQ on view 1, mandatory constant, at most 10 iterations\\
Threshold candidates & $0,0.005,0.01,0.02,0.05,0.1,0.2,0.5$\\
Pilot & Local cubic smooth of the mean of views 2--4; 105-point window\\
Bootstrap & 1,599 draws; selected support fixed; projector recomputed\\
Numerical checks & Relative rank tolerance $10^{-11}$; at least 100 usable draws\\
Coefficient intervals & Combined response and gap scale; nominal level 95\%\\
\bottomrule
\end{tabularx}
\end{center}
\end{table}

Twenty compact weak weights have centers equally spaced from 7\% to 93\% of the processed scaled interval and radius 6.5\% of its length. Equal-allocation thinning uses $\widehat\sigma_u=\widehat\sigma_{\rm work}/(3\,\mathrm V)$ and view scale $2\widehat\sigma_u$. Among supports generated on $V_1$ by the threshold grid, selection minimizes $K\log\{\operatorname{RSS}(\Sset)/K\}+|\Sset|\log K$, where $\operatorname{RSS}(\Sset)$ is the residual sum of squares of the selection-view fit. Ties are broken by smaller support, then smaller threshold. This is a selection score, not a calibrated noisy-design likelihood. All 1,599 draws were retained for every selected coordinate. The implementation uses the covariance-shape maps
\[
Q_{\rm sh}=(D_MD_M^\top)^{-1/2},\qquad B=Q_{\rm sh}H_M,\qquad C=Q_{\rm sh}D_M.
\]
Thus $CC^\top=I_K$, with $B=s_2B_M$ and $C=s_2C_M$ relative to full working-covariance whitening. The denominator cutoff is $10^{-12}$ in these coordinates, and the response scale retains $s_2=2\widehat\sigma_u$.

For degree $d=j-1$, the physical-unit coefficient and interval are obtained by multiplying dimensionless values by $(3\,\mathrm V)^{1-d}/(100\,\mathrm s)$. Its units are $\mathrm V^{1-d}\mathrm s^{-1}$. A missing term has no selected-coefficient interval and its absence is not a structural-zero test. Table~\ref{real:tab:processedcoef} illustrates the coefficient analysis for device 1. The eight device fits contain 25 selected coefficients in total. Devices are not repetitions with a known common coefficient, and the pointwise intervals are not simultaneous across terms or devices.

\begin{table}[!htb]
\setlength{\tabcolsep}{5pt}
\caption{Device 1 coefficients in scaled coordinates. Scale columns give $10^4e_j$, $10^4\widehat t_j$ and $10^4\widehat s_j$, suppressing the common selected-support subscript $\Sset$; estimates and intervals are not multiplied by $10^4$. Intervals have nominal level 95\% under the estimated working noise model, without established coverage for these observations.}
\label{real:tab:processedcoef}\label{sec4:tab:discharge}
\begin{center}\footnotesize
\begin{tabular}{@{}lrrrrr@{}}\toprule
Feature & Estimate & Response & Gap & Total & Working interval\\\midrule
$1$ & -0.520512 & 0.7403 & 0.0197 & 0.7406 & $[-0.520657,-0.520367]$\\
$u$ & 0.289392 & 1.8457 & 0.0454 & 1.8463 & $[0.289030,0.289754]$\\
$u^3$ & -0.139550 & 1.5289 & 0.0339 & 1.5292 & $[-0.139850,-0.139250]$\\
\bottomrule\end{tabular}
\end{center}
\end{table}

\subsection{Effects of covariance misspecification}\label{real:sec:covariancescope}
In this subsection, views and measurement covariances are expressed in the dimensionless state units $u=V/(3\,\mathrm V)$. With true Gaussian measurement covariance $\Sigma_{\rm obs}\in\R^{M\times M}$ and a fixed thinning covariance $\Sigma_{\rm work}\in\R^{M\times M}$, equal-allocation views satisfy
\[
\Cov(V_i,V_j)=\Sigma_{\rm obs}-\Sigma_{\rm work}\ (i\ne j),\qquad
\Var(V_i)=\Sigma_{\rm obs}+3\Sigma_{\rm work}.
\]
They are independent when $\Sigma_{\rm work}=\Sigma_{\rm obs}$. Estimating $\Sigma_{\rm work}$ from the same observations does not automatically restore the conditional Gaussian model. Let $\widetilde W^{\rm sh}$ and $\widetilde Z^{\rm sh}$ denote the corrected design and instrument formed using the covariance-shape map $B$. For a selected term $j$, set $N=\Sset\setminus\{j\}$ and define
\[
R^{\rm sh}=I_K-\widetilde W_N^{\rm sh}(\widetilde W_N^{\rm sh})^+,\qquad
z^{\rm sh}=R^{\rm sh}\widetilde Z_{:,j}^{\rm sh},\qquad
d^{\rm sh}=(z^{\rm sh})^\top\widetilde W_{:,j}^{\rm sh}.
\]
When $d^{\rm sh}\ne0$, the fixed-contrast working response variance using the response map $C$ is
\begin{equation}
(e_j^{\rm work})^2=4(z^{\rm sh})^\top C\Sigma_{\rm work}C^\top z^{\rm sh}/(d^{\rm sh})^2.
\label{real:eq:correlatedse}
\end{equation}
A plug-in covariance supplies this variance calculation, not the required view independence or BLA coverage.

\subsection{Discharge-template simulation}\label{real:sec:coverage}
For each device, let $x^{\rm ref}\in\R^{1600}$ be the fixed 101-point local cubic smooth of the processed scaled voltage $u=V/(3\,\mathrm V)$. Let $\sigma_{\rm ref}=\widehat\sigma_{\rm work}/(3\,\mathrm V)$, using the original device's working scale, and declare it to be the simulation's true marginal standard deviation. Both the reference trajectory and this noise scale are in dimensionless state units; the weak maps $B,C$ use the scaled time $t_{\rm sc}$. Each independently generated dataset has a Gaussian distribution with mean $x^{\rm ref}$ and covariance $\sigma_{\rm ref}^2I$ or $\sigma_{\rm ref}^2R_{\rm corr}$. The correlated model uses a stationary covariance from the centered residuals $r_m^{(u)}$ of the processed scaled voltage about $x^{\rm ref}$,
\[
(R_{\rm corr})_{mn}=\varrho_{|m-n|},\quad
\varrho_\ell=\left(1-\frac{\ell}{21}\right)
\frac{\sum_{m=1}^{1600-\ell}r_m^{(u)}r_{m+\ell}^{(u)}}{\sum_{m=1}^{1600}(r_m^{(u)})^2}\quad(0\le\ell\le20),
\quad \varrho_\ell=0\ (\ell>20).
\]
The taper defines a simulation model, not a verified physical sensor covariance. On a synthetic covariance example, implementation checks compared generated autocovariances with their prescribed values and the response-covariance calculation with direct matrix multiplication.

Each noise model is crossed with supplied marginal scale and scale re-estimated in each replication. All four conditions use covariance weighting based on iid working noise; supplying the marginal scale does not supply $R_{\rm corr}$. Selection, thinning, pilot fitting and 1,599 bootstrap draws are repeated. Matched scale variants share observations and seed streams, but changing the scale input affects the entire procedure. The correlated-noise supplied-scale condition is a post hoc sensitivity analysis, evaluated using the same templates, distributions and observations as its matched estimated-scale condition. Each device has 1,000 replications per noise model, giving 16,000 unique datasets and 32,000 fits across the eight devices.

With $W^{\rm ref}=B\Theta(x^{\rm ref})$, $y^{\rm ref}=Cx^{\rm ref}$, each selected support is assessed against
\[
\xi^{\rm ref}_{\Sset}=
\{(W^{\rm ref}_{\Sset})^\top W^{\rm ref}_{\Sset}\}^{-1}
(W^{\rm ref}_{\Sset})^\top y^{\rm ref}.
\]
For each device, all eight possible intercept-containing reference supports have full column rank. This is the selected projection under the common iid working metric, not a projection under the correlated generating covariance. The reference trajectory and target coefficients enter generation and assessment only.

For replication $r$, let $I_{j,r}$ denote the reported BLA interval for term $j$ in the selected support $\Shat_r$, and set
$F_r=|\Shat_r|^{-1}\sum_{j\in\Shat_r}\mathbf1_{\{\xi^{\rm ref}_{j,\Shat_r}\in I_{j,r}\}}$.
Table~\ref{real:tab:semisynthetic} reports its mean and $\operatorname{sd}(F_r)/\sqrt{1000}$, where $\operatorname{sd}$ is the sample standard deviation across replications; dependence among coefficients is thereby retained. Equal-device means use weight $1/8$ and Monte Carlo standard error (MCSE) $\{\sum_ds_{F,d}^2/1000\}^{1/2}/8$, where $s_{F,d}$ is the sample standard deviation of $F_r$ for device $d$. Paired differences use the same formula on within-replication differences. This averaged coverage is not simultaneous coverage, nor does it establish coverage for every selected support.

\begin{table}[!htb]
\setlength{\tabcolsep}{5pt}
\caption{Coverage of nominal 95\% intervals in the discharge-template experiment. All four conditions use iid working covariance weighting. Supplied scale means the true marginal standard deviation, without supplying temporal correlation. Entries are mean percentages of selected coefficient intervals containing their own finite-grid BLA targets over 1,000 replications per device, with Monte Carlo standard errors in parentheses, in percentage points. The equal-device row uses eight fixed templates. Replication-level proportions account for dependence among coefficients; this is not a simultaneous coverage summary.}
\label{real:tab:semisynthetic}
\begin{center}\small
\begin{tabular}{rcccc}\toprule
 & \multicolumn{2}{c}{Independent noise} & \multicolumn{2}{c}{Correlated noise}\\
\cmidrule(lr){2-3}\cmidrule(lr){4-5}
Device & Supplied scale & Estimated scale & Supplied scale & Estimated scale\\\midrule
1 & 94.87 (0.66) & 94.80 (0.67) & 79.90 (1.20) & 61.10 (1.47)\\
2 & 95.33 (0.65) & 95.33 (0.64) & 83.40 (1.12) & 67.43 (1.40)\\
3 & 95.67 (0.60) & 95.67 (0.60) & 82.37 (1.16) & 67.87 (1.41)\\
4 & 94.57 (0.68) & 94.57 (0.68) & 80.63 (1.19) & 62.57 (1.46)\\
5 & 95.85 (0.60) & 95.83 (0.61) & 84.30 (1.11) & 74.20 (1.34)\\
6 & 95.17 (0.65) & 95.23 (0.65) & 85.20 (1.09) & 70.70 (1.39)\\
7 & 95.50 (0.63) & 95.40 (0.64) & 80.57 (1.21) & 61.93 (1.46)\\
8 & 94.87 (0.65) & 94.77 (0.65) & 78.50 (1.24) & 62.30 (1.47)\\
\midrule
Equal-device & 95.23 (0.23) & 95.20 (0.23) & 81.86 (0.41) & 66.01 (0.50)\\
\bottomrule\end{tabular}
\end{center}
\end{table}

Independent-noise coverage estimates are compatible with the nominal level: across all 25 observed support--coordinate groups, they range from 94.2\% to 96.1\% with supplied scale and 94.2\% to 96.0\% with estimated scale, with every pointwise Wilson interval containing 95\%. Under correlated noise, all 25 upper endpoints are below 95\% for both scale inputs. The paired supplied-minus-estimated coverage difference under correlated noise is 15.85 percentage points, with a 95\% Monte Carlo interval of [15.13,16.56]. Estimated-to-true scale ratios range from 0.59 to 0.72 under correlation, versus 0.999--1.002 under independence. Supplied marginal scale leaves cross-view covariance $\sigma_{\rm ref}^2(R_{\rm corr}-I)$. All intervals were available. Within each device, selection was constant across replications and conditions; coverage therefore assesses stable selection.

\section{Computational resources and runtime}
\label{app:runtime}
We measured the computational cost of the primary simulation implementation on a MacBook Pro with an Apple M4 Pro processor (14 CPU cores), 48\,GiB of memory and macOS~27.0. The benchmark used Python~3.12.14, NumPy~2.5.3, SciPy~1.18.1 and threadpoolctl~3.7.0, with one worker and numerical-library thread limits set to one. Computation used the CPU only.

We generated 30 independent datasets for each of the five settings in Table~\ref{tab:runtime}, using the primary protocol in Appendix~\ref{sec4:appendix}: $P=4$, $K=12$, $\sigma=0.06$, equal thinning allocations and the primary study's bootstrap budgets. The benchmark assessed inference for the quadratic term and used a separate random seed, 2026100401. Before timing each setting, warm-up datasets were generated until one selected case returned a valid bootstrap; all warm-up attempts were excluded.

Pipeline time includes thinning, construction of the corrected selection design, term selection and, when the quadratic term is selected, the inference routine. The latter includes the trajectory pilot, bootstrap gap scales for the selected support and the auxiliary Naive estimate computed by the primary implementation. Timings exclude deterministic problem construction, observation generation, latent-target assessment, final test and confidence-interval arithmetic, imports and file I/O. Inference time is part of pipeline time; the two columns are not additive. The code supplement provides a standalone benchmark that generates hardware and software records and individual timing records.

\begin{table}[!htb]

\caption{Wall-clock runtime in seconds, reported as median [25th, 75th percentile]. Each setting uses 30 generated datasets; $N_A$ counts those selecting the quadratic term. Pipeline summaries include all datasets, including those that stop after selection; inference summaries use only the $N_A$ selected datasets. These early exits can produce small pipeline lower quartiles, and the different populations allow a pipeline median below the inference median.}
\label{tab:runtime}
\begin{center}\small
\begin{tabular}{rrrrll}
\toprule
$M$ & $\gamma$ & $\Bboot$ & $N_A$ & Pipeline & Inference \\
\midrule
200 & 0 & 799 & 21 & 0.0082 [0.0001, 0.0090] & 0.0088 [0.0076, 0.0091] \\
800 & 0 & 1199 & 20 & 0.0382 [0.0001, 0.0396] & 0.0391 [0.0382, 0.0398] \\
1600 & 0 & 1599 & 24 & 0.1010 [0.0987, 0.1038] & 0.1013 [0.1000, 0.1045] \\
3200 & 0 & 2399 & 19 & 0.2673 [0.0002, 0.2708] & 0.2701 [0.2680, 0.2709] \\
800 & 0.16 & 1199 & 30 & 0.0379 [0.0374, 0.0382] & 0.0378 [0.0373, 0.0381] \\
\bottomrule
\end{tabular}
\end{center}
\end{table}

All 114 selected cases returned valid inference and retained every requested bootstrap draw; the remaining 36 datasets did not select the quadratic term. Across the four reference-trajectory settings, median inference time increased from 0.0088 to 0.2701 seconds as both $M$ and $\Bboot$ increased. These measurements describe the empirical cost of the primary focal-coefficient implementation, rather than a complexity bound or the runtime of the complete experimental study.

The full simulation study used a Linux x86\_64 environment with eight workers, Python~3.12.14, NumPy~2.3.5 and SciPy~1.17.0. Table~\ref{tab:runtime} reports only the MacBook benchmark.

%% file: ai_use_statement.tex
\subsection*{AI Use Statement}

During manuscript preparation, generative AI tools assisted with paper organization, suggestions of potentially relevant references, language editing, and discussion of schematic illustration design. Generative AI tools also assisted with drafting and revising technical passages, reviewing mathematical arguments, providing feedback on methodology and experimental analyses, interpreting results, reviewing and refining existing code, and developing numerical verification scripts. The authors checked all cited references against the original sources and independently obtained the bibliographic information.

The authors reviewed and verified all AI-assisted work and take full responsibility for the final manuscript, including its text, claims, code, analyses and proofs.

%% file: manuscript.bbl
\begin{thebibliography}{39}
\providecommand{\natexlab}[1]{#1}
\providecommand{\url}[1]{\texttt{#1}}
\expandafter\ifx\csname urlstyle\endcsname\relax
  \providecommand{\doi}[1]{doi: #1}\else
  \providecommand{\doi}{doi: \begingroup \urlstyle{rm}\Url}\fi

\bibitem[Berk et~al.(2013)Berk, Brown, Buja, Zhang, and Zhao]{berk2013}
Richard Berk, Lawrence Brown, Andreas Buja, Kai Zhang, and Linda Zhao.
\newblock Valid post-selection inference.
\newblock \emph{The Annals of Statistics}, 41\penalty0 (2):\penalty0 802--837,
  2013.
\newblock \doi{10.1214/12-AOS1077}.

\bibitem[Bortz et~al.(2023)Bortz, Messenger, and Dukic]{wendy2023}
David~M. Bortz, Daniel~A. Messenger, and Vanja Dukic.
\newblock Direct estimation of parameters in {ODE} models using {WENDy}:
  Weak-form estimation of nonlinear dynamics.
\newblock \emph{Bulletin of Mathematical Biology}, 85\penalty0 (11):\penalty0
  110, 2023.
\newblock \doi{10.1007/s11538-023-01208-6}.

\bibitem[Brunel et~al.(2014)Brunel, Clairon, and {d'Alch{\'e}-Buc}]{brunel2014}
Nicolas J.-B. Brunel, Quentin Clairon, and Florence {d'Alch{\'e}-Buc}.
\newblock Parametric estimation of ordinary differential equations with
  orthogonality conditions.
\newblock \emph{Journal of the American Statistical Association}, 109\penalty0
  (505):\penalty0 173--185, 2014.
\newblock \doi{10.1080/01621459.2013.841583}.
\newblock URL \url{https://arxiv.org/abs/1410.7566}.

\bibitem[Brunton et~al.(2016)Brunton, Proctor, and Kutz]{brunton2016}
Steven~L. Brunton, Joshua~L. Proctor, and J.~Nathan Kutz.
\newblock Discovering governing equations from data by sparse identification of
  nonlinear dynamical systems.
\newblock \emph{Proceedings of the National Academy of Sciences}, 113\penalty0
  (15):\penalty0 3932--3937, 2016.
\newblock \doi{10.1073/pnas.1517384113}.

\bibitem[Chawla et~al.(2026)Chawla, Bortz, and Dukic]{chawla2026}
Abhi Chawla, David~M. Bortz, and Vanja Dukic.
\newblock Bias and coverage properties of the {WENDy-IRLS} algorithm.
\newblock In Bharath Sriraman (ed.), \emph{Handbook of Visual, Experimental and
  Computational Mathematics}, pp.\  1639--1746. Springer, Cham, 2026.
\newblock \doi{10.1007/978-3-032-16368-4_86}.
\newblock URL \url{https://link.springer.com/rwe/10.1007/978-3-032-16368-4_86}.

\bibitem[Chen \& Witten(2023)Chen and Witten]{chen2023clustering}
Yiqun~T. Chen and Daniela~M. Witten.
\newblock Selective inference for k-means clustering.
\newblock \emph{Journal of Machine Learning Research}, 24\penalty0
  (152):\penalty0 1--41, 2023.
\newblock URL \url{https://jmlr.org/papers/v24/22-0371.html}.

\bibitem[Davison \& Hinkley(1997)Davison and Hinkley]{davison1997}
A.~C. Davison and D.~V. Hinkley.
\newblock \emph{Bootstrap Methods and their Application}.
\newblock Cambridge University Press, 1997.
\newblock \doi{10.1017/cbo9780511802843}.

\bibitem[Duy \& Takeuchi(2022)Duy and Takeuchi]{le2022more}
Vo~Nguyen~Le Duy and Ichiro Takeuchi.
\newblock More powerful conditional selective inference for generalized lasso
  by parametric programming.
\newblock \emph{Journal of Machine Learning Research}, 23\penalty0
  (300):\penalty0 1--37, 2022.
\newblock URL \url{https://jmlr.org/papers/v23/21-0494.html}.

\bibitem[Fasel(2025)]{fasel2025}
Urban Fasel.
\newblock Sparse identification of nonlinear dynamics with conformal
  prediction.
\newblock arXiv preprint arXiv:2507.11739, 2025.
\newblock URL \url{https://arxiv.org/abs/2507.11739}.

\bibitem[Fasel et~al.(2022)Fasel, Kutz, Brunton, and Brunton]{fasel2022}
Urban Fasel, J.~Nathan Kutz, Bingni~W. Brunton, and Steven~L. Brunton.
\newblock Ensemble-{SINDy}: Robust sparse model discovery in the low-data,
  high-noise limit, with active learning and control.
\newblock \emph{Proceedings of the Royal Society A: Mathematical, Physical and
  Engineering Sciences}, 478\penalty0 (2260):\penalty0 20210904, 2022.
\newblock \doi{10.1098/rspa.2021.0904}.

\bibitem[Fuller(1987)]{fuller1987}
Wayne~A. Fuller.
\newblock \emph{Measurement Error Models}.
\newblock John Wiley \& Sons, 1987.
\newblock \doi{10.1002/9780470316665}.

\bibitem[Gao et~al.(2023)Gao, Fasel, Brunton, and Kutz]{gao2023}
L.~Mars Gao, Urban Fasel, Steven~L. Brunton, and J.~Nathan Kutz.
\newblock Convergence of uncertainty estimates in ensemble and {Bayesian}
  sparse model discovery.
\newblock arXiv preprint arXiv:2301.12649, 2023.
\newblock URL \url{https://arxiv.org/abs/2301.12649}.

\bibitem[Garc{\'i}a~Rasines \& Young(2023)Garc{\'i}a~Rasines and
  Young]{rasines2023}
D.~Garc{\'i}a~Rasines and G.~A. Young.
\newblock Splitting strategies for post-selection inference.
\newblock \emph{Biometrika}, 110\penalty0 (3):\penalty0 597--614, 2023.
\newblock \doi{10.1093/biomet/asac070}.

\bibitem[Golub \& Pereyra(1973)Golub and Pereyra]{golub1973}
G.~H. Golub and V.~Pereyra.
\newblock The differentiation of pseudo-inverses and nonlinear least squares
  problems whose variables separate.
\newblock \emph{SIAM Journal on Numerical Analysis}, 10\penalty0 (2):\penalty0
  413--432, 1973.
\newblock \doi{10.1137/0710036}.

\bibitem[Hanschek et~al.(2026{\natexlab{a}})Hanschek, Dusini, and
  Grbovi{\'c}]{hanschek2026}
A.~Hanschek, S.~Dusini, and P.~Grbovi{\'c}.
\newblock A standard-aligned framework for voltage-dependent supercapacitor
  characterization.
\newblock \emph{IEEE Open Journal of Industry Applications}, 7:\penalty0
  485--499, 2026{\natexlab{a}}.
\newblock \doi{10.1109/OJIA.2026.3675690}.

\bibitem[Hanschek et~al.(2026{\natexlab{b}})Hanschek, Dusini, Tatschl, Horn,
  and Grbovic]{hanschekdata}
A.~J. Hanschek, S.~Dusini, G.~Tatschl, A.~Horn, and P.~Grbovic.
\newblock Supercapacitor discharge measurements {25F} and {50F} {DUT}-sets.
\newblock Zenodo, 2026{\natexlab{b}}.
\newblock URL \url{https://doi.org/10.5281/zenodo.19221698}.
\newblock Version 1.0. Dataset.

\bibitem[Hirsh et~al.(2022)Hirsh, Barajas-Solano, and Kutz]{hirsh2022}
Seth~M. Hirsh, David~A. Barajas-Solano, and J.~Nathan Kutz.
\newblock Sparsifying priors for {Bayesian} uncertainty quantification in model
  discovery.
\newblock \emph{Royal Society Open Science}, 9\penalty0 (2):\penalty0 211823,
  2022.
\newblock \doi{10.1098/rsos.211823}.

\bibitem[Holm(1979)]{holm1979}
Sture Holm.
\newblock A simple sequentially rejective multiple test procedure.
\newblock \emph{Scandinavian Journal of Statistics}, 6\penalty0 (2):\penalty0
  65--70, 1979.
\newblock URL \url{https://www.jstor.org/stable/4615733}.

\bibitem[Kriegeskorte et~al.(2009)Kriegeskorte, Simmons, Bellgowan, and
  Baker]{kriegeskorte2009}
Nikolaus Kriegeskorte, W.~Kyle Simmons, Patrick S.~F. Bellgowan, and Chris~I.
  Baker.
\newblock Circular analysis in systems neuroscience: the dangers of double
  dipping.
\newblock \emph{Nature Neuroscience}, 12:\penalty0 535--540, 2009.
\newblock \doi{10.1038/nn.2303}.

\bibitem[Lee et~al.(2016)Lee, Sun, Sun, and Taylor]{lee2016}
Jason~D. Lee, Dennis~L. Sun, Yuekai Sun, and Jonathan~E. Taylor.
\newblock Exact post-selection inference, with application to the lasso.
\newblock \emph{The Annals of Statistics}, 44\penalty0 (3):\penalty0 907--927,
  2016.
\newblock \doi{10.1214/15-AOS1371}.

\bibitem[Leiner et~al.(2025)Leiner, Duan, Wasserman, and Ramdas]{leiner2025}
James Leiner, Boyan Duan, Larry Wasserman, and Aaditya Ramdas.
\newblock Data fission: Splitting a single data point.
\newblock \emph{Journal of the American Statistical Association}, 120\penalty0
  (549):\penalty0 135--146, 2025.
\newblock \doi{10.1080/01621459.2023.2270748}.

\bibitem[Lovell(1963)]{lovell1963}
Michael~C. Lovell.
\newblock Seasonal adjustment of economic time series and multiple regression
  analysis.
\newblock \emph{Journal of the American Statistical Association}, 58\penalty0
  (304):\penalty0 993--1010, 1963.
\newblock \doi{10.1080/01621459.1963.10480682}.

\bibitem[Matsukawa et~al.(2025)Matsukawa, Shiraishi, Nishino, Katsuoka, and
  Takeuchi]{matsukawa2025}
Tatsuya Matsukawa, Tomohiro Shiraishi, Shuichi Nishino, Teruyuki Katsuoka, and
  Ichiro Takeuchi.
\newblock Statistical test for auto feature engineering by selective inference.
\newblock In \emph{Proceedings of the 28th International Conference on
  Artificial Intelligence and Statistics}, volume 258 of \emph{Proceedings of
  Machine Learning Research}, pp.\  3214--3222. PMLR, 2025.
\newblock URL \url{https://proceedings.mlr.press/v258/matsukawa25a.html}.

\bibitem[Messenger \& Bortz(2021{\natexlab{a}})Messenger and
  Bortz]{messenger2021}
Daniel~A. Messenger and David~M. Bortz.
\newblock Weak {SINDy}: {Galerkin}-based data-driven model selection.
\newblock \emph{Multiscale Modeling \& Simulation}, 19\penalty0 (3):\penalty0
  1474--1497, 2021{\natexlab{a}}.
\newblock \doi{10.1137/20M1343166}.

\bibitem[Messenger \& Bortz(2021{\natexlab{b}})Messenger and
  Bortz]{messengerpde2021}
Daniel~A. Messenger and David~M. Bortz.
\newblock Weak {SINDy} for partial differential equations.
\newblock \emph{Journal of Computational Physics}, 443:\penalty0 110525,
  2021{\natexlab{b}}.
\newblock \doi{10.1016/j.jcp.2021.110525}.

\bibitem[Messenger \& Bortz(2025)Messenger and Bortz]{consistency2025}
Daniel~A. Messenger and David~M. Bortz.
\newblock Asymptotic consistency of the {WSINDy} algorithm in the limit of
  continuum data.
\newblock \emph{IMA Journal of Numerical Analysis}, 45\penalty0 (6):\penalty0
  3264--3312, 2025.
\newblock \doi{10.1093/imanum/drae086}.

\bibitem[Morris et~al.(2019)Morris, White, and Crowther]{morris2019}
Tim~P. Morris, Ian~R. White, and Michael~J. Crowther.
\newblock Using simulation studies to evaluate statistical methods.
\newblock \emph{Statistics in Medicine}, 38\penalty0 (11):\penalty0 2074--2102,
  2019.
\newblock \doi{10.1002/sim.8086}.

\bibitem[Neufeld et~al.(2024)Neufeld, Dharamshi, Gao, and Witten]{neufeld2024}
Anna Neufeld, Ameer Dharamshi, Lucy~L. Gao, and Daniela Witten.
\newblock Data thinning for convolution-closed distributions.
\newblock \emph{Journal of Machine Learning Research}, 25\penalty0
  (57):\penalty0 1--35, 2024.
\newblock URL \url{https://jmlr.org/papers/v25/23-0446.html}.

\bibitem[Rinaldo et~al.(2019)Rinaldo, Wasserman, and G'Sell]{rinaldo2019}
Alessandro Rinaldo, Larry Wasserman, and Max G'Sell.
\newblock Bootstrapping and sample splitting for high-dimensional,
  assumption-lean inference.
\newblock \emph{The Annals of Statistics}, 47\penalty0 (6):\penalty0
  3438--3469, 2019.
\newblock \doi{10.1214/18-aos1784}.

\bibitem[Rudy et~al.(2017)Rudy, Brunton, Proctor, and Kutz]{rudy2017}
Samuel~H. Rudy, Steven~L. Brunton, Joshua~L. Proctor, and J.~Nathan Kutz.
\newblock Data-driven discovery of partial differential equations.
\newblock \emph{Science Advances}, 3\penalty0 (4):\penalty0 e1602614, 2017.
\newblock \doi{10.1126/sciadv.1602614}.

\bibitem[Schaeffer \& McCalla(2017)Schaeffer and McCalla]{schaeffer2017}
Hayden Schaeffer and Scott~G. McCalla.
\newblock Sparse model selection via integral terms.
\newblock \emph{Physical Review E}, 96\penalty0 (2):\penalty0 023302, 2017.
\newblock \doi{10.1103/PhysRevE.96.023302}.

\bibitem[Shiraishi et~al.(2026)Shiraishi, Miwa, Katsuoka, Duy, Nishino, Taji,
  and Takeuchi]{shiraishi2026statistical}
Tomohiro Shiraishi, Daiki Miwa, Teruyuki Katsuoka, Vo~Nguyen~Le Duy, Shuichi
  Nishino, Kouichi Taji, and Ichiro Takeuchi.
\newblock Statistical test for attention in transformers for images and time
  series.
\newblock \emph{Journal of Machine Learning Research}, 27\penalty0
  (119):\penalty0 1--43, 2026.
\newblock URL \url{https://jmlr.org/papers/v27/25-1631.html}.

\bibitem[Stefanski(1989)]{stefanski1989}
Leonard~A. Stefanski.
\newblock Unbiased estimation of a nonlinear function of a normal mean with
  application to measurement-error models.
\newblock \emph{Communications in Statistics - Theory and Methods}, 18\penalty0
  (12):\penalty0 4335--4358, 1989.
\newblock \doi{10.1080/03610928908830159}.

\bibitem[Taylor \& Tibshirani(2015)Taylor and Tibshirani]{taylor2015}
Jonathan Taylor and Robert~J. Tibshirani.
\newblock Statistical learning and selective inference.
\newblock \emph{Proceedings of the National Academy of Sciences}, 112\penalty0
  (25):\penalty0 7629--7634, 2015.
\newblock \doi{10.1073/pnas.1507583112}.

\bibitem[Tian \& Taylor(2018)Tian and Taylor]{tian2018}
Xiaoying Tian and Jonathan Taylor.
\newblock Selective inference with a randomized response.
\newblock \emph{The Annals of Statistics}, 46\penalty0 (2):\penalty0 679--710,
  2018.
\newblock \doi{10.1214/17-aos1564}.

\bibitem[Tibshirani et~al.(2018)Tibshirani, Rinaldo, Tibshirani, and
  Wasserman]{tibshirani2018}
Ryan~J. Tibshirani, Alessandro Rinaldo, Rob Tibshirani, and Larry Wasserman.
\newblock Uniform asymptotic inference and the bootstrap after model selection.
\newblock \emph{The Annals of Statistics}, 46\penalty0 (3):\penalty0
  1255--1287, 2018.
\newblock \doi{10.1214/17-aos1584}.

\bibitem[van~der Vaart(1998)]{vandervaart1998}
A.~W. van~der Vaart.
\newblock \emph{Asymptotic Statistics}.
\newblock Cambridge University Press, 1998.
\newblock \doi{10.1017/cbo9780511802256}.

\bibitem[Wasserman \& Roeder(2009)Wasserman and Roeder]{wasserman2009}
Larry Wasserman and Kathryn Roeder.
\newblock High-dimensional variable selection.
\newblock \emph{The Annals of Statistics}, 37\penalty0 (5A):\penalty0
  2178--2201, 2009.
\newblock \doi{10.1214/08-aos646}.

\bibitem[Wilson(1927)]{wilson1927}
Edwin~B. Wilson.
\newblock Probable inference, the law of succession, and statistical inference.
\newblock \emph{Journal of the American Statistical Association}, 22\penalty0
  (158):\penalty0 209--212, 1927.
\newblock \doi{10.1080/01621459.1927.10502953}.

\end{thebibliography}
